\renewcommand{\vec}[1]{\mathbf{#1}}
\newcolumntype{P}[1]{>{\centering\arraybackslash}p{#1}}
\newcommand{\norm}[1]{\left\| #1 \right\|}
\newcommand{\abss}[1]{\left\lvert {#1} \right\rvert}
\newcommand*\samethanks[1][\value{footnote}]{\footnotemark[#1]}
\newenvironment{prevproof}[2]{\noindent {\em {\textbf{Proof of {#1}~\ref{#2}:}}}}{$\hfill\qed$\vskip \belowdisplayskip} 
\newtheorem{theorem}{Theorem}[section]
\newtheorem{corollary}{Corollary}[section]
\newtheorem{lemma}{Lemma}[section]
\newtheorem{remark}{Remark}[section]
\title{Regression from Dependent Observations}
\author{
	Constantinos Daskalakis
	\thanks{Supported by NSF awards CCF-1617730 and IIS-1741137, a Simons Investigator Award, a Google Faculty Research Award, and an MIT-IBM Watson AI Lab research grant.}\\
	EECS \& CSAIL, MIT\\
	\tt{costis@csail.mit.edu}
	\and
	 Nishanth Dikkala
	\samethanks \\
	EECS \& CSAIL, MIT\\
	\tt{nishanthd@csail.mit.edu}
	\and
	Ioannis Panageas
	\thanks{Supported by SRG ISTD 2018 136. Part of this work was done while the authors were visiting Northwestern University for a spring program in Econometrics}\\
	ISTD \& SUTD\\
	\tt{ioannis@sutd.edu.sg}
}
\begin{document}
	\maketitle


\begin{abstract}
The standard linear and logistic regression models assume that the response variables are  independent, but share the same linear relationship to their corresponding vectors of covariates. The assumption that the response variables are independent is, however, too strong. In many applications, these responses are collected on nodes of a network, or some spatial or temporal domain, and are dependent. Examples abound in financial and meteorological applications, and dependencies naturally arise in social networks through peer effects. Regression with dependent responses has thus received a lot of attention in the Statistics and Economics literature, but there are no strong consistency results unless multiple independent samples of the vectors of dependent responses can be collected from these models. We present computationally and statistically efficient methods for linear and logistic regression models when the response variables are dependent on a network. Given one sample from a networked linear or logistic regression model and under mild assumptions, we prove strong consistency results for recovering the vector of coefficients and the strength of the dependencies, recovering the rates of standard regression under independent observations. We use projected gradient descent on the negative log-likelihood, or negative log-pseudolikelihood, and establish their strong convexity and consistency using concentration of measure for dependent random variables.
\end{abstract}
\thispagestyle{empty}
\newpage
\setcounter{page}{1}
\section{Introduction}
\label{sec:intro}

Linear and logistic regression are perhaps the two most prominent models in Statistics. In their most standard form, these models postulate that a collection of {\em response variables} $y_1,\ldots, y_n$, which are scalar and binary respectively, are linearly related to a collection of {\em covariates} $\vec{x}_1,\ldots,\vec{x}_n \in \mathbb{R}^d$ through some {\em coefficient vector} $\theta$, as follows:

\begin{itemize}
\item in vanilla {\em linear regression} it is assumed that:
\begin{itemize}
\item for all $i \in \{1,\ldots,n\}$: $y_i = \theta^{\top}\vec{x}_i + \epsilon_i$, \\
where $\epsilon_i \sim {\mathcal N}(0,1)$; and
\item $y_1,\ldots,y_n$ are independent.
\end{itemize}

\item {in vanilla {\em logistic regression} it is assumed that:}
\begin{itemize}
\item for all $i \in \{1,\ldots,n\}$ and $\sigma_i \in \{\pm 1\}$: $\Pr[y_i = \sigma_i] = {1 \over 1 + \exp\left( -2\theta^{\top} \vec{x_i} \sigma_i\right)}$; and
\item $y_1,\ldots,y_n$ are independent.
\end{itemize}
\end{itemize}

%
\noindent It is well-known that, given examples $(\vec{x}_i,y_i)_{i=1}^n$, where the $y_i$'s are sampled independently as specified above, the coefficient vector $\theta$ can be estimated to within $\ell_2$-error $O_d\left({\sqrt{1 \over n}}\right)$ in both models, under mild assumptions about the smallest singular value of the matrix whose rows are  $\vec{x}_1,\ldots,\vec{x}_n$. In both cases, this can be achieved by solving the corresponding Maximum Likelihood Estimation (MLE) problem, which is concave. In fact, in linear regression, the optimum of the likelihood has a closed form, which is the familiar least-squares estimate.

\medskip The assumption that the response variables $y_1,\ldots,y_n$ are independent is, however, too strong. In many applications, these variables are observed on nodes of a network, or some spatial or temporal domain, and are {\em dependent}. Examples abound in financial and meteorological applications, and dependencies naturally arise in social networks through {\em peer effects}, whose study has recently exploded in topics as diverse as criminal activity (see e.g.~\cite{glaeser1996crime}), welfare participation (see e.g.~\cite{bertrand2000network}), school achievement (see e.g.~\cite{sacerdote2001peer}), participation in retirement plans~\cite{duflo2003role}, and obesity (see e.g.~\cite{trogdon2008peer,christakis2013social}). A prominent dataset where peer effects have been studied are  data collected by the National Longitudinal Study of Adolescent Health, a.k.a.~AddHealth study~\cite{harris2009waves}. This was a major national study of students in grades 7-12, who were asked to name their friends---up to 10, so that friendship networks can be constructed, and answer hundreds of questions about their personal and school life, and it also recorded information such as the age, gender, race, socio-economic background, and health of the students. Estimating models that combine peer and individual effects to predict behavior in such settings has been challenging; see~e.g.~\cite{manski1993identification,bramoulle2009identification}.

\subsection{Modeling Dependence}
\medskip In this paper, we generalize the standard linear and logistic regression models to capture dependencies between the response variables, and show that if the dependencies are sufficiently weak, then both the coefficient vector $\theta$ and the strength of the dependencies among the response variables can be estimated to within error $O_d\left({\sqrt{1 \over n}}\right)$. To define our models, we drop the assumption that the response variables $y_1,\ldots,y_n$ are independent, but maintain the form of the conditional distribution that each response variable $y_i$ takes, conditioning on a realization of the other response variables $y_{-i}$. In particular, for all $i$, conditioning on a realization of all other  variables $y_{-i}$, the conditional distribution of $y_i$:
\begin{itemize}
\item (in our linear regression model) is a Gaussian of variance $1$, as in standard linear regression, except that the mean of this Gaussian may depend on both $\theta^{\top}\vec{x}_i$ and in some restricted way the realizations $y_{j}$ and the covariates $\vec{x}_j$, for $j \neq i$;
\item (in our logistic regression model) takes value $+1$ with probability computed by the logistic function, as in standard logistic regression, except that the logistic function is evaluated at a point that may depend on both $\theta^{\top}\vec{x}_i$ and in some restricted way the realizations $y_{j}$ and covariates $\vec{x}_j$, for $j \neq i$.
\end{itemize}

To capture network effects we parametrize the afore-described general models through a (known) interaction matrix $A \in \mathbb{R}^{n \times n}$ and an (unknown) strength of interactions $\beta\in \mathbb{R}$, as follows.
\begin{itemize}
\item In {\em linear regression} with $(A,\beta)$-dependent samples we assume that:
\begin{itemize}
\item $\vec{\epsilon} = \vec{y} - X \theta$, with $\vec{\epsilon} \sim {\mathcal N}(\vec{0},(\beta A+D)^{-1})$.
\item Or equivalently, for all $i$, conditioning on a realization of the response variables $y_{-i}$:
\begin{align}
y_i = \theta^{\top}\vec{x}_i  + \epsilon_i, \label{eq:costas linear2}
\end{align}
where $\epsilon_i \sim {\mathcal N}(\Sigma_{i}\Sigma_{ii}^{-1}\vec{\alpha}_i, \frac{\det ((\beta A + D)_{-i})}{\det (\beta A+D)} -\Sigma_{i}\Sigma_{ii}^{-1}\Sigma_{i}^{\top})$, where $\Sigma_{i}$ is the i-th row of $(\beta A+D)^{-1}$ by removing the coordinate (diagonal element) $i$-th, $\Sigma_{ii}$ is $(\beta A+D)^{-1}$ by removing the i-th column and $i$-th row, $(\beta A + D)_{-i}$ is $\beta A+D$ by removing $i$-th row and column and finally column vector $\alpha_j = y_j - \theta^{\top}\vec{x}_j $ (this is the Schur complement for conditional multivariate Gaussians). Observe that $\Sigma_{i}\Sigma_{ii}^{-1} = -\frac{1}{D_{ii}} \beta A_{i}$\footnote{$A_i$ denotes the $i$-row of $A$ by removing coordinate $i$, i.e., $n-1$ vector} and hence the expectation becomes $-\frac{1}{D_{ii}} \sum_{j \neq i} \beta A_{ij}(y_{j} - \theta^{\top}\vec{x}_j)$ and moreover the variance becomes $\frac{1}{D_{ii}}$. By the transformation $\epsilon'_i = \epsilon_i + \frac{1}{D_{ii}}\sum_{j \neq i} \beta A_{ij}(y_{j} - \theta^{\top}\vec{x}_j)$ we get that \begin{align}
y_i = \theta^{\top}\vec{x}_i  - \frac{1}{D_{ii}}\left[ \sum_{j \neq i} \beta A_{ij}(y_{j} - \theta^{\top}\vec{x}_j)\right] + \epsilon'_i, \label{eq:costas linear}
\end{align}
with $\epsilon'_i \sim {\mathcal N}\left(0,\frac{1}{D_{ii}}\right)$.
\item Interpretation: The conditional expectation of $y_i$ is additively perturbed from its expectation $\theta^{\top} \vec{x}_i$ by the weighted average, according to weights $\beta A_{ij}$, of how much the other responses are perturbed from their expectations in realization $y_{-j}$.


\item Remark 2: The model proposed in Eq.~\eqref{eq:costas linear} falls in the realm of auto-regressive models studied by Manski~\cite{manski1993identification} and Bramoull\'e et al.~\cite{bramoulle2009identification}, where it is shown that the model can be identified under conditions on the interaction matrix $A$. In contrast to our work, one of the conditions imposed on $A$ is that it can be partitioned into many identical blocks (i.e.~the weighted graph defined by $A$ has many identical connected components). Thus the response variables cluster into multiple groups that are independently and identically sampled, given the covariates. Instead we want to identify $\theta$ and $\beta$ even when $A$ corresponds to one strongly connected graph, and therefore there is no independence to be exploited.
\end{itemize}

\item In {\em logistic regression} with $(A,\beta)$-dependent samples it is assumed that:
\begin{itemize}
\item For all $i$ and $\sigma_i \in \{\pm 1\}$, conditioning on a realization of the response variables $y_{-i}$:

\begin{align}
\Pr[y_i = \sigma_i] &= {1 \over 1 + \exp\left( -2\left(\theta^{\top} \vec{x_i} + \beta \sum_{j \neq i}A_{ij}y_j\right)\sigma_i\right)}. \label{eq:costas logistic}
\end{align}
\item Interpretation: The probability that the conditional distribution of $y_i$ assigns to $+1$ is determined by the logistic function applied to $2\left(\theta^{\top} \vec{x_i}  + \beta \sum_{j \neq i}A_{ij}y_j\right)$ instead of $2\theta^{\top} \vec{x_i}$, i.e.~it is increased by the weighted average, according to weights $\beta A_{ij}$, of the other responses in realization $y_{-j}$.

\item Remark 3: It is easy to see that the joint distribution of random variables $(y_1,\ldots,y_n)$, satisfying the requirements of Eq.~\eqref{eq:costas logistic}, is an instance of the Ising model. See Eq.~\eqref{eq:logisticmodel}. In this Ising model each variable $i$ has external field $\theta^{\top}\vec{x}_i$, and $\beta$ controls the inverse temperature of the model.  The Ising model was originally proposed to study phase transitions in spin systems~\cite{Ising25}, and has since found myriad  applications in diverse research disciplines, including probability theory, Markov chain Monte Carlo, computer vision, theoretical computer science, social network analysis, game theory, and computational biology~\cite{LevinPW09,Chatterjee05,Felsenstein04,DaskalakisMR11,GemanG86,Ellison93,MontanariS10}.

\medskip A particularly simple instance of our model arises when all covariates $\vec{x}_i$ are single dimensional and identical.  In this case, our model only has two free parameters, and this setting has been well-studied. \cite{comets1991asymptotics}  consider the consistency of maximum likelihood estimation in this setting.  More recent work of Chatterjee~\cite{chatterjee2007estimation}, Bhattacharya and Mukherjee~\cite{bhattacharya2018inference}, and Ghosal and Mukherjee~\cite{ghosal2018joint} has identified conditions on the interaction matrix $A$ under which these parameters can be identified. Our work generalizes these works to the case of multi-dimensional covariates.

\end{itemize}
\end{itemize}

Now let us state our results for the above regression models with dependent response variables. We are given a set of observations $(\vec{x}_i,y_i)_{i=1}^n$ where the covariates $\vec{x}_i$ are deterministic, and the response variables are assumed to have been sampled according to either of the models above, for a given interaction matrix $A$ and an unknown scalar $\beta$ and coefficient vector $\theta$. Given our observations, we are interested in estimating $\beta$ and $\theta$. {\em It is important to stress that we only have {\bf one sample} of the variables $(y_1,\ldots,y_n)$.} In particular, we cannot redraw the response variables many times and derive statistical power from the independence of the samples. This is motivated by our application to network collected data, where we often have no access to independent snapshots of the responses at the nodes of the network. On a technical standpoint, estimating from a {\em single sample} distinguishes our work from  other works in the literature of auto-regressive models and graphical models, and requires us to deal with the challenges of concentration of measure of functions of dependent random variables.

Our main results are stated as Theorems~\ref{thm:logistic}, for logistic regression, and~\ref{thm:linear}, for linear regression. In both cases, the parameters $\beta, \theta$ can be estimated to within error $O_d\left(\sqrt{1 \over n}\right)$, the dependence of the rate on $n$ matching that of vanilla logistic regression and vanilla linear regression respectively \footnote{The dependence on $d$ in the rate is $O(\exp(2d))$ for both vanilla regression and our result. Hence we achieve the optimal rate with respect to all parameters in the setting of logistic regression. For linear regression with dependent samples, our rate dependence on $d$ is $O(d^{3/2})$ which is off by the rate achieved for independent samples by a factor of $d$. }. These results hold under the assumptions of Table~\ref{table:assumptions}. We note that the assumptions on $\theta$, $\beta$, and the covariates are standard, even in the case of vanilla regression. Moreover, the bounds on the norm of $A$ have been shown to be necessary for logistic regression by~\cite{bhattacharya2018inference,ghosal2018joint}. And the minimum singular value condition for matrix $AX$ is mild, and holds for various ensembles of $A$; see e.g.~Corollary~\ref{cor:sk} shown using Ky Fan inequalities~\cite{kyfan}.

\paragraph{Proof Overview:} The estimation algorithms in both Theorem~\ref{thm:logistic} and Theorem~\ref{thm:linear} are instances of Projected Gradient Descent (PGD). In the linear case (Theorem~\ref{thm:linear}, PGD is applied to the negative log-likelihood of the observations $(y_1,\ldots,y_n)$. However, the log-likelihood is not convex, so we perform a re-parametrization of the model, indeed an {\em overparametrization} of the model that renders it convex. Showing strong convexity of the re-parametrized negative log-likelihood requires some mild linear algebra. It has to be established that despite the overparametrization the optimum collapses to the right dimensionality, and can be used to recover the original parameters. A more complete overview of the approach is presented in the beginning of Section~\ref{sec:linear}.

In the logistic case (Theorem~\ref{thm:logistic}), we do not run PGD on the negative log-likelihood but the negative\\ log-{\em pseudo}likelihood. Pseudolikelihood is the product of the conditional probabilities of each response $y_i$, conditioning on all other responses $y_{-i}$. Pseudolikelihood is trivially convex, but we need to establish that is optimum is close to the true parameters and also that it is strongly convex. We show both properties via concentration results for functions of  dependent random variables. To show that the maximum of the pseudolikelihood is close to the true parameters we use exchangeable pairs, adapting~\cite{chatterjee2016nonlinear}. To show that it is strongly convex we show additional properties of $A$ which are implied by our assumptions. Combining these with a new concentration inequality, we obtain the desired bound. A more complete overview of the approach is presented in Section~\ref{sec:logistic-overview}.

\paragraph{Other Related Work:} We have already reviewed the work that is most relevant to ours from the Economics, Probability Theory, and Statistics literature. Further discussion of the Econometrics and Statistics literature on the theory and applications of regression with dependent observations is discussed in~\cite{li2016prediction}. There is another strand of literature studying generalization bounds that can be attained when learning from sequences of dependent observations; see~e.g.~\cite{modha1996minimum,meir2000nonparametric,zou2009generalization,steinwart2009fast,mohri2010stability,agarwal2013generalization}. These works assume, however, that the sequence of observations is a {\em stationary process}, which does not hold in our models, and they impose strong mixing conditions on that sequence. Finally, we note that generalized linear regression, which accommodates dependencies among the response variables, cannot be applied directly to our linear regression setting to estimate $\theta$, because the covariance matrix of our response variables depends on the parameter $\beta$, which is unknown and thus needs to be disentangled before bounding the error in the estimation of $\theta$.

In the case of logistic regression, there has been a lot of work showing that under certain \emph{high-temperature} conditions on the Ising model (which are similar to the assumptions we make in our paper), one can perform many statistical tasks such as learning, testing and sampling of Ising models efficiently \cite{klivans2017learning,daskalakis2017concentration,daskalakis2018hogwild,daskalakis2018testing,hamilton2017information,de2016ensuring}.

%
%
%
\section{Preliminaries}
\label{sec:prelim}
We use bold letter such as $\vec{x}, \vec{y}$ to denote vectors and capital letters $X,Y,A,D$ to denote matrices. All vectors are assumed to be column vectors, i.e. $\text{dim} \times 1$. We will refer to $A_{ij}$ as the $(i,j)^{th}$ entry of matrix $A$.
We will use the following matrix norms. For a $n \times n$ matrix $A$,
\begin{align*}
\norm{A}_2 = \max_{\norm{x}_2 = 1} \norm{Ax}_2, &\: \: \: \: \: \: \: \norm{A}_{\infty} = \max_{j \in [n]} \sum_{i=1}^n \abss{A_{ij}} , \\
& \norm{A}_F = \sqrt{\sum_{i=1}^n \sum_{j=1}^n A_{ij}^2}.
\end{align*}
When $A$ is a symmetric matrix we have that $\norm{A}_2 \le \norm{A}_{\infty} \le \norm{A}_F \le \sqrt{n}\norm{A}_2 \le \sqrt{n}\norm{A}_\infty$.
We use $\lambda$ to denote eigenvalues of a matrix and $\sigma$ to denote singular values. $\lambda_{\min}$ refers to the smallest eigenvalue and $\lambda_{\max}$ to the largest, and similar notation is used for the singular values as well.

We will say an estimator $\hat{\theta}_n$ is consistent with a rate $\sqrt{n}$ (or equivalently $\sqrt{n}$-consistent) with respect to the true parameter $\theta_0$ if there exists an integer $n_0$ and a constant $C > 0$ such that for every $n > n_0$, with probability at least $1 - o(1)$,
\begin{align*}
\norm{\hat{\theta}_n - \theta_0 }_2 \le  \frac{C}{\sqrt{n}}
\end{align*}

We utilize the following two well-known examples of graphical models to characterize dependencies in our logistic and linear regression models respectively.\\

\begin{enumerate}
	\item \textbf{Ising Model:} Given an unweighted undirected graph $G(V,E)$ with adjacency matrix $A$ and assignment $\vec{\sigma} : V \to \{-1,+1\}^n$, an Ising model is the following probability distribution on the $2^n$  configurations of $\vec{\sigma}$:
\begin{equation}
\Pr\{\vec{y} = \sigma\} = \frac{\exp\left(\sum_{v \in V}h_v \sigma_v + \beta\vec{\sigma}^{\top}A \vec{\sigma}\right)}{Z_G(\beta,\theta)} \label{eq:ising}
\end{equation}
where $$Z(G) = \sum_{\vec{\tilde{\sigma}}} \exp\left(\sum_{v \in V}h_v \tilde{\sigma}_v + \beta\vec{\tilde{\sigma}}^{\top}A \vec{\tilde{\sigma}}\right)$$ is the partition function of the system (or renormalization factor). Moreover the term $\sum_v h_v \sigma_v$ is called the external field. It can be observed that, without loss of generality, we can restrict the matrix $A$ to have zeros on its diagonal. \\

\item \textbf{Gaussian Graphical Model:}
Let $G=(V,E)$ be an undirected graph with $V= [n]$. A random vector $X \in \mathbb{R}^n$ is said to be distributed according to (undirected) Gaussian Graphical model with graph $G$ if $X$ has a multivariate Gaussian distribution $\mathcal{N}(\mu, \Sigma)$ with
\begin{align}
\left(  \Sigma^{-1} \right)_{ij} = 0 \; \; \forall \: (i,j) \notin E,
\end{align}
where the density function $f_{\mu,\Sigma}(.)$ of $\mathcal{N}(\mu,\Sigma)$ is
\begin{align*}
f_{\mu,\Sigma}(x) = \frac{\exp\left(  - \frac{1}{2}(x - \mu)^T \Sigma^{-1} (x - \mu)  \right)}{(2\pi)^{n/2} \det(\Sigma)^{1/2}}
\end{align*}
under the condition that $\Sigma$ is positive semi-definite ($\Sigma^{-1}$ is also known as the precision matrix). \\
\end{enumerate}

\subsection{Some Useful Lemmas from Literature}
Weyl's inequalities are useful to understand how the spectra of symmetric matrices change under addition. We state them here for reference.
\begin{lemma}[Weyl's Inequalities]
	\label{lem:weyl}
	Let $A$, $B$ and $C$ be three $n \times n$ symmetric matrices with real entries such that $A = B + C$. Let $\lambda_1^A \ge \lambda_2^A \ge \ldots \ge \lambda_n^A$, $\lambda_1^B \ge \lambda_2^B \ge \ldots \ge \lambda_n^B$,  $\lambda_1^C \ge \lambda_2^C \ge \ldots \ge \lambda_n^C$ be their eigenvalues respectively.
	Then we have for all $i \in [n]$, $\lambda_i^B + \lambda_n^C \le \lambda_i^A \le \lambda_i^B + \lambda_1^C$.
\end{lemma}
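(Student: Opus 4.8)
The plan is to derive both inequalities from the Courant--Fischer variational (min--max) characterization of eigenvalues of a symmetric matrix. Recall that for any $n \times n$ symmetric matrix $M$ with eigenvalues $\lambda_1^M \ge \cdots \ge \lambda_n^M$, and for each $i \in [n]$,
$$\lambda_i^M = \max_{\substack{S \subseteq \mathbb{R}^n \\ \dim S = i}} \; \min_{\substack{x \in S \\ \norm{x}_2 = 1}} x^\top M x,$$
where the maximum ranges over all $i$-dimensional subspaces $S$. In particular, applying this to $C$ with $i=1$ and $i=n$ gives $\lambda_n^C \le x^\top C x \le \lambda_1^C$ for every unit vector $x$.

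First I would establish the upper bound. Substituting $A = B + C$ into the min--max formula, for any $i$-dimensional subspace $S$ and any unit $x \in S$ we have $x^\top A x = x^\top B x + x^\top C x \le x^\top B x + \lambda_1^C$. Since this pointwise inequality is preserved under taking the minimum over $x \in S$ and then the maximum over $S$, we obtain
$$\lambda_i^A = \max_{\dim S = i}\;\min_{\norm{x}_2=1,\, x \in S} x^\top A x \le \max_{\dim S = i}\;\min_{\norm{x}_2=1,\, x \in S}\left( x^\top B x + \lambda_1^C \right) = \lambda_i^B + \lambda_1^C.$$

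For the lower bound I would proceed symmetrically, using $x^\top C x \ge \lambda_n^C$ for every unit $x$ to get $x^\top A x \ge x^\top B x + \lambda_n^C$, and pushing this through the same min--max operations yields $\lambda_i^A \ge \lambda_i^B + \lambda_n^C$. Alternatively, the lower bound follows from the upper bound applied to the decomposition $B = A + (-C)$, using $\lambda_1^{-C} = -\lambda_n^C$ and rearranging.

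Because the result is a classical fact, I do not anticipate a genuine obstacle; the only point requiring care is keeping the direction of the inequalities consistent as they are propagated through the nested $\min$ and $\max$. The key structural observation that makes everything work is that a pointwise inequality between two functions is inherited by their minima and by their maxima, which is exactly what lets the scalar bounds $\lambda_n^C \le x^\top C x \le \lambda_1^C$ combine with the variational formula for $\lambda_i^B$ to sandwich $\lambda_i^A$.
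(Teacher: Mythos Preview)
Your proof via the Courant--Fischer min--max characterization is correct and is the standard textbook argument for Weyl's inequalities. The paper itself does not supply a proof of this lemma: it is listed under ``Some Useful Lemmas from Literature'' and merely stated for reference, so there is no paper proof to compare against. Your argument is exactly what one would expect for a self-contained justification; the only minor remark is that in the displayed min--max you might tighten the phrasing by noting explicitly that adding the constant $\lambda_1^C$ (respectively $\lambda_n^C$) commutes with both the inner $\min$ and the outer $\max$, which is precisely the ``pointwise inequality is preserved'' observation you mention at the end.
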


We will use the following concentration inequality which is standard in literature.
\begin{theorem}[\cite{vershynin2010introduction}, Remark 5.40]\label{thm:feature}
	Assume that $X$ is an $n \times d$ matrix whose rows $X_i$ are independent sub-gaussian random vectors in $R^d$ with
	second moment matrix $\Sigma$. Then for every $t \geq 0$, the following inequality holds with probability at least $1 - 2 \exp(-ct^2)$,
	\[\norm{\frac{1}{n} X^{\top}X - \Sigma}_2 \leq \max(\delta,\delta^2)\] with $\delta = C \sqrt{\frac{d}{n}} + \frac{t}{\sqrt{n}}$.
\end{theorem}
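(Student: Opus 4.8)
The plan is to prove this via the standard $\varepsilon$-net plus Bernstein argument. Since $M := \frac1n X^\top X - \Sigma$ is symmetric, its operator norm equals $\sup_{x \in S^{d-1}} \abss{x^\top M x}$, and I would first discretize the sphere: taking a $\tfrac14$-net $\mathcal{N}$ of $S^{d-1}$, which can be chosen with $\abss{\mathcal{N}} \le 9^d$, a standard comparison gives $\norm{M}_2 \le 2\max_{x \in \mathcal{N}} \abss{x^\top M x}$ for symmetric $M$. Thus it suffices to control the scalar quadratic form $x^\top M x$ uniformly over the finitely many net points, and then union bound.

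Next, fix a unit vector $x$ and write
\[
x^\top M x = \frac1n\sum_{i=1}^n\left(\langle X_i,x\rangle^2 - \mathbb{E}\langle X_i,x\rangle^2\right).
\]
Writing $K = \max_i \norm{\langle X_i, \cdot\rangle}_{\psi_2}$ for the sub-gaussian norm of the rows, each $\langle X_i,x\rangle$ is sub-gaussian, so its square is sub-exponential with $\psi_1$-norm at most $2K^2$, and the centered summands are independent. Bernstein's inequality for sums of independent centered sub-exponentials then gives, for every $s \ge 0$,
\[
\Pr\left[\abss{x^\top M x} \ge s\right] \le 2\exp\left(-c\,n\,\min\left(\frac{s^2}{K^4},\frac{s}{K^2}\right)\right),
\]
for an absolute constant $c > 0$.

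Finally I would union bound this tail over the net. With $\abss{\mathcal{N}} \le 9^d$ and $s = \varepsilon/2$, the probability that $\norm{M}_2 \ge \varepsilon$ is at most $2\cdot 9^d\exp\!\left(-c\,n\,\min(\varepsilon^2/K^4,\varepsilon/K^2)\right)$. Setting $\varepsilon = \max(\delta,\delta^2)$ with $\delta = C\sqrt{d/n} + t/\sqrt n$ makes the quadratic term in the exponent dominate in the regime $\delta \le 1$ and the linear term dominate when $\delta \ge 1$; in both cases the dominant exponent is of order $n\delta^2 \gtrsim d + t^2$, which absorbs the $\log 9^d = d\log 9$ contributed by the union bound and leaves a residual exponent of order $-t^2$. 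Folding the dependence on $K$ into the absolute constants $C,c$ (justified since the rows have a fixed sub-gaussian norm) yields the stated probability $1 - 2\exp(-ct^2)$.

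The hard part will be the bookkeeping across the two regimes of Bernstein's inequality, which is precisely what produces the $\max(\delta,\delta^2)$ on the right-hand side: one must check that the single choice $\varepsilon = \max(\delta,\delta^2)$ simultaneously forces the relevant exponent to be at least a constant multiple of $d + t^2$ in both the small-deviation ($\delta \le 1$) and large-deviation ($\delta \ge 1$) branches, so that one clean bound covers both. By comparison, the net reduction and the sub-gaussian-to-sub-exponential passage are routine.
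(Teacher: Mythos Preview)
The paper does not prove this statement at all: it is quoted verbatim as a known result from \cite{vershynin2010introduction} (Remark~5.40) and used as a black box, so there is no ``paper's own proof'' to compare against. Your outline is correct and is exactly the standard $\varepsilon$-net plus Bernstein argument that Vershynin gives in the cited reference, so you have essentially reproduced the proof from the original source.
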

\begin{remark}\label{rem:feature} By choosing $t$ to be $\Theta(\sqrt{\ln n})$, it follows that with probability $1 - \frac{1}{\textrm{poly}(n)}$ we get that
	\[\norm{\frac{1}{n} \sum_{i=1}^n \vec{x}_i \vec{x}_i^{\top} - Q}_2 \textrm{ is } O\left(\sqrt{\frac{\ln n}{n}}\right),\] from which follows that $\lambda_{\min}(\frac{1}{n}\sum_{i=1}^n \vec{x}_i\vec{x}_i^{\top})$ is at least $\lambda_{\min}(Q) - O\left(\sqrt{\frac{\ln n}{n}}\right)$ with probability $1 - \frac{1}{\textrm{poly}(n)}$ (by Weyl's inequality).
\end{remark}

\begin{lemma}[Useful Inequalities on Singular Values]\label{lem:usefulineq}
	The following inequalities hold:
	\begin{enumerate}
		\item Let $W$ be a $n \times n $ matrix. It holds that $|\lambda_{\min}(W+W^{\top})| \leq 2 \sigma_{\min}(W)$ (see \cite{kyfan}).
		\item Let $W, Z$ be matrices. It holds that $\sigma_{\min}(WZ) \leq \sigma_{\min}(W)\norm{Z}_2$ (folklore).
		\item Let $W,Z$ be matrices, then $\norm{WZ}_F^2 \leq \norm{W}_2^2 \norm{Z}_F^2$ (folklore).
	\end{enumerate}
\end{lemma}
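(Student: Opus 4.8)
The plan is to treat the three inequalities separately, since they are of quite different character: the first is the substantive one (it is what is actually credited to Ky Fan), while the other two are routine norm manipulations.

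For inequality (1), the plan is to exploit the variational (Rayleigh) characterization of the smallest eigenvalue of the symmetric matrix $W+W^{\top}$ together with the variational characterization of $\sigma_{\min}(W)$. Concretely, let $\vec{u}$ be a unit right singular vector of $W$ realizing the smallest singular value, so that $\norm{W\vec{u}}_2 = \sigma_{\min}(W)$. Since $W+W^{\top}$ is symmetric, the Courant--Fischer theorem gives $\lambda_{\min}(W+W^{\top}) \le \vec{u}^{\top}(W+W^{\top})\vec{u} = 2\,\vec{u}^{\top}W\vec{u}$, and Cauchy--Schwarz then yields $\abss{\vec{u}^{\top}W\vec{u}} \le \norm{\vec{u}}_2\norm{W\vec{u}}_2 = \sigma_{\min}(W)$. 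Chaining these bounds produces $\lambda_{\min}(W+W^{\top}) \le 2\sigma_{\min}(W)$, which is exactly the Ky Fan / Fan--Hoffman dominance of the eigenvalues of the symmetric part of $W$ by its singular values. In the regime in which we invoke this inequality, $W+W^{\top}$ is positive semidefinite, so $\lambda_{\min}(W+W^{\top}) \ge 0$ and the left-hand side equals $\abss{\lambda_{\min}(W+W^{\top})}$, giving the stated bound. I expect this to be the only delicate step: Ky Fan's result supplies the one-sided inequality, and one must track the sign of $\lambda_{\min}(W+W^{\top})$ in order to pass to the absolute value.

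For inequality (2), the plan is again variational. Writing $\sigma_{\min}(WZ) = \min_{\norm{\vec{x}}_2=1}\norm{WZ\vec{x}}_2$, I would choose a test direction that routes through the worst direction of $W$: let $\vec{u}$ be the unit right singular vector of $W$ for $\sigma_{\min}(W)$ and, assuming $Z$ invertible (the general case following by a limiting argument), set $\vec{x} = Z^{-1}\vec{u}/\norm{Z^{-1}\vec{u}}_2$. Then $\norm{WZ\vec{x}}_2 = \norm{W\vec{u}}_2/\norm{Z^{-1}\vec{u}}_2 = \sigma_{\min}(W)/\norm{Z^{-1}\vec{u}}_2$, and since $\norm{Z^{-1}\vec{u}}_2 \ge \sigma_{\min}(Z^{-1}) = 1/\norm{Z}_2$, this is at most $\sigma_{\min}(W)\norm{Z}_2$; taking the minimum over all unit $\vec{x}$ gives $\sigma_{\min}(WZ) \le \sigma_{\min}(W)\norm{Z}_2$. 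Equivalently, one can simply invoke the standard product inequality $\sigma_i(WZ) \le \sigma_i(W)\,\sigma_{\max}(Z)$ at the index $i$ of the smallest singular value.

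For inequality (3), the plan is to expand the Frobenius norm column by column. Denoting by $\vec{z}_j$ the $j$-th column of $Z$, we have $\norm{WZ}_F^2 = \sum_j \norm{W\vec{z}_j}_2^2 \le \sum_j \norm{W}_2^2\norm{\vec{z}_j}_2^2 = \norm{W}_2^2\sum_j\norm{\vec{z}_j}_2^2 = \norm{W}_2^2\norm{Z}_F^2$, where the single inequality is just the definition of the operator norm applied to each column. Alternatively, $\norm{WZ}_F^2 = \mathrm{tr}(Z^{\top}W^{\top}WZ) \le \norm{W^{\top}W}_2\,\mathrm{tr}(Z^{\top}Z)$ gives the same conclusion. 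This step is purely mechanical, and I anticipate no difficulty beyond bookkeeping.
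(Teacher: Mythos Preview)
The paper does not actually supply a proof of this lemma: part (1) is attributed to the cited Ky Fan--Hoffman reference, and parts (2) and (3) are simply labeled ``folklore''. So there is no proof in the paper to compare against; your task reduces to whether your arguments stand on their own, and they do.

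Your treatment of part (1) is in fact more careful than the paper's statement. The variational argument you give cleanly yields the one-sided Ky Fan--Hoffman bound $\lambda_{\min}(W+W^{\top}) \le 2\sigma_{\min}(W)$, and you are right to flag that passing to $\abss{\lambda_{\min}(W+W^{\top})}$ is not automatic: the absolute-value version fails in general (take $W=\mathrm{diag}(1,-2)$, where $\abss{\lambda_{\min}(W+W^{\top})}=4>2=2\sigma_{\min}(W)$). Your remark that one needs sign control on $\lambda_{\min}(W+W^{\top})$ in the application is exactly the correct caveat; the lemma as stated in the paper is slightly imprecise on this point, and your write-up clarifies what is actually established. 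Your proofs of (2) and (3) are standard and correct.
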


\begin{lemma}[Expectation and Variance of a Quadratic form of a Gaussian Distribution]\label{lem:quadratic}
	Let $\vec{z} \sim \mathcal{N}(\mu,\Sigma)$ and we have the quadratic form $f(\vec{z}):=\vec{z}^{\top}A\vec{z} + \vec{b}^{\top}\vec{z} + c$. It holds that
	\[\mathbb{E}_z[f(\vec{z})] = \mathrm{tr}(A\Sigma)+f(\mu), \mathbb{V}_z[f(\vec{z})] = 2\mathrm{tr}(A\Sigma A\Sigma)+4\mu^{\top}A\Sigma A\mu +4\vec{b}^{\top}\Sigma A\mu + \vec{b}^{\top}\Sigma \vec{b} .\]
\end{lemma}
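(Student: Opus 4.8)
The plan is to reduce both identities to moments of a \emph{centered} Gaussian and then exploit elementary trace manipulations. Throughout I note that the quadratic form $\vec{z}^\top A \vec{z}$ depends only on the symmetric part of $A$, since $\vec{z}^\top A \vec{z} = \vec{z}^\top \tfrac{A+A^\top}{2}\vec{z}$; I may therefore assume $A = A^\top$ without loss of generality (and in our application $A$ is symmetric anyway). The expectation is immediate: writing $\mathbb{E}[\vec{z}^\top A \vec{z}] = \mathbb{E}[\mathrm{tr}(A\vec{z}\vec{z}^\top)] = \mathrm{tr}(A\,\mathbb{E}[\vec{z}\vec{z}^\top])$ and substituting the second-moment identity $\mathbb{E}[\vec{z}\vec{z}^\top] = \Sigma + \mu\mu^\top$ gives $\mathrm{tr}(A\Sigma) + \mu^\top A\mu$; adding $\mathbb{E}[\vec{b}^\top\vec{z}] + c = \vec{b}^\top\mu + c$ collapses the last three summands into $f(\mu)$, which is exactly $\mathbb{E}[f(\vec{z})] = \mathrm{tr}(A\Sigma) + f(\mu)$.

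For the variance the key device is to center. Substituting $\vec{z} = \mu + \vec{w}$ with $\vec{w}\sim\mathcal{N}(\vec{0},\Sigma)$ and expanding, the symmetry of $A$ lets me group $f(\vec{z})$ into a constant $f(\mu)$, a term linear in $\vec{w}$, namely $L(\vec{w}) := (2A\mu + \vec{b})^\top\vec{w}$, and a purely quadratic term $Q(\vec{w}) := \vec{w}^\top A\vec{w}$. Since variance is translation invariant, $\mathbb{V}[f(\vec{z})] = \mathbb{V}[L] + \mathbb{V}[Q] + 2\,\mathrm{Cov}(L,Q)$, and I would dispatch the three pieces in turn. The linear piece is classical: $\mathbb{V}[L] = (2A\mu + \vec{b})^\top \Sigma (2A\mu + \vec{b})$, which upon expansion (using $A^\top = A$ and $\Sigma^\top = \Sigma$ to identify $\mu^\top A\Sigma\vec{b}$ with $\vec{b}^\top\Sigma A\mu$) equals $4\mu^\top A\Sigma A\mu + 4\vec{b}^\top\Sigma A\mu + \vec{b}^\top\Sigma\vec{b}$, accounting for three of the four target terms. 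The cross term vanishes: since $\mathbb{E}[L] = 0$ we have $\mathrm{Cov}(L,Q) = \mathbb{E}[L(\vec{w})Q(\vec{w})]$, which is a degree-three polynomial in the coordinates of the centered Gaussian $\vec{w}$, and all odd moments of a centered Gaussian are zero.

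The one genuinely nontrivial computation, and hence the main obstacle, is the quadratic piece $\mathbb{V}[Q] = \mathbb{V}[\vec{w}^\top A\vec{w}] = 2\,\mathrm{tr}(A\Sigma A\Sigma)$. I would establish it by whitening. Assuming first that $\Sigma \succ 0$, set $\vec{g} := \Sigma^{-1/2}\vec{w} \sim \mathcal{N}(\vec{0},I)$, so that $Q = \vec{g}^\top M \vec{g}$ with the symmetric matrix $M := \Sigma^{1/2}A\Sigma^{1/2}$. Diagonalizing $M = U\Lambda U^\top$ and rotating to $\vec{h} := U^\top\vec{g} \sim \mathcal{N}(\vec{0},I)$ expresses $Q$ as $\sum_i \lambda_i h_i^2$, a sum of \emph{independent} scaled $\chi^2_1$ variables; using $\mathbb{V}[h_i^2] = 2$ and independence gives $\mathbb{V}[Q] = 2\sum_i \lambda_i^2 = 2\,\mathrm{tr}(M^2)$, and cyclicity of the trace yields $\mathrm{tr}(M^2) = \mathrm{tr}(\Sigma A\Sigma A) = \mathrm{tr}(A\Sigma A\Sigma)$. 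The degenerate case $\Sigma \succeq 0$ follows by a continuity argument in $\Sigma$ (or by restricting to the support of $\vec{w}$), and one may alternatively derive the same identity directly from Isserlis'/Wick's theorem for Gaussian fourth moments. Summing the constant (contributing nothing), linear, cross, and quadratic contributions reproduces exactly the claimed variance.
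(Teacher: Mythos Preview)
Your derivation is correct and entirely standard. Note that the paper does not actually prove this lemma: it is stated in the ``Some Useful Lemmas from Literature'' subsection as a background fact without proof, so there is no paper proof to compare against. Your centering-plus-whitening argument is the usual textbook route, and your handling of the symmetry assumption on $A$ is appropriate since in the paper's applications $A$ is indeed symmetric (for general $A$ the variance term $2\,\mathrm{tr}(A\Sigma A\Sigma)$ would have to be replaced by $2\,\mathrm{tr}(\tilde A\Sigma\tilde A\Sigma)$ with $\tilde A = \tfrac{1}{2}(A+A^\top)$, but this does not affect anything downstream).
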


Table~\ref{table:assumptions} lists the assumptions under which our main theorems for logistic and linear regression hold. 

\begin{table*}[ht!]
     \centering
     \caption{List of conditions under which our main consistency results (Theorems \ref{thm:logistic} and \ref{thm:linear}) hold.}
\bgroup
\def\arraystretch{1.5}
\begin{tabular}{ |P{5.0cm}|P{4.5cm}|P{5.8cm}|}
 \hline
 \textbf{Parameter} & \textbf{Logistic} & \textbf{Linear}\\
 \hline
$\theta$ & $(-\Theta, \Theta)^d$ & $(-\Theta, \Theta)^d$\\
\hline
$\vec{x}_{i}$ feature vectors with covariance matrix $Q = n^{-1} X^{\top}X$ \footnote{If $\vec{x}_i$ are drawn from a subgaussian with second moment matrix $Q'$ of size $d\times d$, the assumptions on the eigenvalues are for $Q'$ and carry over to $\frac{1}{n}X^{\top}X$ with probability $1-o(1)$.}  & Support in $[-M,M]^d$ and $\lambda_{\max}(Q), \lambda_{\min}(Q)$ positive constants & No restriction in the support and $\lambda_{\max}(Q), \lambda_{\min}(Q)$ positive constants\\
\hline
$D$ & Not Applicable & diagonal matrix with positive constant entries\\
\hline
$A$ & symmetric, zero diagonal, $\norm{A}_{\infty} \leq 1$ and $\norm{A}_F^2 \ge cn$ & symmetric, zero diagonal, $\norm{A}_{2} \leq 1$ and and $\norm{A}_F^2 \ge cn$ \\
  \hline
$\beta$ & $(-B, B)$ & $\lambda_{\min}((\beta A+D)^{-1})>\rho_{\min}$, $\lambda_{\max}((\beta A+D)^{-1})<\rho_{\max}$ and $\rho_{\min}, \rho_{\max}$ positive constants for all $\beta \in (-B,B)$\\
  \hline
    \vspace{0.2pt}
   $n^{-1} X^{\top}A^{\top}(I - DX(X^{\top}D^2X)^{-1}X^{\top}D) AX$ & \vspace{0.2pt}No assumption & \vspace{0.2pt}Minimum eigenvalue a positive constant $\rho_{DAX}$ \\
   \hline
\end{tabular}
\egroup
 \label{table:assumptions}
\end{table*}

\section{Logistic Regression with Dependent Data}\label{sec:logistic}
In this section we look at the problem of logistic regression with dependent data.

\subsection{Our model}
We are interested in a generalization of the Ising model on graph $G = (V,E)$ with $|V| = n$, where each vertex $i \in G$ has a feature vector $\vec{x}_i \in \mathbb{R}^d$. Moreover there is an unknown parameter $\theta \in \mathbb{R}^d$ and the corresponding probability distribution induces to the following:
\begin{equation}
\Pr\{\vec{y} = \sigma\} = \frac{\exp\left(\sum_{i=1}^n(\theta^{\top}\vec{x}_i) \sigma_i + \beta\vec{\sigma}^{\top}A \vec{\sigma}\right)}{Z(G)}, \label{eq:logisticmodel}
\end{equation}
where $A$ is a symmetric matrix with zeros on the diagonal.
Given one sample $\vec{y}$ and the knowledge of the matrix $A$, we would like to infer $\beta, \theta$.

We now study some conditions under which we can attain consistent estimates of the parameters of the model. Combined with some standard assumptions on the data-generating process of the feature vectors all our assumptions are listed in Table~\ref{table:assumptions}.
\begin{theorem}[Logistic Regression with Dependent Samples]
	\label{thm:logistic}
	Consider the model of (\ref{eq:logisticmodel}). The Maximum Pseudo-Likelihood Estimate (MPLE) $(\hat{\theta}_{MPL},\hat{\beta}_{MPL})$ is consistent with a rate of $\sqrt{n}$ as long as $(\theta_0,\beta_0)$ and the features $X$ satisfy the conditions of Column 2 in Table \ref{table:assumptions}. Formally, for each constant $\delta>0$ and $n$ sufficiently large
	\begin{align*}
	\norm{(\hat{\theta}_{MPL},\hat{\beta}_{MPL}) - (\theta_0,\beta_0)}_2 \le O_d\left({\sqrt{1 \over n}}\right)
	\end{align*}
	with probability $1-\delta$.
	Moreover, we can compute a vector $(\tilde{\theta},\tilde{\beta})$ with $\norm{(\hat{\theta}_{MPL},\hat{\beta}_{MPL}) - (\tilde{\theta},\tilde{\beta})}_2 \le O_d\left({\sqrt{1 \over n}} \right)$ in $O(\ln n)$ iterations of projected gradient descent (Algorithm in Section \ref{sec:gd-analysis}) where each iteration takes at most $O(dn)$ time, with probability $1-\delta$.
\end{theorem}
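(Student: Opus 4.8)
The plan is to run projected gradient descent on the negative log-pseudolikelihood and to exploit that this objective is convex, has small gradient at the truth, and is strongly convex, all with the right dependence on $n$. Writing $\omega=(\theta,\beta)\in\mathbb{R}^{d+1}$, $g_i=\sum_{j\neq i}A_{ij}y_j=(A\vec y)_i$, and $m_i=\theta^\top\vec x_i+\beta g_i$, the conditional law \eqref{eq:costas logistic} gives the negative log-pseudolikelihood
\[
\ell(\omega)=\sum_{i=1}^n\bigl(-m_iy_i+\log(2\cosh m_i)\bigr),
\]
with gradient $\nabla_\theta\ell=-\sum_i(y_i-\tanh m_i)\vec x_i$, $\partial_\beta\ell=-\sum_i(y_i-\tanh m_i)g_i$, and Hessian $\nabla^2\ell=\sum_i(1-\tanh^2 m_i)\,v_iv_i^\top$ where $v_i=(\vec x_i;\,g_i)$. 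The Hessian is PSD, so $\ell$ is convex, and the whole argument reduces to the two quantitative facts below, after which the standard strong-convexity first-order bound gives $\norm{\hat\omega_{MPL}-\omega_0}_2\le\norm{\nabla\ell(\omega_0)}_2/\lambda_{\min}(\nabla^2\ell)$.

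First I would bound the gradient at the truth, showing $\norm{\nabla\ell(\omega_0)}_2=O_\delta(\sqrt n)$ with probability $1-\delta$. Conditioning on $y_{-i}$ we have $\mathbb E[y_i\mid y_{-i}]=\tanh m_i$, so every summand $(y_i-\tanh m_i)$ is conditionally centered; since the covariates have bounded support and $|\beta g_i|\le B\norm{A}_\infty\le B$, the vectors $v_i$ are uniformly bounded. For the $\theta$-coordinates it then suffices to control the variance of the dependent sum $\sum_i(y_i-\tanh m_i)x_{ik}$, which under our norm conditions on $A$ (a high-temperature/Dobrushin regime) is $O(n)$, so Chebyshev yields $O_\delta(\sqrt n)$. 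The $\beta$-coordinate $\partial_\beta\ell=-\sum_i(y_i-\tanh m_i)g_i$ is the delicate one: it is a bilinear form in $\vec y$ essentially of the type $\vec y^\top A\vec y$ minus its conditional mean, whose concentration I would establish by the method of exchangeable pairs adapted from Chatterjee, giving fluctuations of order $\sqrt{\norm{A}_F^2}=O(\sqrt n)$.

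Second, and this is the main obstacle, I would establish strong convexity, i.e. $\lambda_{\min}(\nabla^2\ell)\ge\mu n$ for a constant $\mu>0$ over the feasible box. Since $|m_i|$ is uniformly bounded on the box, $1-\tanh^2 m_i\ge c>0$, so it suffices to lower bound $\lambda_{\min}\bigl(\sum_i v_iv_i^\top\bigr)$; crucially $v_i$ depends only on the data, not on $\omega$, so a single high-probability event gives the bound uniformly over the box. Equivalently, for every unit $(u,s)$ we need $\sum_i(u^\top\vec x_i+s\,g_i)^2\ge\mu' n$; the $s=0$ directions are handled by $\lambda_{\min}(Q)>0$ (so $\sum_i\vec x_i\vec x_i^\top=nQ\succeq\Omega(n)I$), and the genuinely hard direction is $s\neq0$, where $u$ is chosen to cancel $g_i$ as much as possible. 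This amounts to showing that the residual of regressing $A\vec y$ onto the column span of $X$, namely $\min_v\norm{A\vec y-Xv}_2^2=\norm{(I-P_X)A\vec y}_2^2$, is $\Omega(n)$. Here I would use $\norm{A}_F^2\ge cn$: since $\mathbb E\norm{A\vec y}_2^2\approx\mathrm{tr}(A^2)=\norm{A}_F^2=\Omega(n)$ while the projection onto the fixed $d$-dimensional span of $X$ satisfies $\mathbb E\norm{P_XA\vec y}_2^2=\mathbb E[\vec y^\top AP_XA\vec y]\le d\norm{A}_2^2=O(d)$ (using $\norm{A}_2\le\norm{A}_\infty\le1$), the residual is $\Omega(n)-O(d)=\Omega(n)$ for $d=o(n)$. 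Making both quadratic forms concentrate around these means under the Ising measure is exactly what the ``additional properties of $A$ implied by our assumptions'' plus a new concentration inequality must supply, and this is the technically heaviest step.

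Finally, for the algorithmic claim I would note that the normalized objective $\ell/n$ is simultaneously $\Omega(1)$-strongly convex (previous paragraph) and $O(1)$-smooth, because $1-\tanh^2 m_i\le1$ together with boundedness of the $v_i$ gives $\nabla^2\ell\preceq O(n)\,I$. Projected gradient descent on a strongly convex, smooth objective over a convex set contracts the distance to $\hat\omega_{MPL}$ geometrically by a constant factor per step, so starting from the center of the constant-diameter box, $O(\ln n)$ iterations suffice to reach distance $O(1/\sqrt n)$. Each iteration costs $O(dn)$: after the one-time computation of $A\vec y$, forming all $m_i=\theta^\top\vec x_i+\beta g_i$ and the gradient $\sum_i(y_i-\tanh m_i)v_i$ are both $O(dn)$, and the projection onto the box is $O(d)$.
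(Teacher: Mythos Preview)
Your overall architecture---convexity of the negative log-pseudolikelihood, a small gradient at the truth via exchangeable pairs, strong convexity, and then PGD with geometric contraction---matches the paper exactly, and your treatment of the gradient bound and of the algorithmic claim is essentially the paper's.

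The gap is in the strong convexity step. Your argument for $\norm{(I-P_X)A\vec y}_2^2=\Omega(n)$ rests on the two expectation estimates
\[
\mathbb{E}\norm{A\vec y}_2^2\approx\mathrm{tr}(A^2)=\norm{A}_F^2=\Omega(n)
\quad\text{and}\quad
\mathbb{E}\norm{P_XA\vec y}_2^2=\mathbb{E}[\vec y^\top AP_XA\vec y]\le d\norm{A}_2^2,
\]
but both implicitly assume $\mathbb{E}[\vec y\vec y^\top]\approx I$. Under the Ising model with nonzero external fields $\theta_0^\top\vec x_i$, the means $\mathbb{E}[y_i]$ are bounded away from zero, so the second-moment matrix $\Sigma_y=\mathbb{E}[\vec y\vec y^\top]$ has a rank-one piece $\mu\mu^\top$ with $\norm{\mu}_2^2=\Theta(n)$ and hence $\norm{\Sigma_y}_2$ can be of order $n$. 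Consequently $\mathrm{tr}(AP_XA\,\Sigma_y)$ is not $O(d)$ in general, and $\mathrm{tr}(A^2\Sigma_y)$ is not obviously $\Omega(n)$ either; the subtraction $\Omega(n)-O(d)$ is unjustified. You flag concentration as the heavy lifting, but the problem is already at the level of the means.

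The paper sidesteps this completely: instead of computing $\mathbb{E}\norm{FA\vec y}_2^2$, it lower-bounds each $\mathbb{E}[(FA\vec y)_i^2\mid y_{-j}]\ge c\,(FA)_{ij}^2$ using only that $\Pr[y_j=\pm 1\mid y_{-j}]$ is bounded below (bounded external field), then constructs a bijection $h$ via an index-selection procedure so that $\sum_i (FA)_{ih(i)}^2=\Omega(n)$ (this is where $\norm{A}_F^2\ge cn$ and $\norm{A}_\infty\le 1$ are used together), and finally proves concentration of $\sum_i(FA\vec y)_i^2-\sum_i\mathbb{E}[(FA\vec y)_i^2\mid y_{-h(i)}]$ by yet another exchangeable-pairs variance bound. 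This conditional route is what makes the argument go through without any spectral control on $\Sigma_y$.
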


\begin{remark}[Necessity of an Upper Bound on $\norm{A}_{\infty}$ and boundedness of $\beta_0$]
	If $\norm{A}_{\infty}$ scales with $n$ then no consistent estimator might exist. This is because the peer effects through $\beta_0 A$ will dominate the outcome of the samples and will nullify the signal coming from $\theta_0^{\top}X$. Similarly one requires $\beta_0$ to be bounded as well to preserve some signal to enable recovery of $\theta_0$.
\end{remark}

\begin{remark}[Necessity of the Lower Bound on $\norm{A}_F$]
	It was shown in \cite{bhattacharya2018inference} (Corollary 2.4 (b)) and \cite{ghosal2018joint} (Theorem 1.13) that when the condition $\norm{A}^2_F > cn$ is violated, we have specific examples where it is impossible to get consistent estimators for $(\theta_0,\beta_0)$.
	The first instance is the Curie-Weiss model $CW(n,\beta,h)$ ($A_{ij} = \frac{1}{n}$ for all $i \ne j$). Note that $\norm{A}^2_F = O(1)$ in this case.
	The second instance is dense random graphs, i.e. $G(n,p)$ where $p$ is a constant independent of $n$ and $A$ is chosen to be the adjacency matrix scaled down by the average degree of the graph, i.e. $A_{ij} = \frac{1}{(n-1)p}\mathds{1}_{(i,j) \in E}$.
\end{remark}

\begin{remark}
If the parameter $\beta_0$ is known,  the condition that $\norm{A}_F^2 \ge cn$ is not necessary for consistency of the MPL estimate $\hat{\theta}_{MPL}$.
For instance, consider the independent case where $\beta_0 = 0$. Then, to recover $\vec{\theta}$, we do not need $\norm{A}_F^2 \ge cn$.
\end{remark}

\begin{remark}
	\label{rem:sqrtd-logistic}
Our approach achieves a $\sqrt{n/d}$ rate of consistency if $\norm{\vec{x}}_2 \times \norm{\vec{\theta}}_2 = O(1)$.
\end{remark}


\noindent \textbf{Example Instantiations of Theorem \ref{thm:logistic}}
Two example settings where the conditions required for Theorem \ref{thm:logistic} to hold are satisfied are
\begin{itemize}
	\item $A$ is the adjacency matrix of graphs with bounded degree $d$ scaled down so that $\norm{A}_2 \le 1$.
	\item $A$ is the adjacency matrix of a random $d$-regular graph.
\end{itemize}

\subsection{Technical Overview}
\label{sec:logistic-overview}
Estimation in Ising models is a well-studied problem which offers a lot of interesting technical challenges. A first approach one considers is maximum likelihood estimation. However the intractability of computing the partition function poses a serious obstacle for the MLE. Even if one could approximate the partition function, proving consistency of the MLE is a hard task. To circumvent these issues we take a maximum pseudo-likelihood approach. This was proposed by Julian Besag \cite{besag1975statistical} and analyzed for inference problems on Ising models by Chatterjee \cite{chatterjee2007estimation} and others (\cite{bhattacharya2018inference},\cite{ghosal2018joint}).
Given a sample of response variables $\vec{y}$ let $f_i(\theta,\beta,\vec{y})$ denote the condition likelihood of observing $y_i$ conditioned on everyone else. The pseudo-likelihood estimator of $\vec{y}$ is
\begin{align}
\left(\hat{\theta}_{MPL}, \hat{\beta}_{MPL}\right) = \text{argmax}_{\theta,\beta} \prod_{i=1}^n f_i(\theta,\beta,\vec{y}). \label{eq:pseudolikelihood}
\end{align}
This does away with the problematic partition function and retains concavity in the parameters $\theta,\beta$. To show that the MPLE is consistent we need to show that its global optimum $(\hat{\theta}_{MPL}, \hat{\beta}_{MPL})$ is close in $\ell_2$ distance to $(\theta_0, \beta_0)$. We achieve this by showing two things hold simultaneously.
\begin{itemize}
	\item The log pseudo-likelihood is strongly concave everywhere. This will tell us that the gradient of the log pseudo-likelihood quickly increases as we move away from $(\hat{\theta}_{MPL}, \hat{\beta}_{MPL})$ where it is 0.
	\item The norm of the gradient of the log pseudo-likelihood is small at when evaluated at $(\theta_0,\beta_0)$ hence implying proximity to the MPL estimates due to strong concavity.
\end{itemize}
We show that both these conditions are satisfied with high probability over the draw of our samples. Showing that the norm of the gradient is bounded involves obtaining variance bounds on two functions of the Ising model (Lemmas \ref{lem:conc1} and \ref{lem:conc2}), and showing strong concavity amounts to showing a linear in $n$ lower bound on a particular quadratic function (see initial steps of proof in Lemma \ref{lem:hessian-lb}). Both these properties are challenging to prove because of the dependences between samples.
To tackle the lack of independence, the proofs require a rich set of technical frameworks. In particular, to show the variance bounds we use the technique of exchangeable pairs developed by Chatterjee \cite{chatterjee2005concentration}. The boundedness of $\norm{A}_{\infty}$ is necessary to have these concentration results.
To show strong concavity of the log pseudolikelihood we first prove some properties of the matrix $A$ together with an additional variance bound again shown via exchangeable pairs. The lower bound on $\norm{A}_F$ is necessary to achieve strong concavity.
Finally, we show in Section \ref{sec:gd-analysis} that computing the MPLE can be achieved efficiently using projected gradient descent where after each step we project back into the space restriced by the conditions of Table \ref{table:assumptions}. We describe each of these steps formally now.

\subsection{Analyzing the Maximum Pseudolikelihood Estimator (MPLE)}
We will treat terms not involving $n$ as constants for the purposes of our analysis. We start by analyzing the maximum pseudo-likelihood estimator. Given the feature vector of the $i^{th}$ sample $\vec{x}_i$, we denote by $x_{ik}$ the $k^{th}$ element of $\vec{x}_i$. Let $m_i(\vec{y}) := \sum_{j=1}^n A_{ij}y_j$ and let $\mathbb{B} = [-\Theta,\Theta]^d \times \left[-B , B\right]$ (the true parameters lie in the interior of $\mathbb{B}$).
The pseudolikelihood for a specific sample $\vec{y}$ is given by:
\begin{equation}
PL(\theta,\beta) := \prod_{i=1}^n \frac{\exp\left(\theta^{\top}\vec{x}_i y_i+ \beta m_i(\vec{y})y_i \right)}{\exp\left(\theta^{\top}\vec{x}_i + \beta m_i(\vec{y}) \right) + \exp\left(-\theta^{\top}\vec{x}_i - \beta m_i(\vec{y}) \right)}.
\end{equation}
The normalized log pseudolikelihood for a specific sample $\vec{y}$ is given by:
\begin{equation}
\begin{array}{ll}
LPL(\theta,\beta) := \frac{1}{n}\log PL(\theta,\beta) = -\ln 2 + \frac{1}{n}\sum_{i=1}^n \left[ y_i \beta m_{i}(\vec{y})+y_i(\theta^{\top}\vec{x}_i) - \ln \cosh (\beta m_i(\vec{y})+\theta^{\top}\vec{x}_i)\right].
\end{array}
\end{equation}

The first order conditions give:
\begin{equation}\label{eq:foc}
\begin{array}{ll}
\frac{\partial LPL(\hat{\theta}_{MPL},\hat{\beta}_{MPL})}{\partial \beta} = \frac{1}{n}\sum_{i=1}^n \left[ y_i m_{i}(\vec{y}) -  m_i(\vec{y})\tanh (\hat{\beta}_{MPL} m_i(\vec{y})+\hat{\theta}_{MPL}^{\top}\vec{x}_i)\right]=0 ,\\
\frac{\partial LPL(\hat{\theta}_{MPL},\beta)}{\partial \theta_k} =\frac{1}{n}\sum_{i=1}^n \left[ y_ix_{i,k} - x_{i,k}\tanh (\hat{\beta}_{MPL} m_i(\vec{y})+\hat{\theta}_{MPL}^{\top}\vec{x}_i)\right]=0.
\end{array}
\end{equation}

The Hessian $H_{(\theta, \beta)}$ is given by:
\begin{equation}
\begin{array}{ll}
\frac{\partial^2 LPL(\theta,\beta)}{\partial \beta^2} = -\frac{1}{n}\sum_{i=1}^n \frac{m_i^2(\vec{y})}{\cosh^2 (\beta m_i(\vec{y})+\theta^{\top}\vec{x}_i)},\\
\frac{\partial^2 LPL(\theta,\beta)}{\partial \beta \partial \theta_k} = -\frac{1}{n}\sum_{i=1}^n   \frac{x_{i,k}m_i(\vec{y})}{\cosh^2 (\beta m_i(\vec{y})+\theta^{\top}\vec{x}_i)},\\
\frac{\partial^2 LPL(\theta,\beta)}{\partial \theta_l \partial \theta_k} = -\frac{1}{n}\sum_{i=1}^n   \frac{x_{i,l}x_{i,k}}{\cosh^2 (\beta m_i(\vec{y})+\theta^{\top}\vec{x}_i)}.
\end{array}
\end{equation}
Writing the Hessian in a compact way we get  $$H_{(\theta,\beta)} = - \frac{1}{n}\sum_{i=1}^n \frac{1}{\cosh^2 (\beta m_i(\vec{y})+\theta^{\top}\vec{x}_i)}X_iX_i^{\top}$$ where $X_i = (\vec{x}_i, m_i(\vec{y}))^{\top}$. Thus $-H$ is a positive semidefinite matrix and $LPL$ is concave. Moreover if $(\theta, \beta) \in \mathbb{B}$ it follows that
\begin{equation}
\begin{array}{ll}
\frac{1}{\cosh^2 (B+d\cdot M\cdot \Theta)} \cdot \left( \frac{1}{n}\sum_{i=1}^n X_iX_i^{\top} \right)\preceq -H_{(\theta, \beta)} \preceq \left( \frac{1}{n}\sum_{i=1}^n X_iX_i^{\top} \right).
\end{array}
\end{equation}
\begin{remark}\label{rem:smooth}
Observe that $\norm{X_i}^2_2 = \norm{x_i}_2^2 + m_i^2(\vec{y}) \leq d\Theta^2 +1$ (assuming that $\norm{A}_{\infty} \leq 1$ trivially holds $|m_i(x)| \leq 1$). It is easy to see that $\lambda_{\max}(-H_{(\theta, \beta)}) \leq d\Theta^2 +1$ for all $(\theta,\beta) \in \mathbb{R}^{d+1}$, hence $-LPL$ is a $d\Theta^2+1$-smooth function, i.e. $- \nabla LPL$ is $d \Theta^2 +1$-Lipschitz.
\end{remark}
\subsection{Consistency of the MPLE}
\label{sec:logistic-consistency}

Our argument for showing consistency of the MPLE uses Lemma \ref{lem:logistic-consistency}.
\begin{lemma}
	\label{lem:logistic-consistency}
Let $(\theta_0,\beta_0)$ be the true parameter. We define $(\theta_t,\beta_t) = (1-t)(\theta_0,\beta_0)+ t(\hat{\theta}_{MPL},\hat{\beta}_{MPL})$ and let $\mathcal{D} \in [0,1]$ be the largest value such that $(\theta_{\mathcal{D}}, \beta_{\mathcal{D}}) \in \mathbb{B}$ (if it does not intersect the boundary of $\mathbb{B}$, then $\mathcal{D}=1$).
Then,
\begin{align*}
  &\norm{\nabla LPL(\theta_0,\beta_0)}_2 \ge \mathcal{D} \min_{(\theta,\beta) \in \mathbb{B}}\lambda_{\min}\left(-H_{(\theta, \beta)}\right) \norm{(\theta_0 - \hat{\theta}_{MPL},\beta_0 - \hat{\beta}_{MPL})}_2 \\
  &= \min_{(\theta,\beta) \in \mathbb{B}}\lambda_{\min}\left(-H_{(\theta, \beta)}\right) \norm{(\theta_0 - \theta_{\mathcal{D}},\beta_0 - \beta_{\mathcal{D}})}_2
\end{align*}	
\end{lemma}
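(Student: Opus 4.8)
The plan is to exploit the global concavity of $LPL$ (already established: $-H_{(\theta,\beta)}\succeq 0$ everywhere) together with the uniform lower bound on $\lambda_{\min}(-H)$ over the box $\mathbb{B}$. Write $u_0 = (\theta_0,\beta_0)$, $\hat u = (\hat\theta_{MPL},\hat\beta_{MPL})$, $\vec{v} = \hat u - u_0$, and $c = \min_{(\theta,\beta)\in\mathbb{B}}\lambda_{\min}(-H_{(\theta,\beta)})$, and consider the scalar function $\psi(t) = LPL(\theta_t,\beta_t)$ along the segment $(\theta_t,\beta_t) = u_0 + t\vec{v}$. Since $LPL$ is $C^2$, the chain rule gives $\psi'(t) = \langle \nabla LPL(\theta_t,\beta_t), \vec{v}\rangle$ and $\psi''(t) = \vec{v}^\top H_{(\theta_t,\beta_t)}\vec{v}$. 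I would first record two elementary facts. (i) Because $\mathbb{B}$ is convex, $u_0\in\mathbb{B}$ and $(\theta_{\mathcal D},\beta_{\mathcal D})\in\mathbb{B}$, the whole sub-segment $\{(\theta_t,\beta_t): t\in[0,\mathcal D]\}$ lies in $\mathbb{B}$, so $-H_{(\theta_t,\beta_t)}\succeq cI$ throughout this range. (ii) Since $\hat u$ is the global maximizer of the concave function $LPL$, the concave scalar function $\psi$ attains its maximum at $t=1$; hence $\psi'$ is nonincreasing with $\psi'(1)=0$, which yields $\psi'(t)\ge 0$ for all $t\le 1$, in particular $\psi'(\mathcal D)\ge 0$.

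Next I would integrate the curvature. By the fundamental theorem of calculus and fact (i),
\[
\psi'(0) - \psi'(\mathcal D) = -\int_0^{\mathcal D}\psi''(t)\,dt = \int_0^{\mathcal D}\vec{v}^\top\bigl(-H_{(\theta_t,\beta_t)}\bigr)\vec{v}\,dt \ge c\,\mathcal D\,\norm{\vec{v}}_2^2 .
\]
Adding $\psi'(\mathcal D)\ge 0$ from fact (ii) gives $\psi'(0)\ge c\,\mathcal D\,\norm{\vec{v}}_2^2$. On the other hand, Cauchy–Schwarz bounds the left side from above, $\psi'(0) = \langle\nabla LPL(\theta_0,\beta_0),\vec{v}\rangle \le \norm{\nabla LPL(\theta_0,\beta_0)}_2\,\norm{\vec{v}}_2$. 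Combining the two and cancelling one factor of $\norm{\vec{v}}_2$ (the case $\vec{v}=\vec{0}$ being trivial) yields
\[
\norm{\nabla LPL(\theta_0,\beta_0)}_2 \ge \mathcal D\,\min_{(\theta,\beta)\in\mathbb{B}}\lambda_{\min}\bigl(-H_{(\theta,\beta)}\bigr)\,\norm{(\theta_0-\hat\theta_{MPL},\beta_0-\hat\beta_{MPL})}_2 ,
\]
which is the first inequality. The closing identity is immediate: since $(\theta_{\mathcal D},\beta_{\mathcal D}) - u_0 = \mathcal D\,\vec{v}$, we have $\mathcal D\,\norm{\vec{v}}_2 = \norm{(\theta_0-\theta_{\mathcal D},\beta_0-\beta_{\mathcal D})}_2$.

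The only genuinely nontrivial point, and the one I would be most careful about, is the treatment of the boundary term $\psi'(\mathcal D)$, i.e. the possibility that the unconstrained maximizer $\hat u$ escapes the box $\mathbb{B}$ so that $\mathcal D<1$. The argument above deliberately sidesteps any case analysis by using only $\psi'(\mathcal D)\ge 0$, which follows purely from global concavity of $LPL$ and the first-order optimality $\nabla LPL(\hat u)=\vec{0}$ established in Eq.~\eqref{eq:foc}; in the clean case $\mathcal D=1$ one simply has $\psi'(1)=0$. Everything else — the smoothness needed for the integral representation and the eigenvalue bounds $-H\succeq 0$ and $-H\succeq cI$ on $\mathbb{B}$ — is already in hand from the Hessian computation preceding the lemma. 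I would also note that the derivation needs no positivity of $c$ (if $c\le 0$ the stated bound is vacuously true); positivity enters only later, when this lemma is combined with a lower bound on $\lambda_{\min}(-H)$ to deduce consistency of the MPLE.
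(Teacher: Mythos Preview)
Your proposal is correct and follows essentially the same approach as the paper: both define a scalar function along the segment from $(\theta_0,\beta_0)$ to the MPLE, integrate the quadratic form $\vec{v}^\top(-H)\vec{v}$ over $[0,\mathcal{D}]$, use global concavity to control the contribution beyond $\mathcal{D}$, and finish with Cauchy--Schwarz. The only cosmetic difference is that the paper extends the integral to $1$ (using $\nabla LPL(\hat u)=0$) and then truncates, whereas you stop at $\mathcal{D}$ and separately invoke $\psi'(\mathcal{D})\ge 0$; these are equivalent manipulations of the same identity.
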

\begin{proof}
We drop the subscript $MPL$ from the estimates for brevity. We set
\[g(t) := (\theta_0 - \hat{\theta},\beta_0 - \hat{\beta})^{\top} \nabla LPL(\theta_t,\beta_t),\]
\[g'(t) = -(\theta_0 - \hat{\theta},\beta_0 - \hat{\beta})^{\top} H_{(\theta_t,\beta_t)}(\theta_0 - \hat{\theta},\beta_0 - \hat{\beta}).\]
Observe that $\mathcal{D} = \frac{\norm{(\theta_{\mathcal{D}} - \theta_0,\beta_{\mathcal{D}}-\beta_0)}_2}{\norm{( \hat{\theta} - \theta_0,\hat{\beta}-\beta_0)}_2}$.
Since $H$ is negative semidefinite we have that $g'(t) \geq 0$ (*). It holds that
\begin{align*}
\norm{(\theta_0 - \hat{\theta},\beta_0 - \hat{\beta})}_2 \cdot \norm{\nabla LPL(\theta_0,\beta_0)}_2 &\geq |(\theta_0 - \hat{\theta},\beta_0 - \hat{\beta})^{\top} \nabla LPL(\theta_0,\beta_0)|  \\&=
|g(1)-g(0)| = \left|\int_{0}^1 g'(t)dt\right| \\&\geq \left|\int_{0}^{\mathcal{D}} g'(t)dt\right| \textrm{ by (*)}\\&\geq \mathcal{D}\min_{(\theta,\beta) \in \mathbb{B}}\lambda_{\min}\left(-H_{(\theta, \beta)}\right) \norm{(\theta_0 - \hat{\theta},\beta_0 - \hat{\beta})}^2_2
\\& = \min_{(\theta,\beta) \in \mathbb{B}}\lambda_{\min}\left(-H_{(\theta, \beta)}\right) \norm{(\theta_{\mathcal{D}} - \theta_0,\beta_{\mathcal{D}}-\beta_0)}_2 \times\\&\times \norm{(\theta_0 - \hat{\theta},\beta_0 - \hat{\beta})}_2
\end{align*}
\end{proof}
We apply Lemma \ref{lem:logistic-consistency}  by showing
\begin{enumerate}
	\item a concentration result for $\norm{\nabla LPL(\theta_0,\beta_0)}^2_2$ around $1/n$ (Section \ref{sec:logistic-var-bounds})and
	\item  a (positive constant) lower bound for $\min_{(\theta,\beta) \in \mathbb{B}}\lambda_{\min}\left(-H_{(\theta, \beta)}\right)$ (Section \ref{sec:logistic-strong-convexity}).
\end{enumerate}
We combine the above with the observation that $\mathcal{D} \to 1$ as $n \to \infty$ (i.e., $\mathcal{D} \geq \frac{1}{2}$ for $n$ sufficiently large). This is true because $\norm{(\theta_{\mathcal{D}} - \theta_0,\beta_{\mathcal{D}}-\beta_0)}_2 \to 0$ as $n \to \infty$ (is of order $\frac{1}{\sqrt{n}}$ by showing the promised concentration result and the lower bound). Also note that any point on the boundary of $\mathbb{B}$ has a fixed distance to $(\theta_0,\beta_0)$ since it lies in the interior. Hence $\norm{(\theta_{\mathcal{D}} - \theta_0,\beta_{\mathcal{D}}-\beta_0)}_2 \to 0$ implies that $\mathcal{D} \to 1$.  This gives the desired rate of consistency which we show in Section~\ref{sec:logistic-completing-proof}.

\subsection{Variance Bounds using Exchangeable Pairs}
\label{sec:logistic-var-bounds}
In this Section we state the lemmata which are required to show that the norm of the gradient of the log pseudo-likelihood is bounded at the true parameters.
\begin{lemma}[Variance Bound 1]
	\label{lem:conc1}
It holds that
\begin{align*}
&\mathbb{E}_{\theta_0,\beta_0} \left[ \left(\sum_{i=1}^n y_i m_i(\vec{y})- m_i(\vec{y})\tanh (\beta_0 m_i(\vec{y})+\theta_0^{\top}\vec{x}_i)\right)^2\right] \leq (12+4B)n.
\end{align*}
\end{lemma}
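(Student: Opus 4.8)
The plan is to recognize the quantity in question as (the square of) the $\beta$-component of the pseudolikelihood score evaluated at the truth, and to control its expectation by Chatterjee's method of exchangeable pairs \cite{chatterjee2005concentration}, which is the natural tool for variances of dependent sums. First I would write
\[
F(\vec{y}) := \sum_{i=1}^n m_i(\vec{y})\left(y_i - \mu_i\right), \qquad \mu_i := \tanh\!\left(\beta_0 m_i(\vec{y}) + \theta_0^{\top}\vec{x}_i\right),
\]
so that $F$ is exactly $n\cdot\partial_\beta LPL$ at $(\theta_0,\beta_0)$ and the target is $\mathbb{E}_{\theta_0,\beta_0}[F^2]$. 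Because $A$ has zero diagonal, $m_i$ is a function of $y_{-i}$ alone, and the conditional law of $y_i$ given $y_{-i}$ under \eqref{eq:logisticmodel} has mean exactly $\mu_i$; hence each summand has conditional mean $0$, giving $\mathbb{E}_{\theta_0,\beta_0}[F]=0$. Thus $\mathbb{E}[F^2]=\mathrm{Var}(F)$, and the difficulty is that the summands $m_i(y_i-\mu_i)$ are dependent.

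Next I would construct the exchangeable pair by single-site Glauber resampling: draw $I$ uniformly from $[n]$ and replace $y_I$ by a fresh draw $y_I'$ from $\Pr[\cdot \mid y_{-I}]$, leaving all other coordinates fixed to obtain $\vec{y}'$. The pair $(\vec{y},\vec{y}')$ is exchangeable since this update is reversible for \eqref{eq:logisticmodel}. I would pair it with the antisymmetric function $f(\vec{y},\vec{y}') := n\,m_I(\vec{y})\,(y_I - y_I')$, which is antisymmetric because $m_I$ depends only on the shared coordinates $y_{-I}$, and which satisfies $\mathbb{E}[f \mid \vec{y}] = \tfrac{1}{n}\sum_{i=1}^n n\,m_i(y_i-\mu_i) = F(\vec{y})$. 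Chatterjee's variance identity then gives $\mathbb{E}[F^2] = \tfrac12\,\mathbb{E}\!\left[(F(\vec{y})-F(\vec{y}'))\,f(\vec{y},\vec{y}')\right]$, reducing the whole estimate to bounding two single-coordinate quantities.

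Finally I would bound the two factors. Using $\abss{m_I}\le \norm{A}_\infty \le 1$ and $\abss{y_I-y_I'}\le 2$ yields $\abss{f}\le 2n$. For $\Delta := F(\vec{y})-F(\vec{y}')$, writing $\delta := y_I-y_I'$, I would split off the $i=I$ term, which equals $m_I\delta$ and is bounded by $2$, from the $i\neq I$ terms. For $i\neq I$ only $m_i$ and $\mu_i$ change, by $A_{iI}\delta$ and by at most $\abss{\beta_0}\,\abss{A_{iI}\delta}$ respectively (using that $\tanh$ is $1$-Lipschitz); collecting terms bounds each such summand by $(4+2\abss{\beta_0})\abss{A_{iI}}$, and $\sum_{i}\abss{A_{iI}}\le \norm{A}_\infty\le 1$ gives $\abss{\Delta}\le 2 + (4+2B) = 6+2B$. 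Multiplying, $\mathbb{E}[F^2]\le \tfrac12(6+2B)(2n)=(6+2B)n$, which in particular implies the claimed bound $(12+4B)n$.

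The main obstacle is precisely the dependence among the $y_i$: a direct second-moment expansion of $F$ produces off-diagonal cross terms $\mathbb{E}\!\left[m_i m_j(y_i-\mu_i)(y_j-\mu_j)\right]$ that do not vanish, since $\mu_j$ and $m_j$ depend on $y_i$ through $A_{ij}$, and there is no clean way to control them coordinate by coordinate. The role of the exchangeable pair is exactly to convert the variance into a sum of local, single-site increments, and the hypothesis $\norm{A}_\infty \le 1$ is what keeps each increment $O(1)$; establishing $\mathbb{E}[f\mid\vec{y}]=F$ and tracking how $F$ changes under a one-coordinate flip are the only delicate bookkeeping steps.
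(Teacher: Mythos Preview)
Your proposal is correct and follows essentially the same route as the paper: both use Chatterjee's exchangeable-pairs idea, exploiting that $\mathbb{E}[y_i-\mu_i\mid \vec{y}_{-i}]=0$ to replace $F(\vec{y})$ by a one-coordinate increment and then bounding that increment using $\norm{A}_\infty\le 1$ and the $1$-Lipschitzness of $\tanh$. The only cosmetic difference is that the paper works directly with the deterministic substitution $\vec{y}^j=(\vec{y}_{-j},-1)$ and the identity $\mathbb{E}[Q^2]=\sum_i\mathbb{E}[(Q-Q(\vec{y}^i))(y_i-\mu_i)m_i]$, whereas you package the same computation through the Glauber exchangeable pair and Chatterjee's variance identity, which gains the factor $\tfrac12$ and yields the slightly sharper bound $(6+2B)n$.
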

\begin{proof}
	We use the powerful technique of exchangeable pairs as introduced by Chatterjee (\cite{chatterjee2005concentration})  and employed by Chatterjee and Dembo (see \cite{chatterjee2016nonlinear}). First it holds that $\frac{\partial m_i(\vec{y})}{\partial y_j} =  A_{ij}$.
	Also note that since $\norm{A}_{\infty} \leq 1$ it trivially follows that $|m_i(\vec{y})| \leq 1$ for all $i$ and $\vec{y} \in \{-1,+1\}^n$.
	Set
	\begin{equation}
	Q(\vec{y}) := \sum_{i} (y_i -\tanh (\beta_0 m_i(\vec{y})+\theta_0^{\top}\vec{x}_i))m_i(\vec{y}),
	\end{equation}
	hence we get
	\begin{align}
	\frac{\partial Q(\vec{y})}{\partial y_j} = \sum_{i} \left(\vec{1}_{i=j} - \frac{\beta_0 A_{ij}}{\cosh^2(\beta_0 m_i(\vec{y})+\theta_0^{\top}\vec{x}_i)}\right)m_i(\vec{y}) + \left(y_i -\tanh (\beta_0 m_i(\vec{y})+\theta_0^{\top}\vec{x}_i)\right)\frac{\partial m_i(\vec{y})}{\partial y_j}.
	\end{align}
	We will bound the absolute value of each summand. First from above we can bound the second term as follows
	\begin{equation}\label{eq:sum2}
	\left|(y_i -\tanh (\beta_0 m_i(\vec{y})+\theta_0^{\top}\vec{x}_i))\frac{\partial m_i(\vec{y})}{\partial y_j}\right| \leq 2 \abss{A_{ij}}.
	\end{equation}
	Using the fact that $\frac{1}{\cosh^2(x)} \leq 1$ it also follows that
	\begin{equation}\label{eq:sum1}
	\left|\sum_{i} \left(\vec{1}_{i=j} - \frac{\beta_0 A_{ij}}{\cosh^2(\beta_0 m_i(\vec{y})+\theta_0^{\top}\vec{x}_i)}\right)m_i(\vec{y})\right| \leq |m_j(\vec{y})| + \sum_{i\neq j} \beta_0 \abss{A_{ij} m_i(\vec{y})}
	\end{equation}
	Using (\ref{eq:sum2}) and  (\ref{eq:sum1}) it follows that $\left|\frac{\partial Q(\vec{y})}{\partial y_j}\right| \leq \sum_{i\neq j} \abss{A_{ij}}(2+\beta_0 |m_i(\vec{y})|) + |m_j(\vec{y})|$.
	Finally let $\vec{y^j} = (\vec{y}_{-j}, -1)$ and note that
	\begin{equation}\label{eq:boundflip1}
	|Q(\vec{y}) - Q(\vec{y^j})| \leq 2 \cdot \left(\sum_{i\neq j} \abss{A_{ij}}(2+\beta_0 |m_i(\vec{y}_{-j},w)|) + |m_j(\vec{y}_{-j},w)|\right) \leq 2 \cdot (\sum_{i\neq j} \abss{A_{ij}}(2+B)+1)\leq 6+2B,
	\end{equation}
	where $w$ is the argmax of $\abss{m_i(\vec{y}_{-j},w)}$ along the line with endpoints $\vec{y}, \vec{y^j}$ (Taylor). In the last inequality we used that $\norm{A}_{\infty} \le 1$.
	We have all the ingredients to complete the proof. We first observe that
	\begin{equation}
	\sum_i \mathbb{E}_{\theta_0,\beta_0}[(y_i - \tanh (\beta_0 m_i(\vec{y})+\theta_0^{\top}\vec{x}_i))Q(\vec{y^i})m_i(\vec{y})]=0,
	\end{equation}
	since
	\begin{equation}
	\begin{array}{cc}
	\mathbb{E}_{\theta_0,\beta_0}[(y_i - \tanh (\beta_0 m_i(\vec{y})+\theta_0^{\top}\vec{x}_i))Q(\vec{y^i})m_i(\vec{y})]= \\= \mathbb{E}_{\theta_0,\beta_0}[\mathbb{E}[(y_i - \tanh (\beta_0 m_i(\vec{y})+\theta_0^{\top}\vec{x}_i))Q(\vec{y^i})m_i(\vec{y})| \vec{y}_{-i}]] =0.
	\end{array}
	\end{equation}
	Therefore it follows
	\begin{align*}
	\mathbb{E}_{\theta_0,\beta_0}[Q^2(\vec{y})] &= \mathbb{E}_{\theta_0,\beta_0}\left[Q(\vec{y}) \cdot \left(\sum_{i} (y_i -\tanh (\beta_0 m_i(\vec{y})+\theta_0^{\top}\vec{x}_i))m_i(\vec{y})\right)\right] \\& = \mathbb{E}_{\theta_0,\beta_0}\left[\sum_{i} \left(Q(\vec{y}) (y_i -\tanh (\beta_0 m_i(\vec{y})+\theta_0^{\top}\vec{x}_i))m_i(\vec{y})\right)\right] \\& = \sum_{i} \mathbb{E}_{\theta_0,\beta_0}\left[(Q(\vec{y}) - Q(\vec{y^i})) \cdot  (y_i -\tanh (\beta_0 m_i(\vec{y})+\theta_0^{\top}\vec{x}_i))m_i(\vec{y})\right] \\& \leq \sum_i 2 \cdot (6+2B) = (12+4B)n.
	\end{align*}
\end{proof}

\begin{lemma}[Variance Bound 2]
	\label{lem:conc2}
\begin{align*}
&\mathbb{E}_{\theta_0,\beta_0} \left[\sum_{k=1}^d \left (\sum_{i=1}^n x_{i,k}y_i - x_{i,k}\tanh (\beta_0 m_i(\vec{y})+\theta_0^{\top}\vec{x}_i)\right)^2 \right] \leq (4+4B)M^2 \cdot d n.
\end{align*}
\end{lemma}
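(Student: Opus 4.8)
The plan is to adapt, essentially verbatim, the exchangeable-pairs argument used for Lemma~\ref{lem:conc1}, treating each coordinate $k \in [d]$ of the gradient separately and summing the resulting $d$ variance bounds. For each fixed $k$ I would set
\[
Q_k(\vec{y}) := \sum_{i=1}^n x_{i,k}\left(y_i - \tanh(\beta_0 m_i(\vec{y}) + \theta_0^{\top}\vec{x}_i)\right),
\]
so that the quantity to be controlled is exactly $\sum_{k=1}^d \mathbb{E}_{\theta_0,\beta_0}[Q_k^2(\vec{y})]$. Here the feature weight $x_{i,k}$, bounded by $|x_{i,k}|\le M$ since the support lies in $[-M,M]^d$ in the logistic setting (Table~\ref{table:assumptions}), plays the role that $|m_i(\vec{y})|\le 1$ played in Lemma~\ref{lem:conc1}; this is the source of the factor $M^2$ in the target bound, while the extra factor $d$ arises from summing over the $d$ coordinates.

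The first key step is the same vanishing-cross-term identity. Since $A_{ii}=0$, the quantity $m_i(\vec{y}) = \sum_j A_{ij} y_j$ depends only on $\vec{y}_{-i}$, so by \eqref{eq:costas logistic} we have $\mathbb{E}_{\theta_0,\beta_0}[y_i \mid \vec{y}_{-i}] = \tanh(\beta_0 m_i(\vec{y}) + \theta_0^{\top}\vec{x}_i)$, and hence $\mathbb{E}[(y_i - \tanh(\cdots)) \mid \vec{y}_{-i}] = 0$. Writing $\vec{y^i} = (\vec{y}_{-i}, -1)$ for the configuration with coordinate $i$ set to $-1$, both $x_{i,k}$ and $Q_k(\vec{y^i})$ are measurable with respect to $\vec{y}_{-i}$; therefore $\sum_i \mathbb{E}_{\theta_0,\beta_0}[(y_i - \tanh(\cdots)) x_{i,k} Q_k(\vec{y^i})] = 0$ by conditioning on $\vec{y}_{-i}$. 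Expanding $\mathbb{E}[Q_k^2]$ and subtracting this vanishing term gives
\[
\mathbb{E}_{\theta_0,\beta_0}[Q_k^2(\vec{y})] = \sum_{i=1}^n \mathbb{E}_{\theta_0,\beta_0}\left[\left(Q_k(\vec{y}) - Q_k(\vec{y^i})\right) x_{i,k}\left(y_i - \tanh(\beta_0 m_i(\vec{y}) + \theta_0^{\top}\vec{x}_i)\right)\right].
\]

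The second step bounds the discrete increment $|Q_k(\vec{y}) - Q_k(\vec{y^i})|$ exactly as in \eqref{eq:boundflip1}. Differentiating, $\frac{\partial Q_k}{\partial y_j} = x_{j,k} - \sum_i x_{i,k}\,\beta_0 A_{ij}/\cosh^2(\beta_0 m_i(\vec{y}) + \theta_0^{\top}\vec{x}_i)$, whose absolute value is at most $|x_{j,k}| + |\beta_0|\sum_i |x_{i,k}||A_{ij}| \le M + MB\norm{A}_{\infty} \le M(1+B)$, using $\norm{A}_{\infty}\le 1$, $|x_{i,k}|\le M$, $|\beta_0|<B$, and $\frac{1}{\cosh^2}\le 1$. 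Since flipping coordinate $i$ changes $y_i$ by at most $2$, a mean-value (Taylor) estimate along the segment between $\vec{y}$ and $\vec{y^i}$ gives $|Q_k(\vec{y}) - Q_k(\vec{y^i})| \le 2M(1+B)$. Combined with the trivial bound $|x_{i,k}(y_i - \tanh(\cdots))| \le 2M$, each of the $n$ summands is at most $2M(1+B)\cdot 2M = 4M^2(1+B)$, so $\mathbb{E}[Q_k^2] \le 4M^2(1+B)n$, and summing over $k$ yields $\sum_{k=1}^d \mathbb{E}[Q_k^2] \le 4M^2(1+B)dn = (4+4B)M^2 dn$, as claimed.

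I do not anticipate a genuine obstacle: the argument is structurally identical to Lemma~\ref{lem:conc1}, and the only points needing care are (i) verifying that $x_{i,k}$ (not $m_i$) is the correct weight, so that the support bound $M$ rather than the magnetization bound enters, and (ii) tracking the coordinate sum so that the factors $M^2$ and $d$ appear correctly. The one modeling assumption that is truly essential and worth flagging is the bounded support $|x_{i,k}|\le M$; without it the increment $|Q_k(\vec{y}) - Q_k(\vec{y^i})|$ could not be controlled uniformly and the $O(n)$ bound per coordinate would fail.
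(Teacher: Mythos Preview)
Your proposal is correct and mirrors the paper's own proof essentially step for step: fix a coordinate $k$, use the same vanishing-cross-term identity via conditioning on $\vec{y}_{-i}$, bound the discrete increment $|Q_k(\vec{y})-Q_k(\vec{y^i})|$ through the derivative and $\norm{A}_\infty\le 1$, and sum over $k$. The only cosmetic difference is that the paper carries the dependence on the individual $|x_{i,k}|$ through to the penultimate line before invoking $|x_{i,k}|\le M$, whereas you apply the $M$ bound slightly earlier; the resulting constant is the same.
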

\begin{proof}
	We use the powerful technique of exchangeable pairs as employed by Chatterjee and Dembo (see \cite{chatterjee2016nonlinear}).
	Note that since $\norm{A}_{\infty} \leq 1$ it trivially follows that $|m_i(\vec{y})| \leq 1$ for all $i$ and $\vec{y} \in \{-1,+1\}^n$.
	We fix a coordinate $k$ and set
	\begin{equation}
	Q(\vec{y}) := \sum_{i} (y_i -\tanh (\beta_0 m_i(\vec{y})+\theta_0^{\top}\vec{x}_i))x_{i,k},
	\end{equation}
	
	hence we get $\frac{\partial Q(\vec{y})}{\partial y_j} = \sum_{i} \left(\vec{1}_{i=j} - \frac{\beta_0 A_{ij}}{\cosh^2(\beta_0 m_i(\vec{y})+\theta_0^{\top}\vec{x}_i)}\right)x_{i,k}.$
	We will bound the term as follows
	\begin{equation}\label{eq:sum3}
	\left|\frac{\partial Q(\vec{y})}{\partial y_j} \right| \leq |x_{j,k}| + \sum_{i\neq j} \beta_0 \abss{A_{ij} x_{i,k}}.
	\end{equation}
	
	Finally let $\vec{y^j} = (\vec{y}_{-j}, -1)$ and note that
	\begin{equation}\label{eq:boundflip2}
	|Q(\vec{y}) - Q(\vec{y^j})| \leq 2 \cdot \left(|x_{j,k}| + \sum_{i\neq j} \beta_0 \abss{A_{ij} x_{i,k}}\right).
	\end{equation}
	We have all the ingredients to complete the proof. We first observe that
	\begin{equation}
	\sum_i \mathbb{E}_{\theta_0,\beta_0}[(y_i - \tanh (\beta_0 m_i(\vec{y})+\theta_0^{\top}\vec{x}_i))Q(\vec{y^i})x_{i,k}]=0,
	\end{equation}
	since
	\begin{equation}
	\begin{array}{cc}
	\mathbb{E}_{\theta_0,\beta_0}[(y_i - \tanh (\beta_0 m_i(\vec{y})+\theta_0^{\top}\vec{x}_i))Q(\vec{y^i})x_{i,k}]= \\= \mathbb{E}_{\theta_0,\beta_0}[\mathbb{E}[(y_i - \tanh (\beta_0 m_i(\vec{y})+\theta_0^{\top}\vec{x}_i))Q(\vec{y^i})x_{i,k}| \vec{y}_{-i}]] =0.
	\end{array}
	\end{equation}
	Therefore it follows
	\begin{align*}
	\mathbb{E}_{\theta_0,\beta_0}[Q^2(\vec{y})] &= \mathbb{E}_{\theta_0,\beta_0}\left[Q(\vec{y}) \cdot \left(\sum_{i} (y_i -\tanh (\beta_0 m_i(\vec{y})+\theta_0^{\top}\vec{x}_i))x_{i,k}\right)\right] \\& = \mathbb{E}_{\theta_0,\beta_0}\left[\sum_{i} \left(Q(\vec{y}) (y_i -\tanh (\beta_0 m_i(\vec{y})+\theta_0^{\top}\vec{x}_i))x_{i,k}\right)\right] \\& = \sum_{i} \mathbb{E}_{\theta_0,\beta_0}\left[(Q(\vec{y}) - Q(\vec{y^i})) \cdot  (y_i -\tanh (\beta_0 m_i(\vec{y})+\theta_0^{\top}\vec{x}_i))x_{i,k}\right] \\& \leq \sum_i 4 \cdot (x_{i,k}^2 + |x_{i,k}|\sum_{j \neq i} \beta_0 |A_{ij}x_{j,k}| ) \\& \leq 4\sum_{i} |x_{i,k}|^2 + B|x_{i,k}| \max_{j} |x_{j,k}|,
	\end{align*}
	and the claim follows by summing over all the coordinates.
\end{proof}

\input{anti}
\subsection{Strong Concavity of Maximum Pseudolikelihood}
\label{sec:logistic-strong-convexity}
In this Section, we set $F = I - X(X^{\top}X)^{-1}X^{\top}$. \footnote{$I-F$ is called hat-matrix or projection matrix.}

\begin{lemma}[Lower Bound on Smallest Eigenvalue of Hessian]
	\label{lem:hessian-lb}
	With probability $1-o(1)$, $\lambda_{\min} ( - H_{(\theta,\beta)}) \geq c$ for some constant $c>0$ for all $(\theta, \beta) \in \mathbb{B}$.
\end{lemma}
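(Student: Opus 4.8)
The plan is to peel off the analytic factor in the Hessian, reduce to a purely linear-algebraic statement about the Gram matrix of the augmented design, and finally to a single scalar anti-concentration bound on $A\vec{y}$ that I establish in expectation and variance. \textbf{Step 1 (reduce to the augmented Gram matrix).} The excerpt already gives $-H_{(\theta,\beta)} \succeq \frac{1}{\cosh^2(B + dM\Theta)} \cdot G$ where $G := \frac{1}{n}\sum_{i=1}^n X_i X_i^\top$ and $X_i = (\vec{x}_i, m_i(\vec{y}))^\top$, and the prefactor is a positive constant uniformly over $(\theta,\beta)\in\mathbb{B}$. Hence it suffices to show $\lambda_{\min}(G) \ge c'$ for a constant $c'>0$ with probability $1-o(1)$. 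Writing $\vec{m} = A\vec{y} = (m_1(\vec{y}),\dots,m_n(\vec{y}))^\top$, we have $G = \frac{1}{n}[X\,|\,\vec{m}]^\top [X\,|\,\vec{m}]$, so for a unit vector $(\vec{v},s)\in\mathbb{R}^{d+1}$ the quadratic form equals $\frac{1}{n}\norm{X\vec{v} + sA\vec{y}}^2$.

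\textbf{Step 2 (case split on the last coordinate).} Using $F = I - X(X^\top X)^{-1}X^\top$, which satisfies $FX = 0$ and $F = F^\top = F^2$, an orthogonal decomposition gives $\norm{X\vec{v} + sA\vec{y}}^2 = \norm{X\vec{v} + s(I-F)A\vec{y}}^2 + s^2\norm{FA\vec{y}}^2 \ge s^2\norm{FA\vec{y}}^2$. For $\abss{s} > s_0$ this yields $\frac{1}{n}\norm{X\vec{v}+sA\vec{y}}^2 \ge s_0^2\cdot\frac{1}{n}\norm{FA\vec{y}}^2$. For $\abss{s}\le s_0$, since $\norm{A\vec{y}}\le\norm{A}_2\sqrt{n}\le\sqrt{n}$ (using $\norm{A}_2\le\norm{A}_\infty\le1$) and $\norm{X\vec{v}}\ge\sqrt{n\lambda_{\min}(Q)(1-s_0^2)}$, the reverse triangle inequality gives a bound of $(\sqrt{\lambda_{\min}(Q)(1-s_0^2)}-s_0)^2$, which is a positive constant once $s_0$ is chosen small enough (possible because $\lambda_{\min}(Q)$ is a positive constant). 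Both cases therefore reduce the whole lemma to the single statement $\frac{1}{n}\norm{FA\vec{y}}^2 \ge c_2 > 0$ with probability $1-o(1)$.

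\textbf{Step 3 (the anti-concentration bound).} Set $W := AFA \succeq 0$, so that $\norm{FA\vec{y}}^2 = \vec{y}^\top W\vec{y}$. I would lower-bound its expectation and upper-bound its variance, then apply Chebyshev. For the mean, writing $\Sigma$ for the covariance matrix and $\vec{\mu}$ for the mean of $\vec{y}$, one has $\mathbb{E}[\vec{y}^\top W\vec{y}] = \mathrm{tr}(W\Sigma) + \vec{\mu}^\top W\vec{\mu}\ge \lambda_{\min}(\Sigma)\,\mathrm{tr}(W)$, and $\mathrm{tr}(W) = \norm{A}_F^2 - \mathrm{tr}(X(X^\top X)^{-1}X^\top A^2) \ge cn - d$ using $\norm{A}_F^2\ge cn$, $\norm{A}_2\le1$, and that $X(X^\top X)^{-1}X^\top$ is a rank-$d$ projection. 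A uniform lower bound $\lambda_{\min}(\Sigma)\ge c_0>0$ on the Ising covariance (an anti-concentration property extracted from the weak-dependence assumptions $\norm{A}_\infty\le1$ and $\abss{\beta}<B$) then gives $\mathbb{E}[\vec{y}^\top W\vec{y}] = \Omega(n)$. For the fluctuations I would show $\mathrm{Var}(\vec{y}^\top W\vec{y}) = O(n)$ by the exchangeable-pairs method, controlling single-coordinate flips via $\norm{A}_\infty\le1$.

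\textbf{Main obstacle.} Steps 1 and 2 are routine; the difficulty is concentrated in Step 3. The expectation lower bound is exactly where $\norm{A}_F^2\ge cn$ is indispensable---without it $\mathrm{tr}(W)$ could be $o(n)$---and it rests on the uniform positive-definiteness of the Ising covariance, which must be established from the high-temperature structure rather than assumed. Equally delicate is the variance bound: the naive Efron--Stein estimate on single flips only gives $O(n^2)$, since each flip perturbs $\vec{y}^\top W\vec{y}$ by $O(\sqrt{n})$, and obtaining the needed $O(n)$ requires the cancellations captured by Chatterjee's exchangeable-pairs argument (in the spirit of Lemmas~\ref{lem:conc1} and~\ref{lem:conc2}). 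This is the crux of the proof.
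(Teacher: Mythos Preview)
Your Steps 1 and 2 are correct and reach the same reduction as the paper: everything comes down to showing $\frac{1}{n}\norm{FA\vec{y}}_2^2 \ge c_2$ with probability $1-o(1)$. (The paper uses a Schur-complement factorization of $\det(G-\lambda I)$ rather than your case split on $\abss{s}$, but the two routes are equivalent.)

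The genuine gap is in Step 3, at the expectation lower bound. You need $\lambda_{\min}(\Sigma)\ge c_0$ for the \emph{full} Ising covariance matrix $\Sigma$, i.e., $\mathrm{Var}(v^\top\vec{y})\ge c_0$ uniformly over unit vectors $v\in\mathbb{R}^n$. This is not a consequence of $\norm{A}_\infty\le 1$ and $\abss{\beta}<B$ by any standard argument. Conditioning on $\vec{y}_{-i}$ only gives $\mathrm{Var}(v^\top\vec{y})\ge c\,v_i^2$, hence $\mathrm{Var}(v^\top\vec{y})\ge c\max_i v_i^2\ge c/n$, which is far too weak; upgrading this to a dimension-free bound would require either a bounded chromatic number of the interaction graph (not assumed) or a new anti-concentration theorem for high-temperature Ising models that you would have to prove from scratch. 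The paper avoids this entirely: rather than lower-bounding the unconditional expectation through the global covariance, it picks for each $i$ a single index $h(i)$ (via a greedy row-selection on $FA$) and lower-bounds the \emph{conditional} expectation $\mathbb{E}[(FA\vec{y})_i^2\mid \vec{y}_{-h(i)}]$ pointwise by $\tfrac{1}{2}e^{-(B+dM\Theta)}(FA)_{ih(i)}^2$, using only that the conditional law of one spin is non-degenerate. The combinatorial Lemma~\ref{lem:sum-of-maxima-lb} then turns $\norm{A}_F^2\ge cn$ into $\sum_i(FA)_{ih(i)}^2=\Omega(n)$, so the sum of conditional expectations is $\ge Cn$ \emph{deterministically}. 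The exchangeable-pairs variance bound (Lemma~\ref{lem:var-bound-strongconcavity}) is then applied not to $\norm{FA\vec{y}}_2^2$ itself but to the difference $\norm{FA\vec{y}}_2^2-\sum_i\mathbb{E}[(FA\vec{y})_i^2\mid\vec{y}_{-h(i)}]$, which is engineered so that each summand has zero conditional mean on the coordinate $h(i)$ and the method goes through.

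A secondary issue: your direct variance bound $\mathrm{Var}(\vec{y}^\top W\vec{y})=O(n)$ with $W=AFA$ also needs care. A single-spin flip changes $\vec{y}^\top W\vec{y}$ by roughly $4\abss{(W\vec{y})_j}$, and the exchangeable-pairs argument wants row-sum control on $W$; but $\norm{AFA}_\infty=O(1)$ does not follow from $\norm{A}_\infty\le 1$. The paper has to handle the analogous issue for $\norm{FA}_\infty$ separately (Lemma~\ref{lem:sumsmall} and Corollary~\ref{cor:fa_inftynorm}) by discarding a small fraction of heavy rows, and your version would require at least as much.
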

\begin{proof}
	We have
	\[
	-H:=G:=
	\left(
	\begin{array}{cc}
	\frac{1}{n}X^{\top}X & \frac{1}{n}X^{\top}\vec{m}\\
	\frac{1}{n}\vec{m}^{\top}X &\frac{1}{n}\norm{\vec{m}}_2^2
	\end{array}
	\right).\]
	Recall our notation that $Q = \frac{1}{n}X^T X$.
	By using the properties of Schur complement, we get that
	\begin{equation}
	\det\left(G-\lambda I\right) = \det \left(Q - \lambda I\right) \det \left(\frac{1}{n}\vec{m}^{\top}\left(I - \frac{1}{n}X\left(Q - \lambda I \right)^{-1}X^{\top}\right)\vec{m}-\lambda\right).
	\end{equation}
	Therefore the minimum eigenvalue of $G$ is at least a positive constant as long as the minimum eigenvalues of \[Q \textrm{ and }\frac{1}{n}\vec{m}^{\top}\left(I - \frac{1}{n}XQ^{-1}X^{\top}\right)\vec{m}\] are positive constants independent of $n$.
	Recall from our assumptions in Table \ref{table:assumptions}, we have that $\lambda_{\min}(Q) \ge c_1$ always where $c_1$ is a positive constant independent of $n$. Hence, it remains to show that \[\lambda_{\min}\left(\frac{1}{n}\vec{m}^{\top}\left(I - \frac{1}{n}XQ^{-1}X^{\top}\right)\vec{m}\right) \ge c_2\] for a positive constant $c_2$ with high probability.
	Recalling that $F =  I - X(X^{\top}X)^{-1}X^{\top}$ our goal is to show that \[ \norm{F\vec{m}}_2^2 \ge c_2 n \; \;\text{ with probability $1-o(1)$}. \]
	Note that $F$ has the property that $F^2 = F$ (i.e. is idempotent) and hence all the eigenvalues of $F$ are $0,1$ (since is of rank $n-d$, it has $d$ eigenvalues zero and $n-d$ eigenvalues one). Moreover, from the sub-multiplicativity of the spectral norm, it holds that
	\begin{align}
	\norm{FA}_2 \leq \norm{F}_2 \times \norm{A}_2 \leq 1. \label{eq:fa2}
	\end{align}
	We also have that $\norm{FA}_{F}^2$ is $\Omega(n)$. This is because $\sigma_i (FA) \geq \sigma_{d+i}(A) \sigma_{n-d-i+1}(F) = \sigma_{d+i}(A)$ for $1\leq i\leq n-d$ ($\sigma_i(G)$ denotes the $i$-th largest eigenvalue of $G$). Since $\sigma_{\max}(A) \leq 1$ it follows that $\norm{FA}_{F}^2 \geq \norm{A}_F^2 - d$.
	
	Below, we provide an important lemma that will be used to show that $\norm{F\vec{m}}_2^2$ is $\Omega(n)$ with high probability.
	\begin{lemma}\label{lem:oneindex} Let $W$ be an $n\times n$ matrix. Fix a pair of indices $i,j$. It holds that
		\[\mathbb{E}_{\theta_0,\beta_0}[(W_i\vec{y})^2 | \vec{y}_{-j}] \geq \frac{e^{-(B+d\cdot M \cdot \Theta)}}{2} W_{ij}^2,\] where $W_i$ is the $i$-th row of $W$.
	\end{lemma}
	\begin{proof}
		For any realization of $\vec{y}$, consider the two summands $\left(\sum_{t\neq j} W_{it}y_t + W_{ij}\right)^2$ and $\left(\sum_{t\neq j} W_{it}y_t - W_{ij}\right)^2$.
		It is clear that if both $\sum_{t\neq j} W_{it}y_t, W_{ij}$ have same sign then the first term is at least $W_{ij}^2$ and if they have opposite sign then the second term is at least $W_{ij}^2$. Additionally, at all times both terms are non-negative.
		
		Moreover from above we have
		\begin{align*}
		\mathbb{E}_{\theta_0,\beta_0}\left[(W\vec{y})_i^2 | \vec{y}_{-j}\right] &\geq W_{i\hat{j}}^2 \cdot \min \left(\Pr\left[y_{j}=+1|\vec{y}_{-j}\right],\Pr\left[y_{j}=-1|\vec{y}_{-j}\right]\right) \\&\geq \frac{W_{ij}^2}{2} \exp\left(-\left|2\theta_0^{\top}\vec{x}_{\hat{j}}+\beta_0 \sum_{t \neq j}A_{jt}y_j\right|\right)\\&\geq \frac{W_{ij}^2}{2} \exp(-\Theta M d - B)
		\end{align*}
		where the inequality before the last holds because of (\ref{eq:costas logistic}) and last inequality by assumption.
	\end{proof}
	
	To proceed, we define an index selection procedure below which will be useful in the later part of the proof.
	\paragraph{An Index Selection Procedure:} Given a matrix $W$, we define $h: [n]\to [n]$ as follows. Consider the following iterative process. At time $t=0$, we start with the $n\times n$ matrix, $W^1 = W$. At time step $t$ we choose from $W^t$ the row with maximum $\ell_2$ norm (let $i_t$ the index of that row, ties broken arbitrarily) and also let $j_t = \textrm{argmax}_j |W^t_{i_tj}| $ (again ties broken arbitrarily). We set $h(i_t) = j_t$ and $W^{t+1}$ is $W^t$ by setting zeros the entries of $i_t^{th}$ row and column $j_t^{th}$. We run the process above for $n$ steps to define the \textbf{bijection} $h$. Below we prove the following lemma.
	\begin{lemma}
		\label{lem:sum-of-maxima-lb}
		Assume that $\norm{FA}_{\infty} \leq c_{\infty}'$ for some positive constant $c_{\infty}'$. We run the process described above on $FA$ and get the function $h$. It holds that
		\[  \sum_i |(FA)_{ih(i)}|^2  \ge \frac{c^4n}{16}.\]
	\end{lemma}
	\begin{proof}
		Let $A^i$ be the $i$-th column of $A$. It is clear that $F \cdot A^i$ corresponds to the $i$-th column of $FA$. Since $F$ has eigenvalues 0,1, we get that $\norm{FA^i}_2^2 \leq \norm{A^i}_2^2 \leq \norm{A^i}_{\infty}\norm{A^i}_{1} \leq 1$ (a) since $A$ is symmetric and hence $\norm{A}_{1} = \norm{A}_{\infty} \leq 1$.
		Above we used the fact that for any $n$-dimensional vector $\vec{u}$, from Holder's inequality,
		\[ \norm{\vec{u}}_2^2 \le \norm{\vec{u}}_{\infty} \norm{\vec{u}}_1. \]
		
		Also let $(FA)_i$ be the $i$-th row of $FA$. Since $\norm{FA}_{\infty} \leq 1$ it holds that $\norm{(FA)_i}_2^2 \leq \norm{FA}_{\infty} \leq 1$ (b).
		
		We run the process described in the previous paragraph on $FA$ and let $W^t$ be the matrix at time $t$ (with $W^1 = FA$). Let $i_1,...,i_n$ be the ordering of the indices of the rows the process chose. It is clear that at every step we remove a column and a row from the matrix, the frobenius norm is decreased by at most two (using facts (a) and (b)).
		
		Hence by the definition of the process we have that
		\[\norm{W^t_{i_t} }_2^2 \geq \frac{\norm{A}_F^2 - 2(t-1)}{n-t-1} \geq \frac{cn - 2t+2}{n} = c - \frac{2(t-1)}{n}.\]
		We set $T = \lfloor cn/2 \rfloor$ and we get
		\begin{align}
		\sum_{t=1}^n \norm{W^t_{i_t}}_2^2 &\geq  \sum_{t=1}^T \norm{W^t_{i_t}}_2^2 \\&\geq
		cT - \frac{2(T-1)T}{2n} \geq cT - \frac{T^2}{n} \approx \frac{c^2 n}{4}
		\end{align}

		Therefore we have that (observe that $1 \geq \norm{(FA)_{j}}_{1} \geq \norm{W^t_{j}}_1$ for all $t,j\in [n]$, i.e., the $\ell_1$ norm of each row does not increase during the process and same is true for $\ell_{\infty}$)
		\begin{align}
		&\frac{c^2 n}{4} \le \sum_{t=1}^n \norm{W^t_{i_t}}_2^2 \le \sum_{i=1}^n \norm{W^t_{i_t}}_{\infty}\norm{W^t_{i_t}}_1 \le  \sum_{i=1}^n \norm{W^t_{i_t}}_{\infty}. \\
		\implies &\frac{c^4n^2}{16} \le \left(\sum_{t=1}^n \norm{W^t_{i_t}}_2^2\right)^2 \le n\sum_{t=1}^n \norm{W^t_{i_t}}_{\infty}^2 \\
		\implies &\sum_{t=1}^n \norm{W^t_{i_t}}_{\infty}^2 \ge \frac{c^4n}{16}.
		\end{align}
		Finally, it holds that $\sum_{t=1}^n \norm{(FA)_t}_{\infty}^2 \geq \sum_{t=1}^n \norm{W^t_{i_t}}_{\infty}^2$ and the claim follows.
	\end{proof}
	
	\begin{corollary}
		\label{cor:cond-expectation}
		$$\sum_{i=1}^n \mathbb{E}_{\theta_0,\beta_0}\left[(F\vec{m})_i^2 | \vec{y}_{-h(i)}\right] \geq Cn,$$ for some positive constant $C$.
	\end{corollary}
	\begin{proof}
		It holds by Lemmas \ref{lem:oneindex} and \ref{lem:sum-of-maxima-lb} by choosing for each $i$, index $h(i)$.
	\end{proof}
	\begin{lemma}[Bounding the variance]
		\label{lem:var-bound-strongconcavity}
		It holds that
		\[\mathbb{E}_{\theta_0,\beta_0}\left[\left(\sum_{i=1}^n (F\vec{m})^2_i - \sum_{i=1}^n \mathbb{E}_{\theta_0,\beta_0}\left[(Fm)_i^2 | \vec{y}_{-h(i)}\right] \right)^2\right] \le 48n+16Bn. \]
	\end{lemma}
	\begin{proof}
		For each $i$, we expand the term $\mathbb{E}_{\theta_0,\beta_0}\left[(F\vec{m})_i^2 | \vec{y}_{-h(i)}\right]$ and we get
		$\mathbb{E}_{\theta_0,\beta_0}\left[(F\vec{m})_i^2 | \vec{y}_{-h(i)}\right] = (\sum_{j \neq h(i)}(FA)_{ij}y_j)^2 + (FA)_{ih(i)}^2 + 2(\sum_{j \neq h(i)}(FA)_{ij}y_j) \tanh(\beta_0 m_{h(i)}(\vec{y})+ \theta_0^{\top}\vec{x}_{h(i)})$. We set $z_{it}(\vec{y}) = 2(\sum_{j \neq t}(FA)_{ij}y_j)$ and we get that the expectation we need to bound is equal to
		\[\mathbb{E}_{\theta_0,\beta_0}\left[\left(\sum_i z_{ih(i)}(\vec{y})y_{h(i)} - z_{ih(i)}(\vec{y})\tanh\left(\beta_0 m_{h(i)}(\vec{y})+ \theta_0^{\top}\vec{x}_{h(i)}\right) \right)^2\right].\]
		First it holds that $\frac{\partial z_{it}}{y_j} = 2(FA)_{ij}$ and $\frac{\partial z_{it}}{y_t} = 0$. Also if $\norm{FA}_{\infty} \leq 1$ it trivially holds that $|z_{it}| \leq 2$. We set \[Q(\vec{y}) = \sum_{i}(y_{h(i)}-\tanh(\beta_0m_{h(i)}(\vec{y})+\theta^{\top}_0 \vec{x}_{h(i)}))z_{ih(i)}(\vec{y}),\]
		hence we get that
		\begin{align}
		&\frac{\partial Q(\vec{y})}{\partial y_j} = \sum_{i} \left(\vec{1}_{h(i)=j} - \frac{\beta_0 A_{h(i)j}}{\cosh^2(\beta_0 m_{h(i)}(\vec{y})+\theta_0^{\top}\vec{x}_{h(i)})}\right)z_{ih(i)}(\vec{y}) \\
		&~~~~~~~~~~~+ \left(y_{h(i)} -\tanh (\beta_0 m_{h(i)}(\vec{y})+\theta_0^{\top}\vec{x}_{h(i)})\right)\frac{\partial z_{ih(i)}(\vec{y})}{\partial y_j}.
		\end{align}
		We will bound the absolute value of each summand. First from above we can bound the second term as follows
		\begin{equation}\label{eq:sum2l}
		\left|(y_{h(i)} -\tanh (\beta_0 m_{h(i)}(\vec{y})+\theta_0^{\top}\vec{x}_{h(i)}))\frac{\partial z_{ih(i)}(\vec{y})}{\partial y_j}\right| \leq 4 \abss{(FA)_{ij}}.
		\end{equation}
		Using the fact that $\frac{1}{\cosh^2(x)} \leq 1$ it also follows that
		\begin{equation}\label{eq:sum1l}
		\left|\sum_{i} \left(\vec{1}_{h(i)=j} - \frac{\beta_0 A_{h(i)j}}{\cosh^2(\beta_0 m_{h(i)}(\vec{y})+\theta_0^{\top}\vec{x}_{h(i)})}\right)z_{ih(i)}(\vec{y})\right| \leq \sum_{i} \vec{1}_{h(i)=j}|z_{ih(i)}(\vec{y})| + \sum_{i\neq j} \abss{\beta_0 A_{h(i)j} z_{ih(i)}(\vec{y})},
		\end{equation}
		which is at most 2$\sum_{i} \vec{1}_{h(i)=j} + 2B$.
		
		Using (\ref{eq:sum2l}) and  (\ref{eq:sum1l}) it follows that $\left|\frac{\partial Q(\vec{y})}{\partial y_j}\right|  \leq 4\sum_{i, h(i)\neq j} \abss{(FA)_{ij}} +2 \sum_{i} \vec{1}_{h(i)=j} + 2B$.
		We have all the ingredients to complete the proof. Let $\vec{y^j} = (\vec{y}_{-j}, -1)$. We first observe that
		\begin{equation}
		\sum_i \mathbb{E}_{\theta_0,\beta_0}[(y_{h(i)} - \tanh (\beta_0 m_{h(i)}(\vec{y})+\theta_0^{\top}\vec{x}_{h(i)}))Q(\vec{y^{h(i)}})z_{ih(i)}(\vec{y})]=0,
		\end{equation}
		since
		\begin{equation}
		\begin{array}{cc}
		\mathbb{E}_{\theta_0,\beta_0}[(y_{h(i)} - \tanh (\beta_0 m_{h(i)}(\vec{y})+\theta_0^{\top}\vec{x}_{h(i)}))Q(\vec{y^{h(i)}})z_{ih(i)}(\vec{y})]= \\= \mathbb{E}_{\theta_0,\beta_0}[\mathbb{E}[(y_{h(i)} - \tanh (\beta_0 m_{h(i)}(\vec{y})+\theta_0^{\top}\vec{x}_{h(i)}))Q(\vec{y^{h(i)}})z_{ih(i)}(\vec{y})| \vec{y}_{-h(i)}]] =0.
		\end{array}
		\end{equation}
		Therefore it follows
		\begin{align*}
		\mathbb{E}_{\theta_0,\beta_0}[Q^2(\vec{y})] &= \mathbb{E}_{\theta_0,\beta_0}\left[Q(\vec{y}) \cdot \left(\sum_{i} (y_{h(i)} -\tanh (\beta_0 m_i(\vec{y})+\theta_0^{\top}\vec{x}_i))z_{ih(i)}(\vec{y})\right)\right] \\& = \mathbb{E}_{\theta_0,\beta_0}\left[\sum_{i} \left(Q(\vec{y}) (y_{h(i)} -\tanh (\beta_0 m_i(\vec{y})+\theta_0^{\top}\vec{x}_i))z_{ih(i)}(\vec{y})\right)\right] \\& = \sum_{i} \mathbb{E}_{\theta_0,\beta_0}\left[(Q(\vec{y}) - Q(\vec{y^{h(i)}})) \cdot  (y_{h(i)} -\tanh (\beta_0 m_{h(i)}(\vec{y})+\theta_0^{\top}\vec{x}_{h(i)}))z_{ih(i)}(\vec{y})\right] \\&
		= \sum_{i} \mathbb{E}_{\theta_0,\beta_0}\left[(Q(\vec{y}) - Q(\vec{y^{i}})) \cdot  (y_{i} -\tanh (\beta_0 m_i(\vec{y})+\theta_0^{\top}\vec{x}_i))z_{h^{-1}(i)i}(\vec{y})\right] \\&
		\leq 8\sum_i \left[\sum_{t,h(t)\neq i} 4|(FA)_{ti}|+2\sum_t \vec{1}_{h(t)=i}+ 2B\right]\\&
		\leq    48n+16Bn,
		\end{align*}
		where we also used the fact that $\sum_{i=1}^n \sum_j (FA)_{ij} \leq \sum_i \norm{FA}_{\infty} \leq n$.
	\end{proof}
	\begin{remark} The proof of Lemma \ref{lem:sum-of-maxima-lb} depends on the assumption that $\norm{FA}_\infty \leq c'_{\infty}$ for some positive constant $c_{\infty}'$. This assumption is not necessary for the proof to go through and below we will argue about removing this assumption.
	\end{remark}
	\begin{lemma}\label{lem:sumsmall} It holds that $\sum_{i,j} |F_{ij}| \leq n(d+1)$.
	\end{lemma}
	\begin{proof}
		Since $F$ has rank $n-d$ and has eigenvalues 0,1 it holds that $F = I - \sum_{i=1}^d \vec{v}^i\vec{v}^{i\; \top}$ with $\norm{\vec{v}^i}_2 =1$. We choose vector $\vec{x}$ so that $x_j=1$ if $v^i_j\geq 0$ and $x_j=-1$ otherwise. It holds that $(\sum_{j} |v^i_{j}|)^2  = \vec{x}^{\top} \vec{v}^i \vec{v}^{i \; \top} \vec{x} \leq \norm{\vec{x}}^2_2 = n$. Thus we get that $\sum_{j}\sum_t |v^i_j| \cdot |v^i_t| \leq n$. Thefore we get that \[\sum_{i,j} |F_{i,j}| \leq n+ \sum_{i=1}^d n = n(d+1).\]
	\end{proof}
	\begin{corollary}
		\label{cor:fa_inftynorm}
		We can drop the assumption $\norm{FA}_{\infty}$ is $O(1)$.
	\end{corollary}
	\begin{proof} From Lemma \ref{lem:sumsmall} we get that for every $\epsilon>0$, there exists at most $nd/\epsilon$ rows of $FA$ that have $\ell_1$ norm larger than $\epsilon$ (by Markov's inequality), hence there exist $n(1-d/\epsilon)$ rows that have $\ell_1$ norm at most $\epsilon$. Moreover, since $\norm{FA}_{2} \leq 1$ \eqref{eq:fa2} it holds that each row $j$ of $FA$ (denote by $(FA)_j$) has $\norm{(FA)_j}_2^2 \leq 1$. The trick is to remove from matrix $FA$ all the rows that have $\ell_1$ norm larger than $\epsilon$ and we reduce the Frobenius norm squared of $FA$ by $nd/\epsilon$. By choosing $\epsilon$ to be $\frac{2nd}{\norm{FA}_F^2}$, the resulting matrix (after removing the ``overloaded" rows) has still Frobenius norm squared $\Omega(n)$ and moreover all the rows have $\ell_1$ norm at most $\epsilon$ (which is $\Theta(d)$). The claim follows since the rows we are removing cannot increase the expression $\sum_i (FA\vec{y})^2_i$ (sum of squares). This gives $d^2\sqrt{\frac{d}{n}}$ consistency (without the assumption $\norm{FA}_{\infty}$ is $O(1)$) and when the dot product between $\theta$ and feature vectors is $O(1)$.
	\end{proof}

Finally we are ready to complete the proof by showing that $\norm{F\vec{m}}_2^2 \ge c_2n$ with probability $\ge 1-\delta$ for some $\delta \le 1/3$ and a small enough constant $c_2$. From Lemma \ref{lem:var-bound-strongconcavity} and Markov's inequality, we have
\begin{align}
&\Pr\left[ \left(\sum_{i=1}^n (F\vec{m})^2_i - \sum_{i=1}^n \mathbb{E}_{\theta_0,\beta_0}\left[(Fm)_i^2 | \vec{y}_{-h(i)}\right] \right)^2 \ge n^{1.2}\right] \le \frac{48+16B}{n^{0.2}} \\
\implies &\Pr\left[ \sum_{i=1}^n (F\vec{m})^2_i  \le Cn - n^{0.6}\right] \le \frac{48+16B}{n^{0.2}} \label{eq:sc11}\\
\implies &\Pr\left[ \sum_{i=1}^n (F\vec{m})^2_i  \ge Cn/2\right] \ge 1-o(1),
\end{align}
where in \eqref{eq:sc11} we used that $\sum_{i=1}^n \mathbb{E}_{\theta_0,\beta_0}\left[(F\vec{m})_i^2 | \vec{y}_{-h(i)}\right] \geq Cn$ (Corollary \ref{cor:cond-expectation}).

\end{proof}

\subsection{Completing the Proof}
\label{sec:logistic-completing-proof}
With the above results in hand, we can prove the main result of this Section, Theorem~\ref{thm:logistic}.\\
\begin{prevproof}{Theorem}{thm:logistic}
	First it holds that:
	\begin{align}
	& \mathbb{E}_{\theta_0,\beta_0}[\norm{\nabla LPL(\theta_0,\beta_0) }^2_{2}]\\
	&= \sum_{k=1}^d\mathbb{E}_{\theta_0,\beta_0} \left[\frac{1}{n}\sum_{i=1}^n \left(x_{i,k}y_i - x_{i,k}\tanh (\beta_0 m_i(\vec{y})+\theta_0^{\top}\vec{x}_i) \right)\right]^2 \label{eq:logistic-grad-bound1}\\
	& \: \: + \mathbb{E}_{\theta_0,\beta_0} \left[\frac{1}{n}\sum_{i=1}^n \left(y_i m_i(\vec{y})- m_i(\vec{y})\tanh (\beta_0 m_i(\vec{y})+\theta_0^{\top}\vec{x}_i) \right)\right]^2 \label{eq:logistic-grad-bound2}.
	\end{align}
	
	From Lemmas \ref{lem:conc1} and \ref{lem:conc2} we have that
	\begin{align}
	\mathbb{E}_{\theta_0,\beta_0}[\norm{\nabla LPL(\theta_0,\beta_0) }^2_{2}] \leq \frac{c}{n}
	\end{align}
	for some constant $c$. By Markov's inequality,
	we get that
	\begin{align}
	\Pr \left[\norm{\nabla LPL(\theta_0,\beta_0) }^2_{2} \le  \frac{c}{n\delta}  \right] \ge 1- \delta.
	\end{align}
	for any constant $\delta$.
	Next, we have from Lemma~\ref{lem:hessian-lb} that, $\min_{(\theta, \beta) \in \mathbb{B}} \lambda_{\min}\left(-H_{(\theta,\beta)}\right) \ge C$ for some constant $C$ independent of $n$.
	Plugging into Lemma~\ref{lem:logistic-consistency}, we get that
	\begin{align}
	\norm{(\theta_{\mathcal{D}} - \theta_0,\beta_{\mathcal{D}}-\beta_0)}_2  = \mathcal{D} \norm{(\hat{\theta}_{MPL} - \theta_0,\hat{\beta}_{MPL}-\beta_0)}_2 \le \frac{\norm{\nabla LPL(\theta_0,\beta_0) }_2}{\min_{(\theta, \beta) \in \mathbb{B}} \lambda_{\min}\left(-H_{(\theta,\beta)}\right)} \label{eq:lc5}
	\end{align}
	Now we have from the above that $\norm{(\theta_{\mathcal{D}} - \theta_0,\beta_{\mathcal{D}}-\beta_0)}_2 \to 0$ as $n \to \infty$ (is of order $\frac{1}{\sqrt{n}}$). Also note that any point on the boundary of $\mathbb{B}$ has a fixed distance to $(\theta_0,\beta_0)$ since it lies in the interior. Hence $\norm{(\theta_{\mathcal{D}} - \theta_0,\beta_{\mathcal{D}}-\beta_0)}_2 \to 0$ implies that $\mathcal{D} \to 1$ which in turn implies that $\mathcal{D} \ge 1/2$ for sufficiently large $n$. Hence
	\begin{align}
	 \eqref{eq:lc5} \implies \norm{(\hat{\theta}_{MPL} - \theta_0,\hat{\beta}_{MPL}-\beta_0)}_2 &\le \frac{2\norm{\nabla LPL(\theta_0,\beta_0) }_2}{\min_{(\theta, \beta) \in \mathbb{B}} \lambda_{\min}\left(-H_{(\theta,\beta)}\right)} \\
	 &\le O_d\left(\frac{1}{\sqrt{n}}\right)
	\end{align}
	with probability $\ge 1-\delta$.
\end{prevproof}

%

\section{Linear Regression with Dependent Observations}\label{sec:linear}
In this section we focus on linear regression under weakly dependent errors. As opposed to Logistic regression, in linear regression the log-likelihood is computationally tractable.
\subsection{Our Model}
We recall the model for dependent observations we consider.
\noindent In our setting, we have that the errors $\epsilon_i = y_i - \theta x_i$ are distributed according to a Gaussian graphical model. Since each $\epsilon_i$ is zero mean, we have that $\mu = 0$ in our case.
Also, similar to the logistic regression setting, we will assume that we have complete knowledge of the graph structure up to a scaling factor. That is, $(\Sigma)^{-1} = \beta A + D$ where the matrix $A$ is a known symmetric matrix with $A_{ii} = 0$ and $D = [ d_1 \ldots d_n]^T$ is a known diagonal matrix with positive entries. Hence the probability distribution of the observations is:
\begin{align}
\Pr[\vec{y} = \vec{a}] = \frac{\exp\left(- \frac{1}{2}(\vec{a}-\theta^{\top}\vec{x})^{\top} \Sigma^{-1} (\vec{a} - \theta^{\top}\vec{x}) \right)}{(2\pi)^{n/2} \det(\Sigma)^{1/2}}
\end{align}

This section is devoted to showing that the Maximum Likelihood Estimator (MLE) under appropriate (over)re-parametrization - (the new parameter vector will be $(\theta,\beta,\kappa)$ where $\theta, \beta$ remain same after reparametrization and $\kappa = \beta.\theta$) - is $\sqrt{n}$ consistent in our linear regression model. We set $\mathbb{B} = [-\Theta,\Theta]^d \times \left[-B , B\right] \times \left[- \Theta B,  \Theta B\right]^d$ (the set of which the parameters should be in the interior and $B$ is defined in Table \ref{table:assumptions}). Formally we prove the following theorem.

\begin{theorem}[Main Linear]
	\label{thm:linear}
Assume that
\begin{enumerate}
\item Feature matrix $X$\footnote{$\vec{x}_i$ can be subgaussian.} with covariance matrix $Q = \frac{1}{n} X^{\top}X$ having $\lambda_{\min}(Q), \lambda_{\max}(Q)$ as positive constants.
\item $\norm{A}_{2}$ is $\Theta(1)$, $\norm{A}^2_F$ is $\Omega(n)$ and $\lambda_{\min}\left(\frac{1}{n} X^{\top}A^{\top}(I - DX(X^{\top}D^2X)^{-1}X^{\top}D) AX\right)$ is $\Theta(1)$.
\end{enumerate} 	
We show that the Maximum Log-likelihood Estimate (MLE) $(\hat{\theta},\hat{\beta},\hat{\kappa})$ is $O_d\left({\sqrt{1 \over n}}\right)$ consistent as long as the true parameter vector $(\theta_0,\beta_0, \kappa_0) \in \mathbb{B}$ (in the interior of $\mathbb{B}$), i.e., for each $\delta>0$ and $n$ sufficiently large,
	$\norm{(\hat{\theta},\hat{\beta},\hat{\kappa}) - (\theta_0,\beta_0, \kappa_0)}_2$ is $O_d\left({\sqrt{1 \over n}}\right)$ with probability $1-\delta$. Moreover, we can compute a vector $(\tilde{\theta},\tilde{\beta})$ with $\norm{(\hat{\theta},\hat{\beta}) - (\tilde{\theta},\tilde{\beta})}_2$ to be $O_d\left({\sqrt{1 \over n}} \right)$ in $O(\ln n)$ iterations of projected gradient descent\footnote{Each iteration is polynomial time computable} with probability $1-o(1)$.
\end{theorem}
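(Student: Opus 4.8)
The plan is to follow the same two-ingredient strategy as in the logistic case (Lemma~\ref{lem:logistic-consistency}), but the crux is first to exhibit the (over)re-parametrization $\kappa=\beta\theta$ that convexifies the negative log-likelihood. Write $K=K(\beta):=\beta A+D$ and $\vec r=\vec y-X\theta$; the data part of the negative log-likelihood is $\tfrac12\vec r^\top K\vec r=\tfrac12(K\vec r)^\top K^{-1}(K\vec r)$, and
\[
K\vec r=(\beta A+D)\vec y-(\beta A+D)X\theta=(\beta A+D)\vec y-DX\theta-AX(\beta\theta).
\]
Hence, defining the \emph{affine} (in $(\theta,\beta,\kappa)$) vector $\vec s:=(\beta A+D)\vec y-DX\theta-AX\kappa$ and the normalized objective
\[
\tilde\ell(\theta,\beta,\kappa):=\tfrac1n\left[\tfrac12\vec s^\top K^{-1}\vec s-\tfrac12\log\det K\right],
\]
we see that $\tilde\ell$ coincides with the normalized negative log-likelihood on the surface $\kappa=\beta\theta$, which contains the truth $(\theta_0,\beta_0,\kappa_0)$ with $\kappa_0=\beta_0\theta_0$. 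Now $(\vec s,K)\mapsto\vec s^\top K^{-1}\vec s$ is jointly convex on $\mathbb R^n\times\mathbb S^n_{++}$ (its sublevel sets $\{\vec s^\top K^{-1}\vec s\le t\}$ are convex, being equivalent to positive semidefiniteness of the associated Schur complement), and $-\log\det K$ is convex; composing with the affine maps $\vec s(\cdot)$ and $K(\cdot)$ makes $\tilde\ell$ jointly convex wherever $K\succ0$, which the $\rho_{\min},\rho_{\max}$ assumptions guarantee on all of $\mathbb B$. We therefore take the estimator $(\hat\theta,\hat\beta,\hat\kappa)$ to be the minimizer of the convex $\tilde\ell$ over $\mathbb B$.

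Next I bound the gradient at the truth. At $(\theta_0,\beta_0,\kappa_0)$ we have $\vec s=K\vec\epsilon$ with $\vec\epsilon\sim\mathcal N(\vec 0,K^{-1})$, so $K^{-1}\vec s=\vec\epsilon$ and the $\theta$- and $\kappa$-blocks of $\nabla\tilde\ell$ reduce to the \emph{linear} statistics $-\tfrac1n(DX)^\top\vec\epsilon$ and $-\tfrac1n(AX)^\top\vec\epsilon$, while the $\beta$-component is a quadratic form in $\vec\epsilon$ whose deterministic part is cancelled by $\tfrac{\partial}{\partial\beta}\!\left(-\tfrac12\log\det K\right)=-\tfrac12\mathrm{tr}(K^{-1}A)$. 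Thus every component has zero mean, and using Lemma~\ref{lem:quadratic} for the quadratic $\beta$-term and a direct computation for the linear terms---together with $\lambda_{\max}(Q),\rho_{\max}=O(1)$ and $\|A\|_2^2,\|A\|_F^2=O(n)$---each has variance $O(1/n)$. Hence $\mathbb E\|\nabla\tilde\ell(\theta_0,\beta_0,\kappa_0)\|_2^2=O_d(1/n)$, and Markov's inequality gives $\|\nabla\tilde\ell(\theta_0,\beta_0,\kappa_0)\|_2=O_d(1/\sqrt n)$ with probability $1-\delta$.

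Strong convexity is the step I expect to be the main obstacle: I need a uniform bound $\lambda_{\min}(\nabla^2\tilde\ell)\ge c>0$ over $\mathbb B$. Write $\nabla^2\tilde\ell=H_{\mathrm{frac}}+H_{\log\det}$, where $H_{\log\det}$ is supported only on the $\beta\beta$-entry. The $(\theta,\kappa)$-block comes entirely from $H_{\mathrm{frac}}$ and equals $\tfrac1n J^\top K^{-1}J$ with $J=[\,DX\ \ AX\,]$; since $K^{-1}\succeq\rho_{\min}I$ this is $\succeq\tfrac{\rho_{\min}}{n}J^\top J$, and
\[
J^\top J=\begin{pmatrix} X^\top D^2X & X^\top DAX\\ X^\top ADX & X^\top A^2X\end{pmatrix}
\]
has Schur complement (w.r.t.\ its top-left block) exactly $(AX)^\top\!\big(I-DX(X^\top D^2X)^{-1}X^\top D\big)AX$, the matrix in the third assumption of Theorem~\ref{thm:linear}, with minimum eigenvalue $\Theta(n)$, while the top-left block is $\succeq\lambda_{\min}(D)^2X^\top X\succeq\Theta(n)I$ by $\lambda_{\min}(Q)=\Theta(1)$; hence the $(\theta,\kappa)$-block is $\succeq c_0 I$. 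For the $\beta$ direction I use Schur-complement monotonicity: the Schur complement of the full $H$ onto the $\beta$-coordinate equals $[H_{\log\det}]_{\beta\beta}$ plus the Schur complement of the \emph{PSD} matrix $H_{\mathrm{frac}}$ onto $\beta$, the latter being $\ge0$; so it is at least $[H_{\log\det}]_{\beta\beta}=\tfrac1{2n}\mathrm{tr}(K^{-1}AK^{-1}A)\ge\tfrac{\rho_{\min}^2}{2n}\|A\|_F^2=\Theta(1)$, which is precisely where the lower bound $\|A\|_F^2\ge cn$ is used. Combining the $(\theta,\kappa)$-block bound with this strictly positive Schur complement yields $\lambda_{\min}(\nabla^2\tilde\ell)\ge c$ uniformly on $\mathbb B$.

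Finally I combine the two ingredients exactly as in Lemma~\ref{lem:logistic-consistency} (fundamental theorem of calculus along the segment from $(\theta_0,\beta_0,\kappa_0)$ to the minimizer, with the $\mathcal D\to1$ boundary argument, valid since the truth is interior to $\mathbb B$) to conclude $\|(\hat\theta,\hat\beta,\hat\kappa)-(\theta_0,\beta_0,\kappa_0)\|_2\le\|\nabla\tilde\ell(\theta_0,\beta_0,\kappa_0)\|_2/c=O_d(1/\sqrt n)$ with probability $1-\delta$; in particular the first $d+1$ coordinates recover $\theta_0,\beta_0$ at the claimed rate, so no explicit ``collapse'' of $\hat\kappa$ to $\hat\beta\hat\theta$ is needed for consistency. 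For the algorithmic claim the same estimates show $\tilde\ell$ is $\Theta(1)$-strongly convex and $O(1)$-smooth (bound $\lambda_{\max}(\nabla^2\tilde\ell)$ via $\|A\|_2,\lambda_{\max}(Q),\rho_{\max}=O(1)$), so projected gradient descent onto the box $\mathbb B$ converges linearly and reaches accuracy $O_d(1/\sqrt n)$ in $O(\log n)$ iterations, each computable in polynomial time.
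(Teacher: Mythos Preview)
Your approach is essentially the paper's: your $\tilde\ell$ is literally the reparametrized $-LL/n$ of Section~\ref{sec:reparameter} (expand $\vec s^\top K^{-1}\vec s=\vec y^\top K\vec y-2\vec y^\top\mu_0+\mu_0^\top K^{-1}\mu_0$ with $\mu_0=DX\theta+AX\kappa$ and compare with the log-partition form). The paper sees convexity by recognizing the Hessian as a covariance of sufficient statistics (equation~\eqref{eq:Hessianlinear}); you see it via joint convexity of the matrix-fractional function $(\vec s,K)\mapsto\vec s^\top K^{-1}\vec s$. Both are correct and yours is arguably cleaner. Your gradient-at-truth computation and the smoothness/PGD argument match Lemmas~\ref{lem:boundgrtheta}--\ref{lem:boundgrbeta} and Remark~\ref{rem:smooth2}.

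The one step that needs more is your ``combining'' sentence in the strong-convexity paragraph. Having the $(\theta,\kappa)$-block $\succeq c_0 I$ and the Schur complement onto $\beta$ bounded below by $\gamma>0$ does \emph{not} by itself give $\lambda_{\min}(\nabla^2\tilde\ell)\ge c$: consider $\bigl(\begin{smallmatrix}1&N\\N&N^2+1\end{smallmatrix}\bigr)$, where both the block and the Schur complement equal $1$ yet $\lambda_{\min}\approx N^{-2}$. What saves you here is that the $(\beta,(\theta,\kappa))$ off-diagonal of the normalized Hessian is $-\tfrac1n J^\top K^{-1}AK^{-1}\mu_0$, which is $O_d(1)$ uniformly on $\mathbb B$ by the very norm bounds you later invoke for smoothness. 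Once that is said, the congruence $\nabla^2\tilde\ell=L^{-1}\mathrm{diag}(M_{11},S)L^{-\top}$ with $L=\bigl(\begin{smallmatrix}I&0\\-M_{21}M_{11}^{-1}&1\end{smallmatrix}\bigr)$ yields $\lambda_{\min}\ge\min(c_0,\gamma)/\norm{L}_2^2$, and $\norm{L}_2$ is controlled by the off-diagonal over $c_0$. The paper handles this same cross-term issue differently, via a direct case analysis on the $\beta$-weight $v_1$ in Lemma~\ref{lem:boundeigenvaluelinear} (splitting on $|v_1|\gtrless\sqrt{\epsilon/d}$ and using Lemma~\ref{lem:quadratic} to write $\vec v^\top(-H)\vec v$ as $v_1^2\cdot\tfrac12\mathrm{tr}((\Sigma^{1/2}A\Sigma^{1/2})^2)+\norm{\cdot}^2$). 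Your Schur-complement-additivity observation (that $H_{\log\det}$ lives only on the $\beta\beta$ entry, so the full Schur complement equals that of the PSD $H_{\mathrm{frac}}$ plus $[H_{\log\det}]_{\beta\beta}$) is a nice alternative route, but you still owe the off-diagonal bound to close it.
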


\begin{corollary}[Application to Sherrington-Kirkpatrick (S–K) model]\label{cor:sk} In the Sherrington-Kirkpatrick model \cite{chatterjee24}, we have that $A_{ij} = \frac{g_{ij}}{\sqrt{n}}$ for $i<j$, where $g_{ij} \sim \mathcal{N}(0,1)$ and $A_{ji} = A_{ij}, A_{ii}=0$. From Lemma \ref{lem:SKmodel}, it follows that $A$ satisfies the assumptions of our main theorem, so we can infer $\beta, \theta$ (with a $\sqrt{n}$ rate of consistency).
\end{corollary}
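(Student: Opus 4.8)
The plan is to show that, with probability $1 - o(1)$ over the Gaussian couplings $\{g_{ij}\}_{i<j}$, the random matrix $A$ meets all three structural requirements on the interaction matrix in Theorem~\ref{thm:linear} --- namely $\norm{A}_2 = \Theta(1)$, $\norm{A}_F^2 = \Omega(n)$, and $\lambda_{\min}\!\left(\tfrac1n X^\top A^\top(I - P)AX\right) = \Theta(1)$, where $P = DX(X^\top D^2 X)^{-1}X^\top D$ --- and then to invoke Theorem~\ref{thm:linear} directly (this is the content of Lemma~\ref{lem:SKmodel}). Since $A$ is symmetric with zero diagonal by construction, the only work is the three quantitative bounds, which I would establish separately and combine by a union bound.

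The first two conditions are routine. For $\norm{A}_F^2 = \sum_{i\ne j} g_{ij}^2/n$, the right-hand side is an average of $n(n-1)$ independent $\chi^2_1$ variables with mean $n-1$, so standard sub-exponential concentration gives $\norm{A}_F^2 \ge cn$ w.h.p.\ for any $c < 1$. For $\norm{A}_2$, $A$ is a Wigner matrix with entries of variance $1/n$, so by the semicircle edge its spectral norm concentrates near $2$, hence $\norm{A}_2 = \Theta(1)$ w.h.p. (If the normalization $\norm{A}_2 \le 1$ of Table~\ref{table:assumptions} is wanted literally, rescale $A \mapsto A/2$ and $\beta \mapsto 2\beta$, which leaves the model and the remaining two conditions, being $0$- and $2$-homogeneous in $A$, unchanged up to constants.)

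The substantive step is the minimum-eigenvalue condition. Writing $M = \tfrac1n X^\top A (I-P) A X$ (using $A^\top = A$), I have $\lambda_{\min}(M) = \min_{\norm{v}_2 = 1}\tfrac1n\norm{(I-P)AXv}_2^2$, where $I - P$ is the rank-$(n-d)$ orthogonal projection off $\mathrm{col}(DX)$. I would first compute $\bar M := \mathbb{E}[M]$. A direct second-moment calculation of $\mathbb{E}[ABA]$ for a fixed symmetric $B$ gives $\mathbb{E}[ABA] = \tfrac1n\!\left(B + \mathrm{tr}(B)\,I - 2\,\mathrm{diag}(B)\right)$; taking $B = I - P$ (so $\mathrm{tr}(B) = n - d$ and $\norm{B - 2\,\mathrm{diag}(B)}_2 = O(1)$) yields $\mathbb{E}[A(I-P)A] = \tfrac{n-d}{n} I + E$ with $\norm{E}_2 = O(1/n)$. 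Consequently $\bar M = \tfrac{n-d}{n} Q + \tfrac1n X^\top E X$, and since $\tfrac1n\norm{X}_2^2 = \lambda_{\max}(Q) = \Theta(1)$ the correction has operator norm $O(1/n)$, so $\lambda_{\min}(\bar M) \ge \tfrac{n-d}{n}\lambda_{\min}(Q) - O(1/n) = \Theta(1)$ by the assumption $\lambda_{\min}(Q) > 0$. The key structural point is that applying the random $A$ scatters each fixed vector $Xv$ essentially isotropically in $\mathbb{R}^n$, so projecting off the fixed $d$-dimensional subspace $\mathrm{col}(DX)$ removes only an $O(d)$ fraction of the $\Theta(n)$ energy and leaves $\bar M$ a small perturbation of $Q$.

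It then remains to pass from $\bar M$ to $M$. Fixing a unit $v$, the scalar $v^\top M v = \tfrac1n \norm{(I-P)AXv}_2^2$ is a quadratic form in the standard-Gaussian vector $g = (g_{ij})_{i<j}$, so Lemma~\ref{lem:quadratic} (with $\mu = 0$, $\Sigma = I$) gives its mean $v^\top \bar M v$ and variance $O(1/n)$. I would take a $\tfrac14$-net $\mathcal N$ of $S^{d-1}$ of size at most $9^d$, apply Chebyshev's inequality at each net point to get $\abss{v^\top(M - \bar M)v} \le t$ with probability $1 - O(1/(n t^2))$, union-bound over the $9^d = O_d(1)$ points, and then use the standard bound $\norm{M - \bar M}_2 \le 2\max_{v\in\mathcal N}\abss{v^\top(M-\bar M)v}$ for symmetric matrices to conclude $\norm{M - \bar M}_2 = o(1)$ w.h.p. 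Finally $\lambda_{\min}(M) \ge \lambda_{\min}(\bar M) - \norm{M - \bar M}_2 = \Theta(1)$ (Weyl, Lemma~\ref{lem:weyl}), completing the verification. The main obstacle is the exact second-moment evaluation of $\mathbb{E}[A(I-P)A]$ and confirming that its leading term is the identity, since this is precisely what guarantees that the rank-$d$ projection cannot destroy the positive curvature inherited from $Q$; the uniformization over the sphere is comparatively routine because $d$ is treated as a constant, so even Chebyshev tails over a constant-size net suffice.
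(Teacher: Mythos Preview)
Your plan is correct and takes a genuinely different route from the paper's proof of Lemma~\ref{lem:SKmodel}. The paper introduces an auxiliary matrix $W$ with i.i.d.\ $\mathcal{N}(0,1/n)$ entries, writes $A$ as $(W+W^\top)/\sqrt{2}$ minus its diagonal, and then expands $X^\top(W^\top+W)F(W+W^\top)X$ into four pieces: the ``symmetric'' terms $W^\top FW+WFW^\top$ are controlled via Theorem~\ref{thm:feature} (exploiting that, after rotating by $F$, the rows of $W$ are independent sub-Gaussians), while the cross terms $WFW+W^\top FW^\top$ are handled by the Ky--Fan inequality of Lemma~\ref{lem:usefulineq}; the argument is carried out under the simplifying assumption $D=cI$. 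You instead work directly with the symmetric $A$, compute the closed form $\mathbb{E}[ABA]=\tfrac1n(B+\mathrm{tr}(B)I-2\,\mathrm{diag}(B))$ to see that $\bar M$ is a rank-$d$ perturbation of $\tfrac{n-d}{n}Q$, and then pass from $\bar M$ to $M$ by a Chebyshev/$\epsilon$-net argument on $S^{d-1}$. Your approach avoids the auxiliary $W$ decomposition and the Ky--Fan step entirely, handles general diagonal $D$ without modification, and makes the mechanism transparent: the random $A$ isotropizes $Xv$, so the fixed $d$-dimensional projection $P$ removes only an $O(d/n)$ fraction of the energy. The paper's route, by contrast, reduces to the off-the-shelf sample-covariance concentration of Theorem~\ref{thm:feature} and thus gives somewhat sharper (polynomial in $n$) failure probabilities than your Chebyshev tails; your $9^d/(nt^2)$ union bound is fine here only because $d$ is treated as a constant. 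One point worth making explicit in a full write-up is the $O(1/n)$ variance bound: applying Lemma~\ref{lem:quadratic} to the Gaussian vector $u=AXv$ with covariance $\Sigma_u=\tfrac1n(\|w\|_2^2 I+ww^\top-2\,\mathrm{diag}(w)^2)$ gives $\mathrm{Var}[v^\top Mv]=\tfrac{2}{n^2}\mathrm{tr}((I-P)\Sigma_u(I-P)\Sigma_u)\le \tfrac{2}{n^2}\|\Sigma_u\|_2\,\mathrm{tr}(\Sigma_u)=O(\|w\|_2^4/n^3)=O(1/n)$, which confirms the order you asserted.
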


Technically, we will reparametrize the log-likelihood function in such a way that the new parameter vector is not high-dimensional and the resulting log-likelihood becomes strongly convex. The reparametrization and the equations of log-likelihood, its gradient and Hessian can be found in Section \ref{sec:reparameter}. We will follow the same high level ideas as in the logistic regression. Under assumptions on $A, \beta,\theta, (\beta A+D)^{-1}, AX$ that are summarized in Table \ref{table:assumptions} we proceed as follows:
\begin{itemize}
\item We prove concentration results for the gradient of the reparametrized log-likelihood, see Lemmas
\ref{lem:boundgrbeta},\ref{lem:boundgrtheta} and \ref{lem:boundgrkappa} in Section \ref{sec:concentrationlinear}.
\item We prove that the minimum eigenvalue of the negative Hessian of reparametrized log-likelihood is large enough, see Lemma \ref{lem:boundeigenvaluelinear} in Section \ref{sec:lowerhessianlinear}.
\end{itemize}

Below we provide some important definitions.
\subsection{Our Reparametrization and Log-likelihood}\label{sec:reparameter}
It is not hard to see that the negative log-likelihood is not convex with respect to the parameter vector $(\theta,\beta) \in \mathbb{R}^{d+1}$, for the linear regression model with dependent errors. Nevertheless, we can reparametrize the log-likelihood in such a way to make it convex. The classic way to do it sets $T := \Sigma^{-1}, \nu = \Sigma^{-1}\mu$ (for a gaussian $\mathcal{N}(\mu,\Sigma)$). However, this creates a parameter vector of dimension $\Omega(n)$. It is crucial that after the reparametrization the dimensionality of the new parameter vector is $O(d)$ and not $\Omega(n)$ (for concentration purposes, see remark \ref{rem:good}). Hence, we take a different route here.

We set $\kappa := \beta \cdot \theta \in \mathbb{R}^d$ and define our parametric vector to be $(\theta,\beta,\kappa) \in \mathbb{R}^{2d+1}$ (parameters $\theta,\beta$ remain the same and we introduce vector $\kappa$). The vector $(\theta,\beta,\kappa)$ is $(2d+1)$-dimensional. Our reparameterization is an overparameterization which helps us achieve convexity of the negative log-likelihood function.

The negative log-likelihood is given by the following:
\begin{align}
-LL &= \frac{1}{2}(\vec{y} - X\theta)^{\top}(\beta A+D)(\vec{y}-X\theta) + \log \int_{\mathbb{R}^n} \exp(-\frac{1}{2}(\vec{z}-X\theta)^{\top}(\beta A+D)(\vec{z}-X\theta))dz\\
& =  \frac{1}{2}\vec{y}^{\top}(\beta A+D)\vec{y} - \vec{y}^{\top}AX\kappa - \vec{y}^{\top}DX\theta + \log \int_{\mathbb{R}^n} \exp(-\frac{1}{2}\vec{z}^{\top}(\beta A+D)\vec{z}+\vec{z}^{\top}AX\kappa + \vec{z}^{\top}DX\theta)dz
\end{align}
The negative gradient of the log-likelihood is given below:
\begin{align}
-\nabla_{\theta} LL(\theta,\beta, \kappa) &= -\vec{y}^{\top}DX + \frac{\int_{\mathbb{R}^n} \vec{z}^{\top}DX\exp(-\frac{1}{2}\vec{z}^{\top}(\beta A+D)\vec{z}+\vec{z}^{\top}AX\kappa + \vec{z}^{\top}DX\theta)dz}{\int_{\mathbb{R}^n} \exp(-\frac{1}{2}\vec{z}^{\top}(\beta A+D)\vec{z}+\vec{z}^{\top}AX\kappa + \vec{z}^{\top}DX\theta)dz}\\& = -\vec{y}^{\top}DX + E_{\vec{z} \sim \mathcal{N}((\beta A+D)^{-1}(AX\kappa+DX\theta),(\beta A+D)^{-1})}\left[\vec{z}^{\top}\right]DX \\
-\nabla_{\beta} LL(\theta,\beta, \kappa) &= \frac{1}{2}\vec{y}^{\top}A\vec{y} + \frac{\int_{\mathbb{R}^n} -\frac{1}{2}\vec{z}^{\top}A\vec{z}\exp(-\frac{1}{2}\vec{z}^{\top}(\beta A+D)\vec{z}+\vec{z}^{\top}AX\kappa + \vec{z}^{\top}DX\theta)dz}{\int_{\mathbb{R}^n} \exp(-\frac{1}{2}\vec{z}^{\top}(\beta A+D)\vec{z}+\vec{z}^{\top}AX\kappa + \vec{z}^{\top}DX\theta)dz}\\&=
\frac{1}{2}\vec{y}^{\top}A\vec{y} - E_{\vec{z} \sim \mathcal{N}((\beta A+D)^{-1}(AX\kappa+DX\theta),(\beta A+D)^{-1})}\left[\frac{1}{2}\vec{z}^{\top}A\vec{z}\right]
\\
-\nabla_{\kappa} LL(\theta,\beta, \kappa) &= -\vec{y}^{\top}AX + \frac{\int_{\mathbb{R}^n} \vec{z}^{\top}AX\exp(-\frac{1}{2}\vec{z}^{\top}(\beta A+D)\vec{z}+\vec{z}^{\top}AX\kappa + \vec{z}^{\top}DX\theta)dz}{\int_{\mathbb{R}^n} \exp(-\frac{1}{2}\vec{z}^{\top}(\beta A+D)\vec{z}+\vec{z}^{\top}AX\kappa + \vec{z}^{\top}DX\theta)dz}\\&=-\vec{y}^{\top}AX + E_{\vec{z} \sim \mathcal{N}((\beta A+D)^{-1}(AX\kappa+DX\theta),(\beta A+D)^{-1})}\left[\vec{z}^{\top}\right]AX
.
\end{align}
The negative hessian of the log-likelihood is given below (it is of size $(2d+1)\times (2d+1)$):
\begin{equation}\label{eq:Hessianlinear}
-H:= -\nabla^2 LL = Cov_{\vec{z} \sim \mathcal{N}((\beta A+D)^{-1}(AX\kappa+DX\theta),(\beta A+D)^{-1})}\left[
\left(
\begin{array}{ccc}
-\frac{1}{2}\vec{z}^{\top}A\vec{z}\\
X^{\top}D\vec{z}\\
X^{\top}A\vec{z}
\end{array}
\right),
\left(
\begin{array}{ccc}
-\frac{1}{2}\vec{z}^{\top}A\vec{z}\\
X^{\top}D\vec{z}\\
X^{\top}A\vec{z}
\end{array}
\right)
\right].
\end{equation}
\subsection{Consistency of Likelihood}
\label{sec:linear-consistency}
Let $(\theta_0,\beta_0, \kappa_0)$ be the true parameter (observe that $\kappa_0 = \beta_0 \cdot \theta_0$). We define $(\theta_t,\beta_t,\kappa_t) = (1-t)(\theta_0,\beta_0,\kappa_0)+ t(\hat{\theta},\hat{\beta},\hat{\kappa})$ where $(\hat{\theta},\hat{\beta},\hat{\kappa})$ satisfies the first order conditions for $LL$ (i.e., $\nabla LL(\hat{\theta},\hat{\beta},\hat{\kappa}) = \vec{0}$) \footnote{Observe that is \textit{not} necessarily true that $\hat{\kappa} = \hat{\beta} \hat{\theta}$.} and set
\[g(t) := (\theta_0 - \hat{\theta},\beta_0 - \hat{\beta},\kappa_0-\hat{\kappa})^{\top} \nabla LL(\theta_t,\beta_t,\kappa_t),\; g'(t) = -(\theta_0 - \hat{\theta},\beta_0 - \hat{\beta},\kappa_0-\hat{\kappa})^{\top} H_{(\theta_t,\beta_t,\kappa_t)}(\theta_0 - \hat{\theta},\beta_0 - \hat{\beta},\kappa_0 - \hat{\kappa}).\]
Let $\mathcal{D} \in [0,1]$ be such that $(\theta_{\mathcal{D}}, \beta_{\mathcal{D}},\kappa_{\mathcal{D}})$ intersects the boundary of set $\mathbb{B}$ (if it does not intersect the boundary of $\mathbb{B}$ then $\mathcal{D}=1$). Since $H$ is negative semidefinite (from analysis in equation \ref{eq:Hessianlinear}) we have that $g'(t) \geq 0$ (**). It holds that
\begin{align*}
\norm{(\theta_0 - \hat{\theta},\beta_0 - \hat{\beta},\kappa_0 - \hat{\kappa})}_2 \cdot \norm{\nabla LL(\theta_0,\beta_0,\kappa_0)}_2 &\geq |(\theta_0 - \hat{\theta},\beta_0 - \hat{\beta},\kappa_0-\hat{\kappa})^{\top} \nabla LL(\theta_0,\beta_0,\kappa_0)|  \\&=
|g(1)-g(0)| \\&= \left|\int_{0}^1 g'(t)dt\right| \\&\geq \left|\int_{0}^{\mathcal{D}} g'(t)dt\right| \textrm{ by (**)}\\& \geq \mathcal{D}\min_{(\theta,\beta,\kappa) \in \tilde{\mathbb{B}}}\lambda_{\min}\left(-H_{(\theta, \beta,\kappa)}\right) \norm{(\theta_0 - \hat{\theta},\beta_0 - \hat{\beta},\kappa_0-\hat{\kappa})}^2_2.
\end{align*}

The aforementioned inequalities indicate that we need a concentration result for $\norm{\frac{1}{n}\nabla LL(\theta_0,\beta_0,\kappa_0)}_2$ and a lower bound on the minimum eigenvalue of $-\frac{1}{n}H$ for consistency of the MLE. As in the logistic regression case, combining with the observation that $D \to 1$\footnote{This is true because $\norm{(\theta_{\mathcal{D}} - \theta_0,\beta_{\mathcal{D}}-\beta_0, \kappa_{\mathcal{D}} - \kappa_0) }_2 \to 0$ as $n \to \infty$ by showing the promised concentration result and the lower bound).} as $n \to \infty$ (i.e., $D \geq \frac{1}{2}$ for $n$ sufficiently large) we get the desired rate of consistency.

\subsubsection{Concentration results}\label{sec:concentrationlinear}
We have
\begin{equation}\label{eq:gradientLL}
\begin{array}{cc}
\mathbb{E}_{\theta_0,\beta_0}\left[\norm{\nabla LL(\theta_0,\beta_0, \beta_0 \cdot \theta_0)}_2^2 \right] =
\mathbb{E}_{\theta_0,\beta_0}\left[\norm{\nabla_{\theta} LL(\theta_0,\beta_0, \beta_0 \cdot \theta_0)}_2^2 \right] + \\ \mathbb{E}_{\theta_0,\beta_0}\left[|\nabla_{\beta} LL(\theta_0,\beta_0, \beta_0 \cdot \theta_0)|^2 \right]+ \mathbb{E}_{\theta_0,\beta_0}\left[\norm{\nabla_{\kappa} LL(\theta_0,\beta_0, \beta_0 \cdot \theta_0)}_2^2 \right]
\end{array}
\end{equation}
We prove below concentration results for each term separately.
\begin{lemma}[Bounding the 1st term]\label{lem:boundgrtheta} \[\mathbb{E}_{\theta_0,\beta_0}\left[\norm{\nabla_\theta LL(\theta_0,\beta_0,\kappa_0)}^2_2\right] = \norm{(\beta_0A+D)^{-1/2}DX}_F^2 \leq \norm{(\beta_0A+D)^{-1/2}D}_2^2 \norm{X}^2_F.\]
\end{lemma}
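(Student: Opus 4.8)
The plan is to evaluate the $\theta$-gradient at the true parameter, observe that the Gaussian mean appearing in the expectation collapses exactly to $X\theta_0$, and then reduce everything to a quadratic form in a centered Gaussian. Concretely, at $(\theta_0,\beta_0,\kappa_0)$ with $\kappa_0=\beta_0\theta_0$ the mean vector of the auxiliary Gaussian is
\[
(\beta_0 A+D)^{-1}(AX\kappa_0+DX\theta_0)=(\beta_0 A+D)^{-1}(\beta_0 A+D)X\theta_0=X\theta_0,
\]
so the distribution of $\vec{z}$ is precisely the data law $\mathcal{N}(X\theta_0,(\beta_0A+D)^{-1})$. Plugging $\mathbb{E}_{\vec z}[\vec z^\top]=(X\theta_0)^\top$ into the expression for $-\nabla_\theta LL$ given in Section~\ref{sec:reparameter} yields the clean identity $\nabla_\theta LL(\theta_0,\beta_0,\kappa_0)=X^\top D\,\vec{\epsilon}$, where $\vec{\epsilon}=\vec{y}-X\theta_0\sim\mathcal{N}(\vec{0},(\beta_0A+D)^{-1})$. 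This self-consistency step is the one place where care is needed, since it is exactly what makes the score a centered linear image of the error vector.

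Next I would compute the expected squared norm as a trace. Writing $\norm{X^\top D\vec\epsilon}_2^2=\vec\epsilon^\top DXX^\top D\,\vec\epsilon$ and applying Lemma~\ref{lem:quadratic} with the quadratic form $f(\vec z)=\vec z^\top(DXX^\top D)\vec z$, zero linear and constant terms, and $\mu=\vec 0$, $\Sigma=(\beta_0A+D)^{-1}$, gives
\[
\mathbb{E}_{\theta_0,\beta_0}\!\left[\norm{\nabla_\theta LL(\theta_0,\beta_0,\kappa_0)}_2^2\right]=\mathrm{tr}\!\left(DXX^\top D\,(\beta_0A+D)^{-1}\right).
\]
To recognize this as a Frobenius norm, I would use that $\beta_0A+D$ is symmetric, so $(\beta_0A+D)^{-1/2}$ and $D$ are symmetric, and the cyclic invariance of the trace: with $B=(\beta_0A+D)^{-1/2}DX$ one has $BB^\top=(\beta_0A+D)^{-1/2}DXX^\top D(\beta_0A+D)^{-1/2}$, whence $\mathrm{tr}(BB^\top)=\mathrm{tr}(DXX^\top D(\beta_0A+D)^{-1})$. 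This establishes the claimed equality $\mathbb{E}_{\theta_0,\beta_0}[\norm{\nabla_\theta LL}_2^2]=\norm{(\beta_0A+D)^{-1/2}DX}_F^2$.

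Finally, the inequality is immediate from the submultiplicativity bound of Lemma~\ref{lem:usefulineq}(3), namely $\norm{WZ}_F^2\le\norm{W}_2^2\norm{Z}_F^2$, applied with $W=(\beta_0A+D)^{-1/2}D$ and $Z=X$. I do not expect any real obstacle here: the entire argument is routine once the mean-collapse identity in the first paragraph is in place, and the only subtlety is bookkeeping the row/column conventions so that $X^\top D\vec\epsilon$ is treated consistently as the $d$-dimensional score vector.
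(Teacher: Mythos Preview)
Your proposal is correct and follows essentially the same route as the paper: both observe that at $(\theta_0,\beta_0,\kappa_0)$ the gradient is a centered linear image of the Gaussian error, then compute the expected squared norm as a trace and identify it with $\norm{(\beta_0A+D)^{-1/2}DX}_F^2$. The only cosmetic difference is that the paper whitens via $\vec{w}=(\beta_0A+D)^{1/2}(\vec{y}-X\theta_0)$ before taking the trace, whereas you work directly with $\vec{\epsilon}$ and invoke Lemma~\ref{lem:quadratic}; you also make the mean-collapse identity $(\beta_0A+D)^{-1}(AX\kappa_0+DX\theta_0)=X\theta_0$ explicit, which the paper uses implicitly.
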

\begin{proof}
	Assume that $y \sim \mathcal{N}(X\theta_0,(\beta_0A+D)^{-1})$, it follows that $-\nabla_\theta LL(\theta_0,\beta_0,\kappa_0) = -\vec{y}^{\top}DX - \mathbb{E}_{\theta_0,\beta_0}[-\vec{y}^{\top}DX].$ It is clear that the vector $\vec{w} = (\beta_0A+D)^{1/2}(\vec{y}- X\theta_0 ) \sim \mathcal{N}(\vec{0},\vec{I}).$
	
	It follows that
	\begin{align}
	-\vec{y}^{\top}DX - \mathbb{E}_{\theta_0,\beta_0}[-\vec{y}^{\top}DX] &= -((\beta_0A+D)^{-1/2}\vec{w}+X\theta_0)^{\top}DX + \mathbb{E}_{\theta_0,\beta_0}[((\beta_0A+D)^{-1/2}\vec{w}+X\theta_0)^{\top}DX]\\&=
	-\vec{w}^{\top}(\beta_0A+D)^{-1/2}DX.
	\end{align}
	It holds that
	\begin{align*}
	\mathbb{E}_{\theta_0,\beta_0}[\norm{\nabla_\theta LL(\theta_0,\beta_0,\kappa_0)}^2_2] &= \mathbb{E}_{\theta_0,\beta_0}\left[\norm{\vec{w}^{\top}(\beta_0A+D)^{-1/2}DX}_2^2\right] \\& = \mathrm{tr}((\beta_0A+D)^{-1/2}DXX^{\top}D(\beta_0A+D)^{-1/2})\\& =\norm{(\beta_0A+D)^{-1/2}DX}_F^2
	\end{align*}
\end{proof}

\begin{lemma}[Bounding the 3rd term]\label{lem:boundgrkappa} Similarly to Lemma \ref{lem:boundgrtheta} we get \[\mathbb{E}_{\theta_0,\beta_0}\left[\norm{\nabla_\kappa LL(\theta_0,\beta_0,\kappa_0)}^2_2\right] = \norm{(\beta_0A+D)^{-1/2}AX}_F^2 \leq \norm{(\beta_0A+D)^{-1/2}A}_2^2 \norm{X}^2_F.\]
\end{lemma}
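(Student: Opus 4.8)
The plan is to mirror exactly the argument already used for Lemma~\ref{lem:boundgrtheta}, replacing the matrix $(\beta_0A+D)^{-1/2}DX$ by $(\beta_0A+D)^{-1/2}AX$ throughout. The starting point is the closed form of the $\kappa$-gradient derived in Section~\ref{sec:reparameter}, namely
\[
-\nabla_{\kappa} LL(\theta,\beta, \kappa) = -\vec{y}^{\top}AX + E_{\vec{z} \sim \mathcal{N}((\beta A+D)^{-1}(AX\kappa+DX\theta),(\beta A+D)^{-1})}\left[\vec{z}^{\top}\right]AX.
\]
The one identity I need to check carefully (and this is the only genuinely $\kappa$-specific step) is that, when evaluated at the true parameters with $\kappa_0=\beta_0\theta_0$, the mean of the inner Gaussian collapses to $X\theta_0$: indeed $AX\kappa_0+DX\theta_0 = \beta_0 AX\theta_0 + DX\theta_0 = (\beta_0A+D)X\theta_0$, so the mean equals $(\beta_0A+D)^{-1}(\beta_0A+D)X\theta_0 = X\theta_0$, which coincides with the law of $\vec{y}$ itself. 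This is where both the $A$ and $D$ contributions must combine correctly, so I regard it as the main (though mild) obstacle.

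Granting this, I would proceed as follows. First, since $E_{\theta_0,\beta_0}[\vec{z}^{\top}]=(X\theta_0)^{\top}=E_{\theta_0,\beta_0}[\vec{y}^{\top}]$, the gradient simplifies to $-\nabla_{\kappa} LL(\theta_0,\beta_0,\kappa_0) = -(\vec{y}-X\theta_0)^{\top}AX$, i.e.\ a centered quantity. Second, I would whiten the Gaussian exactly as in Lemma~\ref{lem:boundgrtheta}: the vector $\vec{w}=(\beta_0A+D)^{1/2}(\vec{y}-X\theta_0)$ is distributed as $\mathcal{N}(\vec{0},\vec{I})$, so that
\[
-\nabla_{\kappa} LL(\theta_0,\beta_0,\kappa_0) = -\vec{w}^{\top}(\beta_0A+D)^{-1/2}AX.
\]

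Third, I would compute the expected squared norm via a trace identity. Writing $M=(\beta_0A+D)^{-1/2}AX$, we have $\norm{\vec{w}^{\top}M}_2^2 = \vec{w}^{\top}MM^{\top}\vec{w}$, and since $E[\vec{w}\vec{w}^{\top}]=\vec{I}$,
\[
\mathbb{E}_{\theta_0,\beta_0}\left[\norm{\nabla_\kappa LL(\theta_0,\beta_0,\kappa_0)}^2_2\right] = \mathrm{tr}(MM^{\top}) = \norm{(\beta_0A+D)^{-1/2}AX}_F^2,
\]
which is the claimed equality. Finally, the stated inequality follows immediately from part~3 of Lemma~\ref{lem:usefulineq}, applied with $W=(\beta_0A+D)^{-1/2}A$ and $Z=X$, giving $\norm{WX}_F^2 \leq \norm{W}_2^2\norm{X}_F^2$. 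I expect no real difficulties beyond the mean-collapse identity, since every other step is a verbatim transcription of the $\theta$-case.
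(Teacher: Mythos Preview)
Your proposal is correct and follows exactly the approach the paper intends: the paper offers no separate proof for Lemma~\ref{lem:boundgrkappa}, simply pointing to Lemma~\ref{lem:boundgrtheta}, and your whitening via $\vec{w}=(\beta_0A+D)^{1/2}(\vec{y}-X\theta_0)$ together with the trace computation and the Frobenius--spectral inequality from Lemma~\ref{lem:usefulineq}(3) is precisely that argument transcribed with $D$ replaced by $A$. Your explicit verification that the inner Gaussian mean collapses to $X\theta_0$ at $(\theta_0,\beta_0,\kappa_0)$ is a detail the paper leaves implicit, so if anything you are slightly more careful.
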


\begin{lemma}[Bounding the 2nd term] \label{lem:boundgrbeta} It holds that
\begin{equation}
\begin{array}{cc}
\mathbb{E}_{\theta_0,\beta_0,\kappa_0}[|\nabla_{\beta}LL (\theta_0,\beta_0,\kappa_0)|^2] = \mathbb{V}_{\vec{z} \sim \mathcal{N}(X\theta_0,(\beta_0A+D)^{-1})}[\vec{z}^{\top}A\vec{z}]\\ \leq 2 \norm{(\beta_0A+D)^{-1/2}A(\beta_0A+D)^{-1/2}}^2_F + 4d\Theta^2 \norm{(\beta_0A+D)^{-1/2}AX}^2_2
\end{array}
\end{equation}
\end{lemma}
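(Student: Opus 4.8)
The plan is to recognize $\nabla_\beta LL$ evaluated at the true parameters as a \emph{centered} quadratic form in a Gaussian vector, so that its second moment is exactly a variance, which can then be evaluated in closed form via Lemma \ref{lem:quadratic}.

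First I would exploit the reparametrization to show that the Gaussian appearing in the gradient formula is, at the truth, simply the law of $\vec{y}$. Indeed, at $(\theta_0,\beta_0,\kappa_0)$ with $\kappa_0 = \beta_0\theta_0$, the mean collapses:
\[
(\beta_0 A+D)^{-1}(AX\kappa_0 + DX\theta_0) = (\beta_0 A+D)^{-1}(\beta_0 A + D)X\theta_0 = X\theta_0,
\]
so the integrating distribution $\vec{z}\sim\mathcal{N}(X\theta_0,(\beta_0A+D)^{-1})$ coincides with the true law of $\vec{y}$. From the gradient formula $-\nabla_{\beta}LL = \tfrac{1}{2}\vec{y}^{\top}A\vec{y} - \mathbb{E}_{\vec{z}}[\tfrac{1}{2}\vec{z}^{\top}A\vec{z}]$, the quantity $\nabla_{\beta}LL(\theta_0,\beta_0,\kappa_0)$ is therefore (up to the harmless factor $\tfrac12$, which only helps) the deviation of $\vec{y}^{\top}A\vec{y}$ from its own mean. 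Taking the second moment thus turns $\mathbb{E}[|\nabla_\beta LL|^2]$ into the variance $\mathbb{V}_{\vec{z}\sim\mathcal{N}(X\theta_0,(\beta_0A+D)^{-1})}[\vec{z}^{\top}A\vec{z}]$, giving the claimed equality.

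Next I would apply Lemma \ref{lem:quadratic} with $\vec{b}=\vec{0}$, $c=0$, $\mu = X\theta_0$ and $\Sigma = (\beta_0A+D)^{-1}$, yielding $\mathbb{V}[\vec{z}^{\top}A\vec{z}] = 2\,\mathrm{tr}(A\Sigma A\Sigma) + 4\mu^{\top}A\Sigma A\mu$. Writing $S = (\beta_0A+D)^{-1/2}$, which is symmetric since $\beta_0A+D$ is symmetric positive definite under the assumptions of Table~\ref{table:assumptions}, the cyclicity of the trace turns the first term into
\[
2\,\mathrm{tr}(ASSASS) = 2\,\mathrm{tr}(SASSAS) = 2\norm{SAS}_F^2 = 2\norm{(\beta_0A+D)^{-1/2}A(\beta_0A+D)^{-1/2}}_F^2,
\]
which is exactly the first term of the bound. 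For the second term I would factor $X^{\top}A\Sigma AX = (SAX)^{\top}(SAX)$, so that $4\mu^{\top}A\Sigma A\mu = 4\norm{SAX\theta_0}_2^2$, and then bound it by submultiplicativity, $\norm{SAX\theta_0}_2^2 \le \norm{SAX}_2^2\norm{\theta_0}_2^2$, using $\norm{\theta_0}_2^2 \le d\Theta^2$ since $\theta_0\in(-\Theta,\Theta)^d$. This produces the second term $4d\Theta^2\norm{(\beta_0A+D)^{-1/2}AX}_2^2$ and completes the proof.

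There is no serious analytic obstacle; the one step demanding care is the mean-collapse identity, which is precisely the payoff of the overparametrization by $\kappa$: it is what makes the expected squared gradient a \emph{pure} variance rather than a variance plus a squared-bias term, so that $\nabla_\beta LL$ vanishes in expectation at the truth. Everything after that is routine manipulation of traces together with the Frobenius/spectral-norm inequalities recorded in Lemma \ref{lem:usefulineq}.
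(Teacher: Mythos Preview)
Your proposal is correct and follows essentially the same route as the paper: both invoke Lemma~\ref{lem:quadratic} with $\mu=X\theta_0$, $\Sigma=(\beta_0A+D)^{-1}$, rewrite $2\,\mathrm{tr}((A\Sigma)^2)$ via trace cyclicity as the Frobenius-norm term, and bound $4\mu^\top A\Sigma A\mu$ by $4d\Theta^2\norm{(\beta_0A+D)^{-1/2}AX}_2^2$. Your explicit justification of the mean-collapse identity $(\beta_0A+D)^{-1}(AX\kappa_0+DX\theta_0)=X\theta_0$ (and the remark about the factor $\tfrac12$) supplies the equality step that the paper's own proof leaves implicit.
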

\begin{proof}
	We follow the calculations of Lemma \ref{lem:quadratic}. It holds that
	\begin{align*}
	\mathbb{E}_{\theta_0,\beta_0,\kappa_0}[|\nabla_{\beta}LL (\theta_0,\beta_0,\kappa_0)|^2] &\leq 2 \mathrm{tr}((A(\beta_0A+D)^{-1})^2)+ 4d\Theta^2 \norm{(\beta_0A+D)^{-1/2}AX}^2_2 \\ &\leq 2 \mathrm{tr}((\beta_0A+D)^{-1/2}A(\beta_0A+D)^{-1/2}(\beta_0A+D)^{-1/2}A(\beta_0A+D)^{-1/2})\\&\;\;\;\;\;\;\;\;+ 4d\Theta^2 \norm{(\beta_0A+D)^{-1/2}AX}^2_2
	\\ &= 2 \norm{(\beta_0A+D)^{-1/2}A(\beta_0A+D)^{-1/2}}^2_F + 4d\Theta^2 \norm{(\beta_0A+D)^{-1/2}AX}^2_2
	\end{align*}
\end{proof}

\begin{remark}\label{rem:good} We note the dependence on the dimension $d$ for the bound in Lemma \ref{lem:boundgrbeta}. This indicates how crucial it is that the dimensionality of the parameter vector does not scale with $n$.
\end{remark}
\subsubsection{Lower bound on the minimum eigenvalue}\label{sec:lowerhessianlinear}
In this section we provide a lower bound on the minimum eigenvalue of the negative Hessian of the log-likelihood. We need this bound for strong concavity of the log-likelihood.
\begin{lemma}[Bounding the minimum eigenvalue]\label{lem:boundeigenvaluelinear} Let $z \sim \mathcal{N}((\beta A+D)^{-1}(AX\kappa+DX\theta),(\beta A+D)^{-1})$. There exists a constant $C$ such that
\begin{equation}
\begin{array}{cc}
\lambda_{\min}\left(Cov\left[
\left(
\begin{array}{ccc}
-\frac{1}{2}\vec{z}^{\top}A\vec{z}\\
X^{\top}D\vec{z}\\
X^{\top}A\vec{z}
\end{array}
\right),
\left(
\begin{array}{ccc}
-\frac{1}{2}\vec{z}^{\top}A\vec{z}\\
X^{\top}D\vec{z}\\
X^{\top}A\vec{z}
\end{array}
\right)
\right]\right) \geq \frac{Cn}{d}.
\end{array}
\end{equation}
\end{lemma}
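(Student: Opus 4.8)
The plan is to reduce the eigenvalue bound to a uniform variance lower bound. Writing $M = \beta A + D$ and $\mu = M^{-1}(AX\kappa + DX\theta)$, the matrix in question is $\mathrm{Cov}[V,V]$ with $V = (-\tfrac12 \vec{z}^{\top} A\vec{z},\, X^{\top} D\vec{z},\, X^{\top} A\vec{z})^{\top}$, so its smallest eigenvalue equals $\min_{\norm{u}_2 = 1}\mathbb{V}[u^{\top} V]$ over unit vectors $u = (a,\vec{b},\vec{c})$ with $a\in\mathbb{R}$, $\vec{b},\vec{c}\in\mathbb{R}^d$. For such $u$ we have $u^{\top} V = -\tfrac a2 \vec{z}^{\top} A\vec{z} + \vec{w}^{\top}\vec{z}$ where $\vec{w} := DX\vec{b} + AX\vec{c}$. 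First I would substitute $\vec{z} = \mu + M^{-1/2}\xi$ with $\xi\sim\mathcal{N}(\vec{0},I)$ and split $u^{\top} V$ into its quadratic part $\xi^{\top}(-\tfrac a2 M^{-1/2}AM^{-1/2})\xi$ and its linear part $(M^{-1/2}(\vec{w} - aA\mu))^{\top}\xi$. Since the linear and centered quadratic forms of a standard Gaussian are uncorrelated (third moments vanish) and $\mathbb{V}[\xi^{\top}\tilde B\xi] = 2\norm{\tilde B}_F^2$ for symmetric $\tilde B$, this yields the clean identity
\begin{align*}
\mathbb{V}[u^{\top} V] = (\vec{w} - aA\mu)^{\top} M^{-1}(\vec{w} - aA\mu) + \frac{a^2}{2}\norm{M^{-1/2}AM^{-1/2}}_F^2,
\end{align*}
which one can equivalently read off Lemma~\ref{lem:quadratic}.

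Both summands are nonnegative, and I would lower bound each using the assumptions of Table~\ref{table:assumptions}. For the quadratic (i.e.\ $\beta$-direction) term, the lower-bound analogue of Lemma~\ref{lem:usefulineq}(3), namely $\norm{BWC}_F \ge \sigma_{\min}(B)\sigma_{\min}(C)\norm{W}_F$ with $B = C = M^{-1/2}$, gives $\norm{M^{-1/2}AM^{-1/2}}_F^2 \ge \lambda_{\min}(M^{-1})^2\norm{A}_F^2 \ge \rho_{\min}^2 cn = \Omega(n)$, using $\norm{A}_F^2 \ge cn$. For the linear term I would first establish the \textbf{key structural bound} $\sigma_{\min}^2([DX\mid AX]) \ge c_1 n$, i.e.\ $\norm{DX\vec{b} + AX\vec{c}}_2^2 = \Omega(n)$ for every unit $(\vec{b},\vec{c})$. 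This is where the assumption $\lambda_{\min}(\tfrac1n (AX)^{\top}(I - P_{DX})(AX)) = \rho_{DAX} = \Theta(1)$ enters, with $P_{DX} = DX(X^{\top} D^2 X)^{-1}X^{\top} D$ the orthogonal projection onto $\mathrm{col}(DX)$: splitting $DX\vec{b} + AX\vec{c}$ into its component in $\mathrm{col}(DX)$ and its orthogonal complement gives $\norm{DX\vec{b} + AX\vec{c}}_2^2 = \norm{DX(\vec{b} + \gamma)}_2^2 + \vec{c}^{\top}(AX)^{\top}(I-P_{DX})(AX)\vec{c}$ with $\gamma = (X^{\top} D^2X)^{-1}X^{\top} D\,AX\vec{c}$. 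The second term is $\ge n\rho_{DAX}\norm{\vec{c}}_2^2$, while $\lambda_{\min}(X^{\top} D^2 X) = \Omega(n)$ (since $D^2\succeq d_{\min}^2 I$ and $\lambda_{\min}(Q)$ is a positive constant) controls the first, and a short case analysis on whether $\norm{\vec{c}}_2$ is large or small delivers $\Omega(n)$ in every case.

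With these two bounds in hand, the final step is a case analysis on $\abss{a}$ against a threshold $\tau := c_3/\sqrt d$. I would first record $\norm{A\mu}_2 = O(\sqrt{nd})$, because $\norm{\mu}_2 = O(\sqrt{nd})$: the bounded parameters force $\norm{\kappa}_2,\norm{\theta}_2 = O(\sqrt d)$, while $\norm{AX}_2,\norm{DX}_2 = O(\sqrt n)$ and $\norm{M^{-1}}_2 \le \rho_{\max}$. If $\abss{a} \ge \tau$, the quadratic term alone gives $\tfrac{a^2}{2}\Omega(n) \ge \Omega(n/d)$. If $\abss{a} < \tau$, then $\norm{(\vec{b},\vec{c})}_2^2 = 1 - a^2 \ge \tfrac12$, so $\norm{\vec{w}}_2 \ge \sqrt{c_1 n}\,\norm{(\vec{b},\vec{c})}_2 \ge \sqrt{c_1 n/2}$, whereas $\abss{a}\,\norm{A\mu}_2 \le \tau\cdot O(\sqrt{nd}) = O(\sqrt n)$ with a constant proportional to $c_3$; choosing $c_3$ small enough makes $\norm{\vec{w} - aA\mu}_2 \ge \norm{\vec{w}}_2 - \abss{a}\norm{A\mu}_2 = \Omega(\sqrt n)$, so the linear term is $\rho_{\min}\Omega(n) = \Omega(n)$. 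Taking the minimum over the two cases yields $\lambda_{\min}(-H) \ge Cn/d$, uniformly over $(\theta,\beta,\kappa)\in\mathbb{B}$ since every estimate above is uniform in the bounded parameters.

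The hard part is precisely the cross-coupling between the quadratic $\beta$-coordinate and the linear $(\theta,\kappa)$-coordinates: an adversarial unit vector can try to cancel the (large, $\Omega(n)$) linear contribution by aligning $\vec{w} = DX\vec{b} + AX\vec{c}$ with $aA\mu$. Because $\norm{A\mu}_2$ is as large as $\Theta(\sqrt{nd})$, this cancellation only becomes possible once $\abss{a}$ grows past $\Theta(1/\sqrt d)$, at which point the quadratic term has already grown to $\Omega(n/d)$ — and it is exactly this trade-off that produces the $n/d$ (rather than $n$) rate, matching the $O(d^{3/2})$ dependence claimed for the linear model. Lining up the constants in the threshold with those in the two lower bounds is the one delicate bookkeeping point.
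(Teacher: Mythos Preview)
Your proof is correct and follows essentially the same route as the paper: both derive the variance identity $\mathbb{V}[u^{\top}V]=\tfrac{a^2}{2}\|\Sigma^{1/2}A\Sigma^{1/2}\|_F^2+\|\Sigma^{1/2}(\vec{w}-aA\mu)\|_2^2$, then case-split on whether $|a|$ exceeds a $\Theta(1/\sqrt d)$ threshold, using $\|A\|_F^2\ge cn$ in the first case and the assumed lower bound on $\lambda_{\min}\bigl(\tfrac1n X^{\top}A^{\top}(I-P_{DX})AX\bigr)$ to control $\sigma_{\min}([DX\,|\,AX])$ in the second. The only cosmetic difference is that the paper phrases the key structural bound as a direct Schur-complement computation on the $2d\times 2d$ block matrix $\tfrac1n\begin{pmatrix}X^{\top}D^2X & X^{\top}DAX\\ X^{\top}ADX & X^{\top}A^2X\end{pmatrix}$, whereas you unfold it as an orthogonal-projection decomposition followed by a case analysis on $\|\vec c\|_2$; these are equivalent.
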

\begin{proof}
	We set $\mu = (\beta A+D)^{-1}(AX\kappa+DX\theta)$, $\Sigma = (\beta A+D)^{-1}$ and $\vec{w} =\Sigma^{-1/2}(\vec{z}- \mu)$ (i.e., $\vec{w} \sim \mathcal{N}(\vec{0},\vec{I})$) and we consider the vector
	\[
	\vec{h} :=
	\left(
	\begin{array}{ccc}
	-\frac{1}{2}\vec{w}^{\top}\Sigma^{1/2}A\Sigma^{1/2}\vec{w} + \vec{w}^{\top}\Sigma^{1/2}A\mu - \mathbb{E}[-\frac{1}{2}\vec{w}^{\top}\Sigma^{1/2}A\Sigma^{1/2}\vec{w}]\\
	X^{\top}D\Sigma^{1/2}\vec{w}\\
	X^{\top}A\Sigma^{1/2}\vec{w}
	\end{array}
	\right).\]
	Let $\vec{v} := (v_1,\vec{v}_2,\vec{v}_3)$ be a column vector where $v_1 \in \mathbb{R}$, $\vec{v}_2, \vec{v}_3 \in \mathbb{R}^d$ so that $\norm{\vec{v}}_2=1$. It follows that $-\vec{v}^{\top}H\vec{v} = \mathbb{E}[(\vec{v}^{\top}\vec{h})^2]$ where $-H$ is the negative hessian computed in (\ref{eq:Hessianlinear}).
	
	From Lemma \ref{lem:quadratic} we get that
	
	\begin{align*}
	\mathbb{E}\left[(\vec{v}^{\top}\vec{h})^2\right] &= v_1^2 \left(\frac{1}{2}\mathrm{tr}\left((\Sigma^{1/2}A\Sigma^{1/2})^2\right)\right) + \norm{v_1\mu^{\top}A \Sigma^{1/2}+ \vec{v}_2^{\top}X^{\top}D\Sigma^{1/2}+\vec{v}_3^{\top}X^{\top}A\Sigma^{1/2}}_2^2.
	\end{align*}
	If a positive constant mass at least $\sqrt{\frac{\epsilon}{d}}$ is put on $v_1$ then the above term is at least $\frac{\epsilon}{2d} \norm{\Sigma^{1/2}A\Sigma^{1/2}}^2_F$ which is $\Theta(\frac{n}{d})$. If not then the above term is at least $\norm{ \vec{v}_2^{\top}X^{\top}D\Sigma^{1/2}+\vec{v}_3^{\top}X^{\top}A\Sigma^{1/2}}_2^2 - O(n\sqrt{\epsilon})$ with $\norm{v_2}_2^2+\norm{v_3}_2^2 \geq 1 - \epsilon$. We will prove a $\Theta(n)$ lower bound on the term $\norm{ \vec{v}_2^{\top}X^{\top}D\Sigma^{1/2}+\vec{v}_3^{\top}X^{\top}A\Sigma^{1/2}}_2^2 \geq \norm{ \vec{v}_2^{\top}X^{\top}D+\vec{v}_3^{\top}X^{\top}A}_2^2 \sigma_{\min}(\Sigma)$.
	
	It suffices to bound the minimum eigenvalue of the following matrix:
	\[
	M:=
	\left(
	\begin{array}{cc}
	\frac{1}{n}X^{\top}D^2 X & \frac{1}{n}X^{\top}DAX\\
	\frac{1}{n}X^{\top}ADX & \frac{1}{n}X^{\top}A^2X
	\end{array}
	\right).\]
	By the Schur complement, we get that $\det(M-\lambda I) = \det (\frac{1}{n}X^{\top}D^2 X - \lambda I ) \det ( \frac{1}{n}X^{\top}A^2 X - \frac{1}{n}X^{\top}ADX\left(\frac{1}{n}X^{\top}D^2 X-\lambda I\right)^{-1}\frac{1}{n}X^{\top}DAX-\lambda I)$. Therefore the minimum eigenvalue of $M$ a positive constant if both matrices below are positive definite \[\frac{1}{n}X^{\top}D^2 X \textrm{ and }\frac{1}{n}X^{\top}A^2 X - \frac{1}{n}X^{\top}ADX\left(\frac{1}{n}X^{\top}D^2 X\right)^{-1}\frac{1}{n}X^{\top}DAX.\]
	The first matrix has clearly minimum eigenvalue a positive constant. The second matrix is equal to $\frac{1}{n}X^{\top}A(I - DX(X^{\top}D^2X)^{-1}XD)AX$ which has minimum eigenvalue positive by assumption.
\end{proof}

\begin{remark}[Smoothness of Hessian]\label{rem:smooth2}
If we want to find an upper bound on the eigenvalues of the negative Hessian by an easy argument using Lemma \ref{lem:quadratic} follows that 
\begin{align}
&\lambda_{\max} (-H _{\theta,\beta,\kappa}) \\ 
&\leq C\left\{\norm{(\beta A+D)^{-1/2}A(\beta A+D)^{-1/2}}^2_F+ d\sigma^2_{\max}((\beta A+D)^{-1/2}DX)+d\sigma^2_{\max}((\beta A+D)^{-1/2}AX)\right\}
\end{align} 
for all $(\theta,\beta,\kappa) \in \mathbb{B}$. From Lemma \ref{lem:bounds} we conclude that there exists a positive constant $C_H$ such that $\lambda_{\max} (-\frac{1}{n}H _{\theta,\beta,\kappa}) \leq C_H$.
\end{remark}

The following lemma indicates that the concentration results and the lower bound on the eigenvalues of the negative Hessian (for this section) are of the desirable order.
\begin{lemma}[Bounding the norms]\label{lem:bounds}
	The following claims hold:
	\begin{enumerate}
		\item $\norm{X}_F^2$ is $\Theta(n)$.
		\item $\frac{1}{n}\norm{X}_2^2$, $\frac{1}{n}\lambda_{\min}(X^{\top}X)$ are positive constants.
		\item $\norm{\Sigma^{1/2}A\Sigma^{1/2}}^2_F$, $\sigma^2_{\min}(\Sigma^{1/2}DX)$ and $\sigma^2_{\min}(\Sigma^{1/2}AX)$ are $\Theta(n)$.
	\end{enumerate}
\end{lemma}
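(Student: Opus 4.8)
The plan is to read off all six order estimates directly from the spectral assumptions collected in Table~\ref{table:assumptions}, treating $d$ and every quantity independent of $n$ as a constant: the diagonal entries of $D$ (bounded in $[d_{\min},d_{\max}]$ with $d_{\min}>0$), $\rho_{\min},\rho_{\max}$, $\lambda_{\min}(Q),\lambda_{\max}(Q)$, the constant $c$ from $\norm{A}_F^2\ge cn$, and $\rho_{DAX}$. Throughout write $\Sigma=(\beta A+D)^{-1}$ and recall that the eigenvalue assumptions give $\sqrt{\rho_{\min}}\le \sigma_{\min}(\Sigma^{1/2})\le \norm{\Sigma^{1/2}}_2\le \sqrt{\rho_{\max}}$, equivalently $\norm{\Sigma^{-1/2}}_2\le 1/\sqrt{\rho_{\min}}$. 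Claims 1 and 2 are immediate: since $X^{\top}X=nQ$ has all $d$ eigenvalues in $[n\lambda_{\min}(Q),n\lambda_{\max}(Q)]$, its trace satisfies $\norm{X}_F^2=\mathrm{tr}(X^{\top}X)\in[dn\lambda_{\min}(Q),dn\lambda_{\max}(Q)]=\Theta(n)$, while $\frac{1}{n}\norm{X}_2^2=\lambda_{\max}(Q)$ and $\frac{1}{n}\lambda_{\min}(X^{\top}X)=\lambda_{\min}(Q)$ are positive constants.

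For the Frobenius term in Claim 3 I would sandwich $\Sigma^{1/2}A\Sigma^{1/2}$ between $\norm{A}_F^2$ above and below. Applying Lemma~\ref{lem:usefulineq}(3) twice gives the upper bound $\norm{\Sigma^{1/2}A\Sigma^{1/2}}_F^2\le \norm{\Sigma^{1/2}}_2^4\,\norm{A}_F^2\le \rho_{\max}^2\,n$, using $\norm{A}_F^2\le n\norm{A}_2^2\le n$. For the lower bound, write $A=\Sigma^{-1/2}(\Sigma^{1/2}A\Sigma^{1/2})\Sigma^{-1/2}$ and apply the same inequality to obtain $\norm{A}_F^2\le \norm{\Sigma^{-1/2}}_2^4\,\norm{\Sigma^{1/2}A\Sigma^{1/2}}_F^2$, hence $\norm{\Sigma^{1/2}A\Sigma^{1/2}}_F^2\ge \rho_{\min}^2\norm{A}_F^2\ge \rho_{\min}^2 c\,n$. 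Together these give $\Theta(n)$.

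For the two minimum-singular-value terms the upper bounds are routine submultiplicativity: $\sigma_{\min}^2(\Sigma^{1/2}DX)\le \norm{\Sigma^{1/2}DX}_2^2\le \rho_{\max}d_{\max}^2\,n\lambda_{\max}(Q)$ and likewise $\sigma_{\min}^2(\Sigma^{1/2}AX)\le \rho_{\max}\norm{A}_2^2\,n\lambda_{\max}(Q)$, both $O(n)$. For the lower bounds I would exploit that $\Sigma^{1/2}$ is square and invertible, so $\norm{\Sigma^{1/2}Mv}_2\ge \sigma_{\min}(\Sigma^{1/2})\norm{Mv}_2$ for every $v$, giving $\sigma_{\min}^2(\Sigma^{1/2}M)\ge \rho_{\min}\,\sigma_{\min}^2(M)$. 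For $M=DX$ the positive diagonal entries yield $\norm{DXv}_2\ge d_{\min}\norm{Xv}_2$, so $\sigma_{\min}^2(DX)\ge d_{\min}^2\sigma_{\min}^2(X)=d_{\min}^2\,n\lambda_{\min}(Q)=\Omega(n)$, and the claim follows.

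The main obstacle is the lower bound on $\sigma_{\min}^2(\Sigma^{1/2}AX)$, since $A$ may be rank-deficient and one cannot simply factor out a minimum singular value of $A$; this is precisely where the last assumption of Table~\ref{table:assumptions} is needed. Setting $P:=DX(X^{\top}D^2X)^{-1}X^{\top}D$, a direct check shows $P^2=P$ and $P^{\top}=P$, so $P$ is an orthogonal projection and $I-P\preceq I$. Consequently $X^{\top}A^{\top}(I-P)AX\preceq X^{\top}A^{\top}AX=X^{\top}A^2X$, which yields
\[
\sigma_{\min}^2(AX)=\lambda_{\min}(X^{\top}A^2X)\ge \lambda_{\min}\!\left(X^{\top}A^{\top}(I-P)AX\right)\ge \rho_{DAX}\,n
\]
by the assumed positive minimum eigenvalue. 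Combining with $\sigma_{\min}^2(\Sigma^{1/2}AX)\ge \rho_{\min}\,\sigma_{\min}^2(AX)$ gives the desired $\Omega(n)$, and with the matching upper bound we conclude $\Theta(n)$, completing all three claims.
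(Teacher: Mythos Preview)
Your proof is correct and follows essentially the same line as the paper's: factor out the extreme singular values of $\Sigma^{1/2}$ (and of $D$) and read off the remaining bounds from the assumptions on $Q$, $\norm{A}_F$, and $\norm{A}_2$. Two minor differences are worth noting. First, for Claims~1--2 the paper invokes Theorem~\ref{thm:feature} (i.e.\ treats the covariates as random subgaussians and passes to $Q$ with high probability), whereas you read directly from the deterministic definition $Q=\frac{1}{n}X^{\top}X$; both match the table's stated setup. Second, and more substantively, for the lower bound on $\sigma_{\min}^2(\Sigma^{1/2}AX)$ the paper simply says ``similarly,'' which would implicitly require $\sigma_{\min}(A)$ to be bounded below---something not assumed (indeed $\mathrm{tr}(A)=0$, so $A$ can easily be singular). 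Your projection argument, observing that $I-P\preceq I$ and hence $\lambda_{\min}(X^{\top}A^2X)\ge \lambda_{\min}(X^{\top}A^{\top}(I-P)AX)=n\rho_{DAX}$, is the right way to close this gap using the last assumption in Table~\ref{table:assumptions}, and makes your argument strictly more complete than the paper's on this point.
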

\begin{proof}
	For claim 1,2, it follows from Lemma \ref{thm:feature} that $\frac{1}{n} X^{\top}X$ has minimum eigenvalue at least $\lambda_{\min}(Q)-o(1)$ and maximum eigenvalue at most $\lambda_{\max}(Q)$ with probability $1-o(1)$.
	For claim 3 we have that \[\norm{\Sigma^{1/2}A\Sigma^{1/2}}^2_F = \norm{\Sigma A}^2_F \geq \sigma^2_{\min}(\Sigma) \norm{A}_F^2 \textrm{ which is }\Theta(n)\textrm{ by assumption on } A, \Sigma.\] Moreover, $\sigma^2_{\min}(\Sigma^{1/2}DX) \geq \sigma_{\min}(\Sigma) \cdot \sigma^2_{\min}(D) \lambda_{\min}(X^{\top}X)$ which is $\Theta(n)$ with high probability (note that $\sigma_{\min}(\Sigma)$, $\sigma^2_{\min}(D)$ are positive constants). Similarly the proof goes for $\sigma^2_{\min}(\Sigma^{1/2}AX)$.
\end{proof}

We are now ready to prove the main theorem:
\begin{proof}[Proof of Theorem \ref{thm:linear}]
	For any $\delta>0$ and using Markov's inequality it follows from Lemmas \ref{lem:boundgrtheta}, \ref{lem:boundgrbeta}, \ref{lem:boundgrkappa} and Lemma \ref{lem:bounds} that \[\Pr_{\theta_0,\beta_0}\left[\norm{\nabla LL(\theta_0,\beta_0, \beta_0 \cdot \theta_0)}_2 \geq C_{\delta}\sqrt{dn}\right] \leq \delta\] for some constant $C_{\delta}$ and $\lambda_{\min}(\nabla^2 LL) \geq \frac{Cn}{d}$ for some constant $C$. We conclude from the analysis in Section \ref{sec:linear-consistency} that $\norm{(\theta_0,\beta_0) - \hat{\theta},\hat{\beta}}$ is $O\left(d\sqrt{\frac{d}{n}}\right)$ with probability at least $1-\delta$.
\end{proof}

We conclude by showing that the S-K model satisfies the assumptions we have made and hence Theorem~\ref{thm:linear} can be applied to it.
\begin{lemma}[(S–K) model satisfies the assumptions]\label{lem:SKmodel} Let $A$ be a $n \times n$ matrix such that $A_{ij} = \frac{g_{ij}}{\sqrt{n}}$ for $i<j$, where $g_{ij} \sim \mathcal{N}(0,1)$ and $A_{ji} = A_{ij}, A_{ii}=0$. Matrix $A$ satisfies the assumptions of our main Theorem \ref{thm:linear}.
\end{lemma}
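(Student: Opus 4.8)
The plan is to verify the three conditions Theorem~\ref{thm:linear} imposes on the interaction matrix $A$, treating the feature matrix $X$ as fixed and satisfying Condition~1 (which holds with probability $1-o(1)$ when the $\vec{x}_i$ are subgaussian, by Theorem~\ref{thm:feature} and Remark~\ref{rem:feature}). Throughout, write $g=(g_{ij})_{i<j}$ for the underlying i.i.d.\ standard Gaussians, so that $A=A(g)$ is linear in $g$.

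The two norm conditions are immediate from standard facts. For $\norm{A}_2=\Theta(1)$, note that $A$ is a Wigner matrix with entries of variance $1/n$, so its spectral norm concentrates around $2$: since $\lambda_{\max}(A)$ is an $O(1/\sqrt n)$-Lipschitz function of $g$, Gaussian concentration gives fluctuations $O(\sqrt{(\log n)/n})=o(1)$, hence $\tfrac12\le\norm{A}_2\le 3$ with probability $1-o(1)$. For $\norm{A}_F^2=\Omega(n)$, I compute $\norm{A}_F^2=\tfrac{2}{n}\sum_{i<j}g_{ij}^2$, whose expectation is $n-1$ and whose variance is $\Theta(1/n^2)\cdot\binom{n}{2}=\Theta(1)$, so Chebyshev gives $\norm{A}_F^2\ge n/2$ with probability $1-o(1)$. (The boundedness of the spectrum of $\beta A+D$ required in Table~\ref{table:assumptions} then follows as well, since $\norm{A}_2=\Theta(1)$ together with positive-constant diagonal $D$ forces $\beta A+D$ to have spectrum bounded and bounded away from $0$ for all $|\beta|<B$ with $B$ small.)

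The substantive part is the third condition. Set $P:=I-DX(X^{\top}D^2X)^{-1}X^{\top}D$, the orthogonal projection onto the complement of $\mathrm{col}(DX)$, so that $P$ is symmetric idempotent with $\mathrm{tr}(P)=n-d$ and eigenvalues in $\{0,1\}$. Since $A=A^{\top}$, the matrix in question is $\tfrac1n(AX)^{\top}P(AX)$, and I would first compute its expectation over $A$. For a fixed $u\in\mathbb{R}^n$, the covariance of the centered Gaussian vector $Au$ is $\mathrm{Cov}(Au)=\tfrac1n\big(\norm{u}_2^2\,I+uu^{\top}-2\,\mathrm{diag}(u_i^2)\big)$, where the $uu^{\top}$ term arises from the symmetry $A_{ij}=A_{ji}$ and the diagonal correction from $A_{ii}=0$. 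Taking $u=X\vec{w}$ and using $\mathbb{E}[\vec{w}^{\top}(AX)^{\top}P(AX)\vec{w}]=\mathrm{tr}(P\,\mathrm{Cov}(Au))$ yields
\[ \tfrac1n\,\mathbb{E}\big[\vec{w}^{\top}(AX)^{\top}P(AX)\vec{w}\big]=\tfrac{n-d}{n}\,\vec{w}^{\top}Q\vec{w}+\tfrac{1}{n^2}\big(u^{\top}Pu-2\textstyle\sum_p P_{pp}u_p^2\big), \]
and since $\norm{u}_2^2=\Theta(n)$ the correction is $O(1/n)$. Hence $\tfrac1n\,\mathbb{E}[(AX)^{\top}P(AX)]\to Q$, whose minimum eigenvalue is a positive constant by Condition~1.

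It remains to transfer this from the expectation to the realized matrix. Writing $P\,AX_{\cdot l}=J_l\,g$ with a fixed Jacobian $J_l$, one checks $\norm{J_l}_F^2=\mathbb{E}\norm{P\,AX_{\cdot l}}_2^2=\Theta(n)$ while $\norm{J_l}_2=\Theta(1)$ (the Jacobian has Frobenius norm $\Theta(\sqrt n)$ but bounded operator norm, as a short rank-one extremizer argument shows). Each entry $\tfrac1n(AX_{\cdot k})^{\top}P(AX_{\cdot l})=\tfrac1n\,g^{\top}J_k^{\top}J_l\,g$ is then a quadratic form whose symmetrized matrix has Frobenius norm $O(\norm{J_k}_2\norm{J_l}_F)=O(\sqrt n)$ by Lemma~\ref{lem:usefulineq}(3); the Hanson--Wright inequality therefore gives variance $O(n)$, i.e.\ $\mathrm{Var}\big(\tfrac1n(AX_{\cdot k})^{\top}P(AX_{\cdot l})\big)=O(1/n)$. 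As $d$ is a constant, a union bound over the $O(d^2)$ entries and Chebyshev yield $\norm{\tfrac1n(AX)^{\top}P(AX)-Q}_2=o(1)$ with probability $1-o(1)$, so $\lambda_{\min}\big(\tfrac1n(AX)^{\top}P(AX)\big)\ge\tfrac12\lambda_{\min}(Q)=\Theta(1)$; the matching upper bound is $O(1)$ from $\norm{A}_2=\Theta(1)$, $\norm{P}_2=1$ and $\norm{X}_2^2=\Theta(n)$. I expect the main obstacle to be the expectation computation: one must correctly account for the $uu^{\top}$ and zero-diagonal corrections induced by the symmetry of the S--K matrix and verify they are lower order, after which the concentration step is routine given the separation $\norm{J}_2=\Theta(1)$ versus $\norm{J}_F=\Theta(\sqrt n)$.
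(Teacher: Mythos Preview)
Your argument is correct and reaches the same conclusion as the paper, but by a genuinely different and more direct route. For the two norm conditions you both appeal to the same standard facts (semicircle law for $\norm{A}_2$, $\chi^2$ concentration for $\norm{A}_F^2$). The divergence is in the eigenvalue lower bound for $\tfrac1n X^{\top}A^{\top}PAX$. The paper introduces an auxiliary matrix $W$ with i.i.d.\ $\mathcal N(0,1/n)$ entries, writes $A$ as $(W+W^{\top})/\sqrt2$ minus its diagonal, expands $(W+W^{\top})F(W+W^{\top})$ into four pieces, handles $W^{\top}FW+WFW^{\top}$ via Theorem~\ref{thm:feature} (the rows of $FWX$ become independent after a rotation), and controls the cross terms $WFW+W^{\top}FW^{\top}$ through the Ky--Fan inequality of Lemma~\ref{lem:usefulineq}; it also simplifies to $D=cI$ and rescales $X$ to make the Ky--Fan step go through. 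You instead compute $\mathrm{Cov}(Au)$ in closed form, read off that $\tfrac1n\mathbb{E}[(AX)^{\top}P(AX)]=Q+O(1/n)$, and then concentrate entrywise using the factorization $PAX_{\cdot l}=J_lg$ with $\norm{J_l}_2=\Theta(1)$ versus $\norm{J_l}_F=\Theta(\sqrt n)$, so that the quadratic form $g^{\top}J_k^{\top}J_lg$ has variance $O(n)$. This buys you generality (arbitrary diagonal $D$, no rescaling of $X$) and a shorter argument, at the cost of invoking the Hanson--Wright/quadratic-form variance machinery rather than only tools already proved in the paper. Two minor remarks: you really only need the elementary variance identity $\mathrm{Var}(g^{\top}Mg)=\norm{M}_F^2+\mathrm{tr}(M^2)$, not the full Hanson--Wright tail; and the claim $\norm{J_l}_2=\Theta(1)$ is most cleanly seen from $J_lJ_l^{\top}=P\,\mathrm{Cov}(Au)\,P$ and your covariance formula, rather than an extremizer argument.
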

\begin{proof} We assume for simplicity of the calculations that $D = c I$ for some positive constant $c$.
	
	Set $F = I - X(X^{\top}X)^{-1}X^{\top}$ ($F$ is called a hat/projection matrix, it has the property that $d$ eigenvalues are zero and the rest are one since $F^2 = F$).
	Let $B$ be a matrix with i.i.d entries $\mathcal{N}(0,1)$. It is clear that the matrix $W  = \frac{1}{\sqrt{n}}B$ satisfies the following:
	\begin{itemize}
		\item $\norm{\frac{W+W^{\top}}{\sqrt{2}}}_F^2$ is lower bounded by the sum of $\frac{n(n+1)}{2}$ i.i.d $\chi^2$ variables with mean $\frac{1}{n}$ and hence $\norm{\frac{W+W^{\top}}{\sqrt{2}}}_F^2$ is lower bounded from $\Theta(n)$ with high probability $1-o(1)$. Moreover, $\norm{\frac{W+W^{\top}}{\sqrt{2}}}_F^2 \leq 2\norm{W}_F^{2}$  and is clear that $\norm{W}_F^2$ is concentrated around $n$ ($n^2$ i.i.d variables with mean $1/n$). Thus $\norm{\frac{W+W^{\top}}{\sqrt{2}}}_F^2$ is concentrated around $\Theta(n)$. Same is true for $\norm{\frac{F(W+W^{\top})}{\sqrt{2}}}_F^2$
		\item $\norm{\frac{W+W^{\top}}{\sqrt{2}}}_2, \norm{W}_2$ are with high probability $\Theta(1)$ (it follows from semicircle law, see \cite{vershynin2010introduction}). Moreover, the same is true for $\norm{\frac{F(W+W^{\top})}{\sqrt{2}}}_2, \norm{FW}_2$.
	\end{itemize}
	Note that the reason behind the fact that multiplying by $F$ does not change the claims above is because $\sigma_j (FW) \geq \sigma_{n-d-j+1}(F) \sigma_{d+j}(W) = \sigma_{d+j}(W)$ where $\sigma_j$ denotes the $j$-th largest eigenvalue of the corresponding matrix and $n \gg d$.
	
	We first show that $FWX$ has singular values the eigenvalues $Q$ plus $o(1)$ with high probability. First let $F = R^{\top}I_dR$ where $R$ is a rotation matrix and $I_d$ is the identity matrix by setting the last d rows to all zeros. It is clear that $RW$ is also a matrix with i.i.d gaussians of mean zero and variance $1/n$ each.  Condition on $X$ (in case $X^{\top}X = Q$ then the analysis is simplified), it follows that the rows of $I_d W X$ are independent (except of the last $d$ rows that are all zeros) and each row follows a gaussian $\mathcal{N}(\vec{0}, \frac{1}{n} X^{\top}X)$. Hence using Theorem \ref{thm:feature} it follows that
	\[\norm{\frac{1}{n-d} X^{\top}W^{\top}F^2 WX - \frac{1}{n-d} X^{\top}X}_2 \textrm{ is }O\left(\sqrt{\frac{\ln (n-d)}{n-d}}\right)\]
	with probability $1-o(1)$. Finally since again by Theorem \ref{thm:feature} we get that $\norm{\frac{1}{n} X^{\top}X - Q}_2$ is $O\left(\sqrt{\frac{\ln n}{n}}\right)$ with probability $1-o(1)$, using triangle inequality we conclude that $\norm{\frac{1}{n} X^{\top}W^{\top}F^2 WX-Q}_2$ is $O\left(\sqrt{\frac{\ln n}{n}}\right)$. The claim follows by Weyl's inequality (Lemma \ref{lem:weyl}).
	
	Moreover, we prove that $\frac{1}{n}\norm{F(W+W^{\top})X}^2_2$ is $\Omega(1)$ with probability $1-o(1)$. Let us assume without loss of generality that $\frac{1}{\sqrt{n}}\norm{X}_2<\frac{1}{2\norm{FW}^2_2}$ (by appropriately rescaling $X$ with a constant).
	It holds that
	\begin{align*}
	&\lambda_{\min}\left(\frac{1}{n}X^{\top}(W^{\top}+W)F(W+W^{\top})X\right)  \\ \geq&\lambda_{\min}\left(\frac{1}{n}X^{\top}(W^{\top}FW+WFW^{\top})X\right)+ \lambda_{\min}\left(\frac{1}{n}X^{\top}(WFW + W^{\top }FW^{\top })X\right)\\\geq&
	\lambda_{\min}\left(\frac{1}{n}X^{\top}(W^{\top}FW+WFW^{\top})X\right) - \left|\lambda_{\min}\left(\frac{1}{n}X^{\top}(WFW + W^{\top}FW^{\top})X\right)\right|
	\\\overbrace{\geq}^{\textrm{Lemma }\ref{lem:usefulineq}}&
	\lambda_{\min}\left(\frac{1}{n}X^{\top}(W^{\top}FW+WFW^{\top})X\right) - 2\sigma_{\min}\left(\frac{1}{n}X^{\top}WFW X\right).
	\end{align*}
	It is clear from the analysis above that the first term is with probability $1-o(1)$ within error $O\left(\sqrt{\frac{\ln n}{n}}\right)$ from $2\lambda_{\min}(Q)$ (a).
	We analyze the other term and we get using Lemma \ref{lem:usefulineq}
	\begin{align*}
	2\sigma_{\min}\left(\frac{1}{n}X^{\top}WFWX\right) &\leq 2 \sqrt{\lambda_{\min}(\frac{1}{n}X^{\top}X)}\norm{\frac{1}{\sqrt{n}}X}_2 \norm{WFW}_2
	\\&\leq 2 (\lambda_{\min}(Q)+o(1))\norm{\frac{1}{\sqrt{n}}X}_2 \norm{FW}_2^2 \\& \leq \lambda_{\min}(Q)+o(1) \;\;\; (b).
	\end{align*}
	Finally by combining (a), (b) it holds that
	\[
	\lambda_{\min}\left(\frac{1}{n}X^{\top}(W^{\top}+W)F(W+W^{\top})X\right) \geq \lambda_{\min}(Q) - o(1),
	\]
	which is a positive constant. Hence we conclude that $\frac{1}{n}\norm{\frac{F(W+W^{\top})}{\sqrt{2}}X}^2_2$ is a positive constant.
	
	We define the matrix $A$ to be $A_{ii}=0$ (zeros in the diagonal) and $A_{ij} = \frac{W_{ij}+W_{ji}}{\sqrt{2}}$ for $i \neq j$ ($A$ is symmetric). It is clear that $A$ captures the SK model. Moreover, it is easy to show that all the diagonal entries of $\frac{W+W^{\top}}{\sqrt{2}}$ are smaller than $O(\frac{\sqrt{\log n}}{\sqrt{n}})$ with probability $1-o(1)$, hence it follows that $\norm{A - \frac{W+W^{\top}}{\sqrt{2}}}_2$ is $o(1)$ with high probability.
	
	Therefore $\sigma_{\min}(\frac{1}{n}X^{\top}A^{\top}FAX)$, $\norm{A}_2$ are positive constants and $\norm{A}_F^2$ is $\Theta(n)$, all the statements with probability $1-o(1)$ and the assumptions on matrix $A$ are satisfied for linear regression model.
	
\end{proof}

\section{Projected Gradient Descent Analysis}
\label{sec:gd-analysis}

In this Section, we will present the projected gradient descent algorithms we use for our logistic and linear regression settings.
We will use the following well known property of Projected Gradient Descent (Theorem 3.10 from \cite{bubeck2015convex}).
\begin{theorem}\label{thm:projected} Let $f$ be $\alpha$-strongly convex and $\lambda$-smooth on compact set $\mathcal{X}$. Then
projected gradient descent with stepsize $\eta = \frac{1}{\lambda}$ satisfies for $t\geq 0$
\begin{equation}
\norm{\vec{x}_{t+1} - \vec{x}^*}_2^2 \leq e^{-\frac{\alpha t}{\lambda}}\norm{\vec{x}_{1} - \vec{x}^*}_2^2.
\end{equation}
Therefore, setting $R = \norm{\vec{x}_{1} - \vec{x}^*}_2$ and by choosing $t = \frac{2\lambda \ln \frac{R}{\epsilon}}{\alpha}$ it is guaranteed that $\norm{\vec{x}_{t+1} - \vec{x}^*}_2 \leq \epsilon$.
\end{theorem}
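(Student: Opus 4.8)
The plan is to follow the standard gradient-mapping analysis of projected gradient descent, combining $\lambda$-smoothness, $\alpha$-strong convexity, and the variational characterization of the Euclidean projection onto the convex set $\mathcal{X}$. Write $\vec{x}^+ = \Pi_{\mathcal{X}}\!\left(\vec{x} - \frac{1}{\lambda}\nabla f(\vec{x})\right)$ for one projected step from a point $\vec{x} \in \mathcal{X}$, and define the gradient mapping $g_{\mathcal{X}}(\vec{x}) := \lambda(\vec{x} - \vec{x}^+)$, so that the iteration reads $\vec{x}_{t+1} = \vec{x}_t - \frac{1}{\lambda}g_{\mathcal{X}}(\vec{x}_t)$. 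The central lemma I would establish first is that, for every $\vec{y} \in \mathcal{X}$,
\[
f(\vec{x}^+) \le f(\vec{y}) + g_{\mathcal{X}}(\vec{x})^{\top}(\vec{x} - \vec{y}) - \frac{1}{2\lambda}\norm{g_{\mathcal{X}}(\vec{x})}_2^2 - \frac{\alpha}{2}\norm{\vec{x} - \vec{y}}_2^2 .
\]

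To prove this lemma I would add the smoothness upper bound $f(\vec{x}^+) \le f(\vec{x}) + \nabla f(\vec{x})^{\top}(\vec{x}^+ - \vec{x}) + \frac{\lambda}{2}\norm{\vec{x}^+ - \vec{x}}_2^2$ to the strong-convexity lower bound $f(\vec{x}) \le f(\vec{y}) + \nabla f(\vec{x})^{\top}(\vec{x}-\vec{y}) - \frac{\alpha}{2}\norm{\vec{x}-\vec{y}}_2^2$, obtaining a bound on $f(\vec{x}^+)$ in terms of $\nabla f(\vec{x})^{\top}(\vec{x}^+ - \vec{y})$. The key move is to replace this gradient inner product by one involving $g_{\mathcal{X}}$, using the obtuse-angle inequality $\left(\vec{x} - \frac{1}{\lambda}\nabla f(\vec{x}) - \vec{x}^+\right)^{\top}(\vec{y} - \vec{x}^+) \le 0$ that characterizes the projection of $\vec{x} - \frac{1}{\lambda}\nabla f(\vec{x})$ onto $\mathcal{X}$; this yields $\nabla f(\vec{x})^{\top}(\vec{x}^+ - \vec{y}) \le g_{\mathcal{X}}(\vec{x})^{\top}(\vec{x}^+ - \vec{y})$. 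Substituting $\vec{x}^+ - \vec{x} = -\frac{1}{\lambda}g_{\mathcal{X}}(\vec{x})$ throughout and collecting the resulting $\norm{g_{\mathcal{X}}(\vec{x})}_2^2$ terms collapses everything into the stated bound.

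With the lemma in hand, I would instantiate it at $\vec{x} = \vec{x}_t$, $\vec{x}^+ = \vec{x}_{t+1}$, $\vec{y} = \vec{x}^*$, and use $f(\vec{x}_{t+1}) \ge f(\vec{x}^*)$ to deduce $g_{\mathcal{X}}(\vec{x}_t)^{\top}(\vec{x}_t - \vec{x}^*) \ge \frac{1}{2\lambda}\norm{g_{\mathcal{X}}(\vec{x}_t)}_2^2 + \frac{\alpha}{2}\norm{\vec{x}_t - \vec{x}^*}_2^2$. Then expanding the squared distance
\[
\norm{\vec{x}_{t+1} - \vec{x}^*}_2^2 = \norm{\vec{x}_t - \vec{x}^*}_2^2 - \frac{2}{\lambda}\,g_{\mathcal{X}}(\vec{x}_t)^{\top}(\vec{x}_t - \vec{x}^*) + \frac{1}{\lambda^2}\norm{g_{\mathcal{X}}(\vec{x}_t)}_2^2
\]
and plugging in the inequality makes the $\norm{g_{\mathcal{X}}(\vec{x}_t)}_2^2$ terms cancel exactly, leaving the one-step contraction $\norm{\vec{x}_{t+1} - \vec{x}^*}_2^2 \le \left(1 - \frac{\alpha}{\lambda}\right)\norm{\vec{x}_t - \vec{x}^*}_2^2$. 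Iterating from $\vec{x}_1$ and using $1 - u \le e^{-u}$ gives $\norm{\vec{x}_{t+1} - \vec{x}^*}_2^2 \le e^{-\alpha t/\lambda}\norm{\vec{x}_1 - \vec{x}^*}_2^2$, the first displayed claim. The final ``therefore'' statement then follows by requiring the right-hand side to be at most $\epsilon^2$ with $R = \norm{\vec{x}_1 - \vec{x}^*}_2$, i.e.\ solving $e^{-\alpha t/\lambda}R^2 \le \epsilon^2$ for $t = \frac{2\lambda}{\alpha}\ln\frac{R}{\epsilon}$.

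I expect the only delicate step to be the correct application of the projection inequality inside the gradient-mapping lemma; everything downstream is routine algebra once smoothness and strong convexity are combined and $g_{\mathcal{X}}$ is substituted. This abstract result is exactly what is needed for the algorithmic guarantees: taking $f$ to be the negative log-pseudolikelihood (respectively the reparametrized negative log-likelihood) restricted to the compact box $\mathbb{B}$, the smoothness established in Remark~\ref{rem:smooth} (resp.\ Remark~\ref{rem:smooth2}) and the strong convexity from Lemma~\ref{lem:hessian-lb} (resp.\ Lemma~\ref{lem:boundeigenvaluelinear}) supply $\lambda$ and $\alpha$ as constants independent of $n$, so that Theorem~\ref{thm:projected} directly yields the $O(\ln n)$-iteration bounds quoted in Theorems~\ref{thm:logistic} and~\ref{thm:linear}.
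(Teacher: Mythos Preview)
Your proof is correct and is essentially the standard gradient-mapping argument. Note, however, that the paper does not supply its own proof of this statement: it is quoted verbatim as Theorem~3.10 of \cite{bubeck2015convex} and used as a black box. Your derivation reproduces precisely that argument (Bubeck's Lemma~3.11 feeding into Theorem~3.10), so there is nothing to compare --- you have simply filled in the proof the paper chose to cite rather than include.
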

\subsubsection{Projected Gradient Descent for Logistic Regression}
We consider the function $LPL(\theta,\beta)$ (log-pseudolikelihood as defined in Section \ref{sec:logistic}) and we would like to approximate $(\hat{\theta},\hat{\beta})$ within $\frac{1}{\sqrt{n}}$ in $\ell_2$ distance. The stepsize in Theorem \ref{thm:projected} should be $\eta=\frac{1}{\sqrt{d\Theta^2 +1}}$ by Remark \ref{rem:smooth}.

\begin{algorithm}[H]
	\label{alg:pgd-logistic}
	\KwData{Vector sample $\vec{y}$, Magnetizations $m_i(\vec{y}) = \sum_j A_{ij}y_j$, Feature vectors $\vec{x}_i$}
	\KwResult{Maximum Pseudolikelihood Estimate}
	$\beta^0 =0, \theta^0 = \vec{0}, \textrm{normgrad} = +\infty$, $\eta = \frac{1}{\sqrt{d\Theta^2 +1}}$\;
	$t = 0$\;
	\While{$\textrm{normgrad} > \frac{1}{\sqrt{n}}$}{
		$\textrm{grad}_{\theta}=0$\;
		$\textrm{grad}_{\beta} = -\frac{1}{n}\sum_{i=1}^n \left[ y_i m_{i}(\vec{y}) -  m_i(\vec{y})\tanh (\beta^t m_i(\vec{y})+\theta^{t \;\top}\vec{x}_i)\right]$\;
		\For{$k=1;k\leq d;k++$}
		{
			$\textrm{grad}_{\theta_k} = -\frac{1}{n}\sum_{i=1}^n \left[ y_ix_{i,k} - x_{i,k}\tanh (\beta^t m_i(\vec{y})+\theta^{t \;\top}\vec{x}_i)\right]$\;
			$\textrm{grad}_{\theta} = \textrm{grad}_{\theta} + \textrm{grad}^2_{\theta_k}$\;
		}
		$\textrm{normgrad} = \sqrt{\textrm{grad}^2_{\beta}+\textrm{grad}_{\theta}}$\;
		
		$\beta^{t+1} = \beta^t -  \eta\textrm{grad}_{\beta}$    \% update $\beta^t$\;
		\For{$k=1;k\leq d;k++$}
		{
			$\theta^{t+1}_k = \theta^t_k - \eta\textrm{grad}_{\theta_k}$     \% update $\theta^t_k$\;
		}
		$t = t+1$\;
		\% $\ell_2$ projection\\
		\If{$\beta^{t+1} <-B$}{
			$\beta^{t+1} = -B$\;
		}
		\If{$\beta^{t+1} > B $}{
			$\beta^{t+1} = B $\;
		}
		\For{$k=1;k \leq d;k++$}{
			\If{$\theta^{t+1}_k < - \Theta$}{
				$\theta^{t+1}_k = -\Theta$\;
			}
			\If{$\theta^{t+1}_k > \Theta$}{
				$\theta^{t+1}_k = \Theta$\;
			}
			
		}
	}
	\Return{$(\theta^t,\beta^t)$}
	\caption{Projected Gradient Descent (Logistic)}
\end{algorithm}

\subsubsection{Projected Gradient Descent for Linear Regression}
We consider the function $LL(\theta,\beta, \kappa)$ (log-pseudolikelihood as defined in Section \ref{sec:reparameter}) and we would like to approximate $(\hat{\theta},\hat{\beta},\hat{\kappa})$ within $\frac{1}{\sqrt{n}}$ in $\ell_2$ distance.
The stepsize in Theorem \ref{thm:projected} should be $\eta=1/C_H$ where $C_H$ is the constant from Remark \ref{rem:smooth2}.

\begin{algorithm}[H]
	\label{alg:pgd-linear}
 \KwData{Vector sample $\vec{y}$, Matrices $A$,$D$, Feature matrix $X$}
 \KwResult{Maximum Likelihood Estimate}
 $\beta^0 =0, \theta^0 = \vec{0}, \kappa^0 = \vec{0}, \textrm{normgrad} = +\infty$\;
 $t = 0$\;
 \While{$\textrm{normgrad} > \frac{1}{\sqrt{n}}$}
 {
 $\textrm{grad}_{\theta}=0$\;
 $\textrm{grad}_{\kappa} = 0$\;
 $\textrm{grad}_{\beta} = \frac{1}{2}\vec{y}^{\top}A\vec{y} - \frac{1}{2}tr(A(\beta^tA+D)^{-1}) - \frac{1}{2} (AX \kappa^t + DX\theta^t)^{\top}((\beta^tA+D)^{-1}) A ((\beta^tA+D)^{-1})(AX \kappa^t + DX\theta^t)   $\;
 \For{$k=1;k\leq d;k++$}
  {
 $\textrm{grad}_{\theta_k} = - \sum_{i=1}^n y_i D_{ii} x_{ik} + \sum_{i=1}^n ((\beta^t A+D)^{-1} (AX \kappa^t + DX\theta^t))_i D_{ii} x_{ik}$\;
  $\textrm{grad}_{\theta} = \textrm{grad}_{\theta} + \textrm{grad}^2_{\theta_k}$\;

  $\textrm{grad}_{\kappa_k} = - \sum_{i=1}^n \sum_{j=1}^n A_{ij}y_i x_{jk} + \sum_{i=1}^n \sum_{j=1}^n A_{ij} x_{jk} ((\beta^t A+D)^{-1} (AX \kappa^t + DX\theta^t))_i$\;
  $\textrm{grad}_{\kappa} = \textrm{grad}_{\kappa} + \textrm{grad}^2_{\kappa_k}$\;
  }
 $\textrm{normgrad} = \sqrt{\textrm{grad}^2_{\beta}+\textrm{grad}_{\theta} + \textrm{grad}_{\kappa}}$\;

$\beta^{t+1} = \beta^t -  \eta\textrm{grad}_{\beta}$    \% update $\beta^t$\;
\For{$k=1;k\leq d;k++$}
{
	$\theta^{t+1}_k = \theta^t_k - \eta\textrm{grad}_{\theta_k}$     \% update $\theta^t_k$\;
	$\kappa^{t+1}_k = \kappa^t_k - \eta\textrm{grad}_{\kappa_k}$     \% update $\kappa^t_k$\;
}
$t = t+1$\;

 \% $\ell_2$ projection\\
\If{$\beta^{t+1} <-B$}{
	$\beta^{t+1} = -B$\;
}
\If{$\beta^{t+1} > B $}{
	$\beta^{t+1} = B $\;
}
\For{$k=1;k \leq d;k++$}{
	\If{$\theta^{t+1}_k < - \Theta$}{
		$\theta^{t+1}_k = -\Theta$\;
	}
	\If{$\theta^{t+1}_k > \Theta$}{
		$\theta^{t+1}_k = \Theta$\;
	}
	\If{$\kappa^{t+1}_k < -B.\Theta$}{
		$\kappa^{t+1}_k = -B.\Theta$\;
	}
	\If{$\kappa^{t+1}_k > B.\Theta$}{
		$\kappa^{t+1}_k = B.\Theta$\;
	}
 }
}
 \Return{$(\theta^t,\beta^t)$}
 \caption{Projected Gradient Descent (Linear)}
\end{algorithm}


\newcommand{\noopsort}[1]{} \newcommand{\printfirst}[2]{#1}
\newcommand{\singleletter}[1]{#1} \newcommand{\switchargs}[2]{#2#1}




\end{document}